\RequirePackage{etoolbox}

\PassOptionsToPackage{hypertexnames=false}{hyperref}
\documentclass[12pt,final]{colt2021}  %
\usepackage{amsthm}

\newcommand{\thetafinal}{\theta^+}
 
 \newcommand{\simb}{\beta_{\textrm{sim}}}  %
 \newcommand{\sol}{\text{Sol}}
 \newcommand{\concat}[1]{\overline{#1}}
 \newcommand{\ld}{\overline{\Delta}}
 \newcommand{\hd}{\hat{\Delta}}
 \newcommand{\taufinal}{\tau^+}
 \newcommand{\tpp}{\pi_{\text{TP}}}
 \newcommand{\tensorplan}{{\sc TensorPlan}\xspace}
 
 \newcommand{\approxmeasure}{{\sc ApproxTD}\xspace}
 \newcommand{\dime}{\text{dim}_E}
 
 \newcommand{\Probab}{\mathbb{P}}
 
 \newcommand{\ordot}{\tilde{\mathcal{O}}}
 
\newcommand{\bfitDelta}{X}
\newcommand{\effectivehorizon}{{H_{\gamma,\delta}}}
\newcommand{\effectivehorizonshort}{H}
\newcommand{\flattheta}{{\bar{\theta}}}
 \usepackage{algpseudocode}
\usepackage{algorithmicx}
\usepackage{varwidth}

\algnewcommand{\algorithmicgoto}{\textbf{goto}}%
\algnewcommand{\Goto}[1]{\algorithmicgoto~\ref{#1}}%
\algnewcommand{\Break}{\textbf{break}}%

\algnewcommand{\Initialize}[1]{%
  \State \textbf{Initialize:}
  \Statex \hspace*{\algorithmicindent}\parbox[t]{.8\linewidth}{\raggedright #1}
}
\algnewcommand{\Inputs}[1]{%
  \State \textbf{Inputs:}
  \Statex \hspace*{\algorithmicindent}\parbox[t]{.8\linewidth}{\raggedright #1}
}

\usepackage[capitalize]{cleveref}

\usepackage[utf8]{inputenc}
\usepackage[english]{babel}
\usepackage{latexsym}
\usepackage{mathrsfs}
\usepackage{mathtools}
\usepackage{bm}
\usepackage{datetime}
\usepackage{tikz}
\usepackage{listings}
\usepackage{mdframed}
\usepackage{pgfplots}
\usepackage{pgfplotstable}
\usepackage{dsfont}
\usepackage{color}
\usepackage{colortbl}
\usepackage{pifont}
\usepackage[bf,font=small,singlelinecheck=off]{caption}

\usepackage{microtype} %
\usepackage{float}
\usepackage{xfrac} %
\usepackage{xspace}

\setlength{\marginparwidth}{17ex}

\renewcommand{\phi}{\varphi}

\usepackage{nicefrac}
\usepackage{wrapfig}

\usepackage[normalem]{ulem}

\newcommand{\X}{\mathcal{X}}

\newcommand{\sX}{\Sigma}

\DeclareMathOperator{\Dists}{\mathcal{M}_1}

\newcommand{\bN}{\mathbb{N}}
\newcommand{\bR}{\mathbb{R}}

\DeclareMathOperator{\poly}{poly}

\definecolor{emerald}{rgb}{0.31, 0.78, 0.47}

\usepackage{enumerate}
\newtheorem{theorem}{Theorem}[section]

 \newtheorem{corollary}[theorem]{Corollary}
 \newtheorem{claim}[theorem]{Claim}
 \newtheorem{definition}[theorem]{Definition}
\newtheorem{assumption}[theorem]{Assumption}
 \newtheorem{remark}[theorem]{Remark}

\usepackage{aliascnt}

\usepackage{xspace}

\usepackage{enumerate}

\newcommand{\E}{\mathbb E}

\newcommand{\ip}[1]{\left\langle #1 \right\rangle}

\newcommand{\norm}[1]{\|#1\|}
\newcommand{\R}{\mathbb{R}}
\newcommand{\N}{\mathbb{N}}
\newcommand{\cA}{\mathcal{A}}
\newcommand{\cB}{\mathcal{B}}
\newcommand{\cBtheta}{B}

\newcommand{\cE}{\mathcal{E}}
\newcommand{\cF}{\mathcal{F}}

\newcommand{\cM}{\mathcal{M}}

\newcommand{\cO}{\mathcal{O}}

\newcommand{\cS}{\mathcal{S}}

\newcommand{\sM}{\mathscr M}

\newcommand{\one}[1]{\mathbb{I}\{#1\}}

\DeclareMathOperator{\supp}{supp}

\newcommand{\simulatesc}{{\textsc{Simulate}}}
\renewcommand{\epsilon}{\varepsilon}
\newcommand{\ceil}[1]{\left\lceil {#1} \right\rceil}
\newcommand{\floor}[1]{\left\lfloor {#1} \right\rfloor}

\DeclareMathOperator*{\argmin}{arg\ min}
\DeclareMathOperator*{\argmax}{arg\ max}

\newcommand{\bbP}{\mathbb{P}}

\DeclareMathOperator{\vectorize}{vectorize}

\newif\ifsup\suptrue

\usepackage{algorithm}
\usepackage{algpseudocode}

\pretolerance=5000
\tolerance=9000
\emergencystretch=0pt
\righthyphenmin=4
\lefthyphenmin=4

\usepackage{times}
\usepackage{newtxmath}

\usepackage{thmtools}
\usepackage{thm-restate}

\title[
Query-efficient Planning under
Linear Realizability of the Optimal Value Function
]
{
On Query-efficient Planning in MDPs under
Linear Realizability of the Optimal State-value Function
}
\coltauthor{%
 \Name{Gell\'ert Weisz}\\
 \addr{DeepMind, London, UK}\\
 \addr{University College London, London, UK}
 \AND
 \Name{Philip Amortila}\\
 \addr{University of Illinois, Urbana-Champaign, USA}
 \AND
 \Name{Barnab\'as Janzer}\\
 \addr{University of Cambridge, Cambridge, UK}
 \AND
 \Name{Yasin Abbasi-Yadkori}\\
 \addr{DeepMind, London, UK}
 \AND
 \Name{Nan Jiang}\\
 \addr{University of Illinois, Urbana-Champaign, USA}
 \AND
 \Name{Csaba Szepesv\'ari}\\
 \addr{DeepMind, London, UK}\\
 \addr{University of Alberta, Edmonton, Canada}
}

\begin{document}

    \maketitle

\begin{abstract}

We consider the problem of local planning in fixed-horizon Markov Decision Processes (MDPs)
with a generative model
under the assumption that the optimal
value function lies close to the span of a feature
map. The generative model provides a restricted, ``local'' access to the MDP: The planner can ask for random transitions from previously returned states and arbitrary actions, and the features
are also only accessible for the states that are encountered in this process.
As opposed to previous work (e.g. \cite{LaSzeGe19}) where linear realizability of \textit{all} policies was assumed, we consider the significantly relaxed assumption of a single linearly realizable (deterministic) policy.
A recent lower bound by \citet{weisz2020exponential} established that the related problem when the action-value function of the optimal policy is linearly realizable
requires an exponential number of queries, either in $H$ (the horizon of the MDP) or $d$ (the dimension of the feature mapping).
Their construction crucially relies on having an exponentially large action set.
In contrast, in this work, we establish that $\poly(H,d)$ planning \textit{is} possible with state value function realizability whenever the action set has a constant size.
In particular, we present the \tensorplan algorithm which uses $\poly((dH/\delta)^A)$
simulator queries to find a $\delta$-optimal policy relative to \emph{any} deterministic policy for which the value function is linearly realizable with some bounded parameter (with a known bound).

This is the first algorithm to give a polynomial query complexity guarantee using only linear-realizability of a single competing value function.
Whether the computation cost is similarly bounded remains an interesting open question.
We also extend the upper bound  to the near-realizable case and to the infinite-horizon discounted MDP setup.
The upper bounds are complemented by a lower bound which states that
in the infinite-horizon episodic setting,
planners that achieve constant suboptimality need exponentially many queries, either in the dimension or the number of actions.

\end{abstract}

\section{Introduction}

We are concerned with the problem of \emph{planning} in large Markov Decision Processes (MDPs) using a \emph{simulator} (or generative model),
with a query complexity---that is, the number of  calls to the simulator---independent of the size of the state space. While such a result is possible by running a Monte-Carlo tree search algorithm if we are concerned with finding out a good action with some computation every time a state is encountered,
these methods require \emph{exponential} in the planning horizon $H$ number of queries,
which, in the worst-case and without further assumptions, is unavoidable
\citep{kearns2002sparse}.

In the hope of avoiding such exponential dependence,
we consider the setting when
the simulator gives the planner
access to a feature mapping that maps states to $d$-dimensional vectors.
The idea is to use the linear combination of features with some fixed parameter vector to approximate  value functions in the MDP \citep[e.g.][]{SchSei85}.
A basic question, then, is when and how such (linear) function approximation schemes enable query-efficient learning. One minimal assumption, which we consider here, is that the optimal value function $v^\star$ can be represented as a linear function of the feature mapping and an unknown $d$-dimension parameter. Finding this $d$-dimensional coefficient would then grant access to $v^\star$,
 and choosing a near-optimal action for a given state is then possible using low-cost
 one-step lookahead planning.

Despite that the number of unknowns is substantially reduced to $d$, it is not clear at all whether this setting is tractable. In fact, in a closely related setting where the optimal \textit{state-action} value function $q^\star$ is linear, \citet{weisz2020exponential} have recently shown that the query complexity is still exponential in $\Omega(\min(H,d))$. Crucially, their construction relies on having exponentially many actions, leaving open the possibility that a small action set will enable polynomial query complexity.

In our setting, where $v^\star$ rather than $q^\star$ is realizable, finding a near-optimal action amongst $A$ actions trivially requires $\Omega(A)$ queries, even when $v^\star$ is known.
Thus, in order to enable polynomial-time learning, we consider the setting of \textit{small actions sets}. To summarize, the central question we address in this paper is the following:

\begin{center}
\emph{ Is a polynomial query complexity achievable under linear realizability of $v^\star$, when the number of actions is $A=\cO(1)$?}
\end{center}

We provide a positive result to this question in the \emph{fixed-horizon} setting, where our algorithm \tensorplan enjoys a per-call query complexity $\poly((dH/\delta)^A)$,
where %
$H$ is the horizon and $\delta$ is the suboptimality target that the policy induced by continuously running the planning algorithm at every state encountered needs to satisfy. %
Given an input state at the beginning of the horizon,
in its initialization phase, \tensorplan uses simulations to estimate the parameters of $v^\star$.
In this and subsequent calls, given an input state, the estimated $v^\star$ is used by another procedure that uses additional simulations to compute one-step lookahead action-value estimates.
We prove that the resulting policy loses at most $\delta$ total expected reward compared to optimality, regardless of the choice of the initial state, while the number of queries both for the initialization and the subsequent steps stays below the quoted polynomial bound.

In fact, \tensorplan works in a more general setting -- it will automatically compete with the \textit{best deterministic policy} whose value function is realizable by the features. This recovers the previously mentioned ``classic'' setting: when $v^\star$ is realizable the best deterministic policy is an optimal policy $\pi^\star$.
Loosely, the initialization phase of our algorithm works in the following way:
The algorithm keeps track of list of critical data that is used to refine a hypothesis set that
contains
those $d$-dimensional parameter vectors
that (may) induce a value function for some deterministic policy. Call these parameter vectors consistent.
The algorithm refines its hypothesis set in a number of phases.
For this, at the beginning of a phase, it chooses a parameter vector from the hypothesis set
that maximizes the total predicted value at the initial state; an ``optimistic choice''.
Next, the algorithm runs a fixed number of tests to verify that the parameter vector chosen gives a value function of some policy. If this consistency is satisfied, it also follows that the predicted value is almost as high as the actual value of the parameter-induced policy. As such, by its optimistic choice, the parameter vector gives rise to the policy whose value function is linearly realizable and whose value is the highest in the initial state.
When the test fails, the hypothesis set is shrunk by
expanding the list of critical data with data from the failed test.
To show that the hypothesis set shrinks rapidly, we introduce
a novel tensorization device which lifts the consistency checking problem to a $d^A$-dimensional Euclidean space where the tests become linear.
This tensorization device allows us to prove that at most $\cO(d^A)$ constraints can be added (in the noise-free case) if there exist a deterministic policy with linearly realizable value function.
We note that with minor modifications to the inputs and analysis, the query complexity guarantees of \tensorplan translate to the discounted MDP setting.

To complement the query complexity upper bound of \tensorplan, we show that a lower bound of $\Omega\left(2^{\min\{d,A/2\}}/d\right)$ applies for any planner with a constant suboptimality, albeit this lower bound is available only for the \emph{infinite-horizon} episodic setting with a total cost criterion. %
The hard examples in the lower bound are (navigation) MDPs
with deterministic dynamics and costs, and $\Theta(d)$ diameter and $\Theta(d)$ actions.
Thus the three differences between the lower bound's setting and that of \tensorplan's upper bound are that the lower bound is in the \emph{undiscounted infinite-horizon}, total \emph{cost} setting and the number of actions \emph{grows with $d$}, albeit only linearly. 

\if0
The negative results use a novel construction that is based on observing that the optimal value function of navigation problems on $d$-dimensional hypercubes with some fixed goal state are trivially linearly representable with $d+1$ features. Since finding out the identity of the goal state requires checking at least $\Omega(2^d)$ states, local planners with less than exponential-in-$d$ simulation budget cannot discover good actions.
\fi

The rest of the paper is structured as follows. The upcoming section (\cref{sec:prelim}) introduces notations, definitions, and the formal problem definition. \cref{sec:exp-infinite-horizon} gives the exponential lower bound for the infinite-horizon setting. \cref{sec:upper} presents the \tensorplan algorithm for efficient planning in the finite-horizon setting, and states the query complexity guarantee (\cref{thm:main}),
as well as an extension of this result to the near-realizable case (\cref{thm:misspecification}) and infinite-horizon discounted case (\cref{thm:discounted}).
\cref{sec:related} discusses related work and we conclude in \cref{sec:conclusion}.

\section{Preliminaries}\label{sec:prelim}

\newcommand{\cSh}{\cS}

We write $\N_+ = \{1, 2, \dots \}$ for the set of positive integers,
$\bR$ for the set of real numbers, and for $i \in \N_+$, $[i] = \{1,\dots,i\}$ for the set of integers from $1$ to $i$. Given a measurable space $(\X,\sX)$, we write $\Dists(\X)$ for the set of probability measures on that space (the $\sigma$-algebra will be understood from context). We write $\supp(\mu)$ for the support of a distribution $\mu$. 

We recall the notation and the most important facts about Markov Decision Processes (MDPs). For further details (and proofs) the reader is referred to the book of \citet{Put94}. 
In this paper an MDP is given by a tuple $\sM=(\cS,\sX,\cA,Q)$, where %
$(\cS, \sX)$ is a measurable state space, $\cA$ is a set of actions and for each $(s,a)\in\cS\times\cA$, $Q_{s,a} \in \Dists([0,1] \times \cS)$ is a probability measure on rewards and next-state transitions received upon taking action $a$ at state $s$. Note that it follows that the random rewards are bounded in $[0,1]$.
We denote by $r_{sa}$ the expected reward when using action $a$ in state $s$: $r_{sa} = \int r Q_{sa}(dr,ds')$. 
Further, we let  $P_{sa}\in \Dists(\cS)$ denote the distribution of the next-state: $P_{sa}(ds') = \int_{r\in\bR} Q_{sa}(dr,ds')$.
We assume that $\cA$ is finite, and thus without loss of generality we let $\cA=[A]$ for some integer $A \geq 2$.

In the \textbf{fixed-horizon setting} with horizon $H\ge1$ the agent (a decision maker) %
interacts with the MDP in an $H$-step sequential process as follows:
The process is initialized at a random initial state $S_1\in \cS$. %
In step $h\in [H]$, the agent first observes the current state $S_h\in\cS$, then chooses an action $A_h\in\cA$ based on the information available to it. 
The MDP then gives a reward $R_h$ and transitions to a next-state $S_{h+1}$, where $(R_{h}, S_{h+1})\sim Q_{S_h, A_h}$.
After time-step $H$, the episode terminates.

The goal of the agent is to maximize the total expected reward $\sum_{h\in[H]} R_h$ for the episode by choosing the actions based on the observed past states and actions in the episode. A \textit{(memoryless) policy} $\pi$ takes the form $(\pi^{(h)})_{h\in[H]}$ where $\forall h\in[H]$, $\pi^{(h)}: \cSh \rightarrow \Dists(\cA)$.
A \textit{deterministic policy} $\pi$ further satisfies that for any $h\in[H]$ and $s\in\cSh$ there exists $a\in\cA$ such that $\pi^{(h)}(s)=\delta_a$ where $\delta$ is the Dirac delta distribution.
Given a memoryless policy $\pi$, a state $s\in \cS$ and step $h\in [H]$ within an episode,
the value $v_h^\pi(s)$  is defined as the total expected reward incurred until the end of the episode when the MDP is started from $s$ in step $h$ and $\pi$ is followed throughout. Writing
$\mu f = \int f(s') \mu(ds')$
for the expected value of a measurable function $f:\cS \to \R$ with respect to $\mu \in \Dists(\cS)$, these values are known to satisfy
\begin{align*}
v_h^\pi(s) = r_\pi(s) + P_\pi(s) v_{h+1}^\pi\,, \qquad s\in \cS\,,
\end{align*}
where $v_{H+1}^\pi =0$, $r_\pi(s) = \sum_{a\in \cA} \pi(a|s)r_{sa}$, and $P_\pi(s)(ds') = \sum_{a\in \cA} \pi(a|s) P_{sa}(ds')$. The maximum value achievable from a state $s\in \cS$ when in step $h\in [H]$ is denoted by $v_h^\star(s)$. We also define $v_{H+1}^\star(s)=0$, for convenience. We let $v^\star = (v^\star_h)_{h\in [H+1]}$ and call $v^\star$ the optimal value function. It is known that $v^\star$ satisfies the recursive \textit{Bellman optimality equations}:
\begin{align}
v_h^\star(s)&=\max_{a\in\cA} \left\{ r_{sa} + P_{sa} v_{h+1}^\star \right\}\,, \qquad s\in \cS\,.
\label{eq:bopt}
\end{align}
As is well known, the policy that in state $s\in \cS$ chooses an action that maximizes the right-hand side of \cref{eq:bopt}, is optimal. It also follows that there is always at least one optimal deterministic memoryless policy.

By taking $H\to\infty$, we obtain the \textbf{infinite-horizon total reward setting}.
Here, policies are described by infinite sequences of probability kernels mapping histories to actions.
The interconnection of a policy and an MDP puts a probability distribution over the space of infinitely long histories and the value of a policy $\pi$ in state $s\in \cS$ is the total expected reward incurred when the policy is started from state $s$. 
Note that the total expected reward may be undefined, or take on $\bR \cup\{ \pm\infty\}$.
A policy is \textbf{admissible} if this value $v^\pi(s)$ is well-defined and not $-\infty$ no matter the initial state (we do not mind policies that generate infinite reward, but mind policies which generate infinite cost). In what follows we only consider MDPs in the infinite horizon setting where there exist at least one admissible policy.
The optimal value $v^\star(s)$ of a state $s\in \cS$, similarly to the finite-horizon case, is defined as the largest value that can be obtained by some admissible policy.

\subsection{Featurized MDPs, feature map compatible optimal values}

As noted earlier, we provide the planner with a feature mapping which captures the optimal value function. In the finite-horizon setting this translates to the existence of some $\theta^\star$ such that
\begin{align}
v^\star_h(s) = \ip{\phi_h(s),\theta^\star}\,, \qquad \text{for all } h\in [H] \text{ and } s\in \cS\,.
\label{eq:vstr}
\end{align}
We also consider the nearly-realizable case, where for some ``misspecification'' parameter $\eta\ge 0$ there exists some $\theta^\star$ such that
\begin{align}
\left|v^\star_h(s) - \ip{\phi_h(s),\theta^\star}\right|\le \eta\,, \qquad \text{for all } h\in [H] \text{ and } s\in \cS\,.
\label{eq:vstr-approx}
\end{align}
The parameter $\theta^\star$ is unknown to the planner in both cases. Here, $\phi_h: \cS \to \R^d$ is the so-called feature map. 
As will be described in more details in the next section,
the planner is given \emph{local access} to the feature map. That is, the planner can access $\phi_h(s)$ for all the states $s\in \cS$ that it has previously encountered while interacting with the simulator, but has no access to the features of other states.
For convenience, in the finite horizon-setting we will also define $\phi_{H+1}(s) = \boldsymbol{0}$ for all $s \in \cS$, regardless of the other maps. 
An MDP together with a feature map $\phi = (\phi_h)_{h\in [H]}$ on its state-space is called a \textit{\textbf{featurized MDP}}.
When \cref{eq:vstr} holds we say that \textit{\textbf{$v^\star$ is (linearly) realizable by the feature map $\phi$}}.

In this paper we consider a setting that relaxes linear realizability of the optimal value function. 
To define this setting we need the notion of \textit{\textbf{$v$-linearly realizable policies}}:
\begin{definition}[$v$-linearly realizable policies] \label{def:v-realizable-policies}
We say that a policy $\pi$ is $v$-linearly realizable
with misspecification $\eta\ge 0$
under the feature map $\phi = (\phi_h)_{h\in [H]}$
if there exists some $\theta\in \R^d$ such that
its value function satisfies $\left|v^\pi_h(s) - \ip{\phi_h(s),\theta}\right|\le \eta$ for all $h\in [H]$ and $s\in \cS$.
Furthermore, if $\theta$  satisfies $\norm{\theta}_2\le B$ we say that $\pi$ is $B$-boundedly $v$-linearly realizable with misspecification $\eta$ under $\phi$.
\end{definition}
In what follows we will be concerned with designing a planning algorithm that, given local access to a feature map, competes with the best $v$-linearly realizable memoryless deterministic (MLD) policy under that feature map (if one exists) in the following sense:
For $B>0$ and $\eta\ge0$, define the function $v_{B,\eta}^\circ:\cS \to \R$ as
\begin{align}
\begin{split}\label{eq:phi-compatible-value}
v_{B,\eta}^\circ(s) = \sup\big\{ v^\pi_1(s) \,:\, &\pi \text{ is MLD and is } B\text{-boundedly 
$v$-linearly realizable}\\ &\text{with misspecification }\eta\text{ given } \phi \big\}\,.
\end{split}
\end{align}
We call will $v_{B,\eta}^\circ$ the \textit{\textbf{$\phi$-compatible optimal value function} at scale $B$ and misspecification $\eta$}.
Note that if there are no $v$-linearly realizable policies with misspecification $\eta$ in an MDP, $v_{B,\eta}^\circ(s)\equiv -\infty$ for each state $s\in \cS$ of the MDP. Competing with the best $v$-linearly realizable MLD policy (at scale $B>0$ and misspecification $\eta\ge0$) means the ability to generate actions of a policy whose value function is close to $v_{B,\eta}^\circ$ (for the fully formal definition, see the next section).
Note that if the optimal value function of an MDP is linearly realizable with parameter vector $\theta^\star$ and misspecification $\eta'$
then for any $B\ge \norm{\theta^\star}_2$, 
$v_{B,\eta}^\circ = v^\star$ for any $\eta\ge \eta'$.
Hence, the setting we introduce generalizes the one where the optimal value function is exactly or near-realizable with $B$-bounded parameter vectors.

In the \textbf{infinite-horizon setting},
the constraint  equivalent to linear realizability of the optimal value function is 
that with some $\theta^\star\in \R^d$, again unknown to the planner, 
\begin{align*}
\left|v^\star(s) - \ip{\phi(s),\theta^\star}\right|\le\eta\,, \qquad \text{for all } s\in \cS\,,
\end{align*}
where now the feature map is $\phi: \cS \to \R^d$, i.e., with no dependence on the step within the episode.
Featurized MDPs then are defined by joining an MDP with such a (homogeneous) feature map. 
The other notions introduced above have also natural counterparts. To save space, these are not introduced explicitly, trusting that the reader can work out the necessary modifications to the definitions without introducing any ambiguity.

\if0

\begin{assumption}\label{ass:vstarfinitehor}
The MDP $\cM$ is such that $v^\star_h(s)= \ip{\phi_h(s),\theta}$ for some $\theta\in \R^d$.
\end{assumption}

To help the agent, we assume that we have access to a known \textbf{feature mapping} $(\phi_h)_{h\in[H+1]}$ corresponding to the MDP. Fix some constants $\cB_\theta$ and $\cB_\phi$ to bound norms in the following:
\begin{assumption}\label{ass:feature-map}
$\forall h\in[H+1],\,\phi_h:\cSh \rightarrow\bR^d,\, \forall s\in\cSh\, \norm{\phi_h(s)}_2\le\cB_\phi$.
\end{assumption}

We also assume that the MDPs given to the agent are such that there exists an unknown deterministic policy $\pi^\circ$ such that the value function $v^\circ:=v^{\pi^\circ}$ can be %
written as an inner product of this feature mapping and an unknown parameter $\theta^\circ$. More formally: 
\begin{definition}\label{def:realizability}
Let $(M, \varphi)$ be an MDP and a state-value feature mapping for that MDP. We say that the value function $(v^\pi_h)_{h\in[H+1]}$ of a policy $\pi$ is \textit{linearly realizable for $(M,\varphi)$} if there exists some $\theta\in\bR^d$ with bounded 2-norm $\norm{\theta}_2\le \cB_\theta$ such that for any $h\in[H]$, $s\in\cSh$, and for the feature map $(\phi_h)_{h\in[H+1]}$ 
\[
v^\pi_h(s) = \ip{\phi_h(s), \theta}\,.
\]
\end{definition}
\begin{assumption}\label{ass:phi-h-plus-one-is-all-zeros}
As $v_{H+1}(\cdot)=0$ for any value function in the $H$-horizon setting, we assume that $\phi_{H+1}(\cdot)=\boldsymbol{0}$.
\end{assumption}
\begin{assumption}\label{ass:realizability}
There exists a deterministic policy $\pi^\circ$ for which the corresponding value function $v^\circ$ is linearly realizable.
\end{assumption}
\begin{remark}\label{rem:optimal-deterministic-policy}
If the optimal value function $v^\star$ is linearly realizable, then there exists an optimal deterministic policy $\pi^\circ$ (with $v^\circ=v^\star$) for which \cref{ass:realizability} holds.
\end{remark}

In the case where \cref{ass:realizability} holds for \textit{multiple} deterministic policies, our algorithm will find a policy which is nearly as good as the best one. In particular, if an optimal policy is realizable, then our algorithm will output a near-optimal policy. In this case we will denote by $\theta^\circ$ the parameter of the best realizable value function. 
\fi

\subsection{Local Planning} %

In the \textit{\textbf{fixed-horizon local planning}} problem, 
a \textit{\textbf{planner}} is given an input state and is tasked with computing a near-optimal action for that state while interacting with a black-box that simulates the MDP.
In the \textit{\textbf{(state-)featurized local planning}} problem, the black-box also returns the feature-vector of the next state. Access to the black-box is provided by means of calling a function $\simulatesc$, whose semantics is essentially as just described, but will be further elaborated on below.

More formally, the planner needs to ``implement'' a function, 
which we call {\tt GetAction} and whose semantics, in the context of fixed-horizon MDPs, is as follows:

\begin{definition}[{\tt GetAction}$(d,A,H,\simulatesc,s,h,\phi_h(s),\delta, B)$]
The meaning of inputs is as follows:
$d$ is the dimension of the underlying feature map, 
 $A$ is the number of actions, 
 $H$ is the episode length,
$\simulatesc$ is a function that provides access to the oracle that simulates the MDP, %
$s$ is the state where an action is needed at stage $h\in [H]$,
 $\delta>0$ is a suboptimality target,
 and $B$ is the parameter vector bound.
This function needs to return an action in $ \cA$ with the intent that this is a ``good action'' to be used at stage $h$ when the state is $s$. 
\end{definition}

Given a featurized MDP and a planner as described above, 
the planner \textit{\textbf{induces a (randomized, possibly memoryful) policy}},
which is the policy that results from calling {\tt GetAction} along a trajectory and following its recommended actions.
If the initial state is $S_1=s_0 \in \cS$, the first action taken by this policy is 
$A_1 = {\tt GetAction}(\dots,S_1,1,\dots)$, 
the second is 
$A_2 = {\tt GetAction}(\dots,S_2,2,\dots)$ where $S_2 \sim P_{S_1,A_1}$, etc.
If {\tt GetAction} does not save data between the calls, the resulting policy would be memoryless, but this is not a requirement. 
\emph{In fact, we require that {\tt GetAction} is first called with $h=1$ and then $h=2$, etc.}
A practical planner which is used across multiple episodes can also save data between episodes. In this case {\tt GetAction} can be called with $h=1$ after being called with $h=H$, designating the start of a new episode. 
For now, we assume that this is not the case, as this allows for cleaner definitions.%
\footnote{Jumping a bit ahead of ourselves,
if we cared about long-run average per-state query-complexity, 
one could perhaps do better by allowing planners to save data between episodes.
}

Inside {\tt GetAction} the planner can issue any number of calls to $\simulatesc$.
The function $\simulatesc$ takes as inputs a state-stage-action triplet $(s,h,a)$. In response, $\simulatesc$ 
returns a triplet $(R, S', \phi_{h+1}(S'))$ where $(R, S')$ is a ``fresh'' random draw from $Q(s,a)$. For generality the simulator is also allowed some inaccuracy, in the sense that it returns $([R+\Lambda_{sa}]_0^1,S',\phi_{h+1}(S'))$ where $\Lambda_{sa}\in\R$ is a constant satisfying $|\Lambda_{sa}|\le\lambda$, for some $\lambda\ge 0$ that we call the simulator's accuracy, and $[x]_0^1=\max(0,\min(1,x))$ (ie. inaccurate rewards are clipped in $[0,1]$). 
Neither $\Lambda_{sa}$ nor $\lambda$ are known to the planner.
The planner can only access states that it is given access to either when {\tt GetAction} is called, or returned by a call to $\simulatesc$. The same holds for the features of the states.
We note that this is essentially the same setting as what is called \emph{sampling with state revisiting} by \citet{li2021sample}.

The \textit{\textbf{quality of a planner}} is, on one hand, assessed based on the quality of the policy that it induces and, on the other hand, by its \textit{\textbf{worst-case (per-episode) query-cost}}, which is defined as the largest total query-cost (ie. number of calls to $\simulatesc$ made by {\tt GetAction}) 
encountered while running the planner for the $H$ stages of an episode, starting at stage $h=1$.
Our lower bounds in \cref{sec:exp-infinite-horizon} will be given in terms of 
the \textit{\textbf{worst-case (per-state) query-cost}},
which is defined as the largest query-cost encountered during any single call to {\tt GetAction}. 
Note that the switch from per-episode to per-state cost only strengthens the lower bound as the worst-case per-episode query-cost is at least as large as the worst-case per-state query-cost.

\begin{definition}[Sound planner]\label{def:sound}
Let $B,\delta>0$, $\lambda,\,\eta\ge0$, $H\ge1$.
A planner is $(\delta,B)$-\textbf{sound} with simulator accuracy $\lambda$ and misspecification $\eta$
if for any featurized $H$-horizon MDP $(\sM,\phi)$ with rewards bounded in $[0,1]$ and with $1$-bounded feature maps (i.e.\ for all $h\in[H]$, $s'\in S$, $\norm{\phi_h(s')}_2\le 1$),
the (random) $H$-horizon policy $\pi$ that the planner induces while interacting with the $\lambda$-accurate simulation oracle satisfies
\begin{align}\label{eq:soundness-satisfies-gap}
v^\pi_1(s) \ge v^\circ_1(s)-\delta\quad \text{for all $s \in \cS$}\,, 
\end{align}
where $v^\pi$ is the $H$-horizon value function of $\pi$ in $\sM$ and $v^\circ=v^\circ_{B,\eta}$ is the  $H$-horizon $\phi$-compatible optimal value function of $\sM$ (cf. Equation \eqref{eq:phi-compatible-value}).
\end{definition}
The reader interested in exploring alternatives to the protocol described here is referred to 
\cref{apx:inter}, where we also discuss ``nuances'' like how to work with ``arbitrary'' state spaces.

\paragraph{Further notations}For $v\in\bR^d$, and $a\le b\le d$ positive integers, let $v_{a:b}\in \bR^{b-a+1}$ be the vector corresponding to the entries with indices in $\{a,a+1,\dots,b\}$, i.e., $(v_{a:b})_i = v_{a+i-1}$.
For $x\in\bR, v\in\bR^d$, let $\concat{xv}\in\bR^{d+1}$ denote the concatenation of $x$ and $v$ such that $(\concat{xv})_1=x$ and $(\concat{xv})_{2:d+1}=v$. 
We write $\otimes: \bR^{d_1} \times \bR^{d_2} \mapsto \bR^{d_1 \times d_2}$ for the \textit{tensor product} of two vectors, defined as 
$(u \otimes v)_{i \in [d_1], j \in [d_2]} = u_i \cdot v_j %
$
For a larger set of vectors $(u^{(1)}, \dots u^{(n)})$, $n \in \bN_+$, we write  $$
\left(\otimes_{i\in[n]}u^{(i)}\right)_{j_1,j_2,\ldots,j_n}=\prod_{i\in[n]} (u^{(i)})_{j_i} \quad \quad u^{(i)} \in \bR^{d_i}.
$$ We will frequently view tensors in $\bR^{\times_{i \in [n]} d_i}$ as vectors in $\bR^{\prod_{i \in [n]} d_i}$ via the usual isomorphism. Notationally, this is given by the \textit{vectorize} operation. Letting $\flat: [d_1] \times \dots \times [d_n] \mapsto [\prod d_i]$ be any bijection between indices, we define:
$\left(\vectorize(T)\right)_{\flat(j_1,\dots,j_n)} = T_{j_1,j_2,\dots,j_n}$, for $T \in \bR^{\times_{i \in [n]} d_i}$.
Let us define the inner product of two compatible tensors to be
\[
\ip{\otimes_{i \in [n]} u^{(i)}, \otimes_{i \in [n]} v^{(i)}} =
\ip{\vectorize(\otimes_{i \in [n]} u^{(i)}), \vectorize(\otimes_{i \in [n]} v^{(i)})} \,.
\]
The key property which we need is that inner product between the two tensors %
is then seen to be: %
\begin{equation}\label{eq:flat-ip}
\ip{\otimes_{i \in [n]} u^{(i)}, \otimes_{i \in [n]} v^{(i)}} = \prod_{i \in [n]} \ip{ u^{(i)},v^{(i)}} 
\end{equation}

\section{Exponential query complexity for infinite-horizon problems}
\label{sec:exp-infinite-horizon}

In this section, we show that in infinite-horizon episodic problems, the query complexity of $(\frac 1 2,\sqrt{d})$-sound planners is exponential in the dimension of the feature map $d$, even with no misspecification (ie. $\eta=0$):
\begin{theorem}\label{thm:lb}
Fix $d>1$.
For any local planner $P$ whose worst-case per-state query-cost is at most $\simb$ 
there exists a featurized MDP with $d$-dimensional features and
$2d-1$ actions, such that the optimal value function of the MDP
is exactly realizable with the features (Eq.~\ref{eq:vstr})
and takes values in the $[-2,0]$ interval,
the 2-norm of feature vectors is $\cO(1)$, the 2-norm of the parameter vector of the optimal value function is $\cO(\sqrt{d})$
and the suboptimality $\delta$ of the policy induced by $P$ in the MDP satisfies
\begin{align*}
\delta \ge \frac{2^{d-2}}{d( \simb+1)}-1\,.
\end{align*}
In particular, to guarantee $\delta\le 0.5$, the query-cost must satisfy $\simb\ge \Omega\left(2^{\min\{d,A/2\}}/d\right)$. %
\end{theorem}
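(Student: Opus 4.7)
The plan is to build a family of deterministic ``needle-in-a-haystack'' MDPs on the Boolean hypercube, indexed by a hidden goal $g$. On every such instance the optimal value is a (scaled) negative Hamming distance to $g$, hence affine in the state coordinates and thus exactly realizable in $d$ features with parameter norm $\mathcal{O}(\sqrt{d})$; the hardness will come from the fact that the only statistical signal about $g$ in the simulator's responses is the reward at $g$ itself, so the planner must essentially guess $g$ out of an exponentially large set.

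\textbf{Construction and realizability.} Set $d' = d-1$ and, for each $g \in \{0,1\}^{d'}$, let $\sM_g$ be the deterministic MDP with $\cS = \{0,1\}^{d'}$, $d'$ bit-flip actions plus $d$ inert self-loop actions (total $2d-1$), absorbing state $g$ at zero cost, and per-step cost $c_0 := 2/d'$ at every other state. The Bellman optimality equations give $v^\star_g(x) = -c_0\,\mathrm{HD}(x,g) \in [-2,0]$. Using the identity $\mathrm{HD}(x,g) = \sum_i g_i + \sum_i(1-2g_i)x_i$, the homogeneous feature map $\phi(x) = \tfrac{1}{\sqrt{d}}(1, x_1, \ldots, x_{d'}) \in \bR^d$ realizes $v^\star_g$ by some $\theta^\star_g$, and a direct calculation yields $\|\phi(x)\|_2 \le 1$ together with $\|\theta^\star_g\|_2 = \mathcal{O}(\sqrt{d})$.

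\textbf{Indistinguishability under a random goal.} By Yao's minimax principle I fix a deterministic planner $P$ and draw $g$ uniformly at random from $\{0,1\}^{d'}$. The feature map and the transition kernel are independent of $g$, and rewards equal the constant $-c_0$ at every non-goal state, so the only way the simulator can reveal information about $g$ is via a query or real transition that lands exactly on $g$. Let $\cU_L$ be the set of states either queried by $P$ or visited along the real trajectory during the first $L$ calls to {\tt GetAction}, so $|\cU_L| \le L(\simb+1)$. An induction on $L$ shows that on the event $\{g \notin \cU_L\}$ the transcript, subsequent queries of $P$ and the real trajectory through time $L$ are pathwise identical for any two undiscovered goals $g_1,g_2 \in \{0,1\}^{d'}\setminus\cU_L$, hence are determined by $P$ alone. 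Consequently, for the hitting time $\tau := \inf\{L : s_L = g\}$,
\begin{align*}
\bP(\tau \le L) \;\le\; \bP(g\in \cU_L) \;\le\; \frac{L(\simb+1)}{2^{d'}}\,.
\end{align*}

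\textbf{Suboptimality bound and main obstacle.} Taking $L_0 := \lfloor 2^{d'-1}/(\simb+1)\rfloor$, telescoping yields $\E_g[\tau] = \sum_{L\ge 0}\bP(\tau > L) \ge L_0/2 \ge 2^{d'-2}/(\simb+1)$. Every pre-absorption step incurs cost $c_0$, so the induced policy's expected cost from any fixed start state is at least $c_0\E_g[\tau] \ge 2^{d-2}/(d(\simb+1))$ after substituting $d' = d-1$ and absorbing universal constants, while the optimal cost is at most $c_0 d' \le 2$. Averaging over $g$, some $g^\star$ must satisfy $\delta \ge 2^{d-2}/(d(\simb+1)) - 1$; the ``in particular'' clause is the contrapositive, using $A=2d-1 \Rightarrow \min\{d,A/2\} = \Theta(d)$. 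The delicate part is the coupling in the previous paragraph: the planner is adaptive and memoryful, with queries and actions arbitrary measurable functions of the full observed past, so one must carefully induct on $L$ jointly tracking the transcript, the set $\cU_L$, and the posterior on $g$, verifying that every next query and action is a function of already-observed data whenever $g \notin \cU_L$. Once this coupling is in place the remaining steps -- the Bellman verification in Step~1 and the expectation-of-hitting-time computation above -- are essentially mechanical.
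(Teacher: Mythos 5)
Your construction and argument are essentially the paper's: a family of hidden-goal navigation MDPs on a hypercube whose optimal value is a scaled negative distance to the goal (hence exactly linear in affine coordinate features with $\norm{\theta^\star}_2=\cO(\sqrt d)$), combined with the observation that a planner with per-state query budget $\simb$ touches at most $L(\simb+1)$ distinct states in $L$ steps and therefore behaves identically to its run on a goal-free reference MDP until it stumbles on the goal, which forces an expected hitting time of order $2^{d}/(\simb+1)$. The only discrepancy is the additive constant: on $\{0,1\}^{d-1}$ the optimal cost from a fixed start state can be as large as $c_0 d' = 2$, so your computation actually yields $\delta \ge 2^{d-2}/(d(\simb+1)) - 2$ rather than $-1$; the paper avoids this by working on $\{-1,0,1\}^d$ with goals restricted to $\{-1,1\}^d$ and start state $\mathbf{0}$, so that every candidate goal is at exactly unit optimal cost.
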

We provide a proof sketch only. The full proof is given in \cref{apx:lbproof}.
\begin{proof}[Sketch]
Consider an MDP whose state space is $\{-1,0,1\}^d$, with deterministic dynamics:
In every state, there are at most $2d$ actions to increment or decrement one coordinate ($1$ or $-1$ can only be changed to $0$, $0$ can be changed to either $-1$ or $+1$) and there is an additional action that does not change the state.
Instead of rewards, it will be more convenient to consider costs.
The cost everywhere is 1, except for staying put in a special state, $s^\star$, where the cost is zero. Thus, the optimal policy at any state takes the shortest path to $s^\star$, which gives that $v^\star(s)$ is the $\ell^1$ distance between $s$ and $s^\star$.
Restricting $s^\star$ to take values in $\{-1,1\}^d$, setting
$\phi(s) = \concat{d s}$, we have $v^*(s) = \ip{\phi(s),\concat{(-1) s^\star}}$.
A planner that takes less than $2^{d-1}$ queries has no better than $1/2$ chance of discovering $s^\star$ (which is one of $2^d$ possibilities). If a planner does not discover $s^\star$, the actions it takes are uninformed. Hence, for any planner that uses a small query budget, one can hide $s^\star$ and force the planner to wander around in the MDP for a long time. One can then scale costs to finish the proof.
\end{proof}

\section{Efficient planning for the finite-horizon setting}\label{sec:upper}

In this section, we present \tensorplan (\cref{alg:local})  and prove its
soundness (cf. \cref{def:sound}) and efficiency (\cref{thm:main}).
We start with a high-level description of the main ideas underlying the planner.
Initially, we only prove soundness for exact realizability (ie. $\eta=0$), which we later generalize in \cref{thm:misspecification}.

The planner belongs to the family of generate-and-test algorithms.
To describe it,
let $\sM = (\cS,\Sigma,\cA,Q)$ denote the MDP that the planner interacts with
and let $\phi = (\phi_h)_{h\in [H]}$ be the underlying feature map.
Further, let $\Theta^\circ\subset \R^d$ be the set which collects the parameter vectors of the value functions of $B$-boundedly $v$-linearly realizable DML policies with misspecification $\eta=0$ (\cref{def:v-realizable-policies}). That is, $\Theta^\circ$ is such that for any $\theta\in \Theta^\circ$, $\norm{\theta}_2\le \cBtheta$ and for some DML policy $\pi$ of $\sM$,
\begin{align}\label{eq:theta-circ-satisfies}
v_h^\pi(s) = \ip{\phi_h(s),\theta}\, \quad \text{for all $h\in[H]$ and $s\in\cS$}.
\end{align}

Let $s_0$ be the state which the planner is called for. The algorithm will maintain a subset $\Theta$ of $\R^d$
such that, with high probability, $\Theta^\circ\subset \Theta$.
The set is initialized to the $\ell^2$-ball of radius $\cBtheta$, which obviously satisfies this constraint. Given the set $\Theta$ of admissible parameter vectors and $s_0\in \cS$, the planner finds the \textit{optimistic} parameter vector $\theta^+ = \argmax_{\Theta} \ip{\phi_1(s_0),\theta}$ from the set $\Theta$.
Let us write $v_h(s;\theta) \coloneqq \ip{\phi_h(s),\theta}$.
If $\theta\in \Theta^\circ$ then for any $h\in [H]$ and $s\in \cS$,
since the policies defining $\Theta^\circ$ are deterministic,
it follows that there exists an action $a\in \cA$ such that
\begin{align}
v_h(s;\theta) = r_{sa} + P_{sa} v_{h+1}(\cdot;\theta)\,.
\label{eq:loccons}
\end{align}
For any $\theta$, let $\pi_\theta$ denote the policy which chooses the action satisfying the above equation when in state $s$ and stage $h$
(when there is no action that satisfies the \text{consistency condition} \cref{eq:loccons}, the policy can choose any action).

To test whether $\theta^+\in \Theta^\circ$, the algorithm aims to ``roll out'' $\pi_{\theta^+}$.
By this, we mean that upon encountering a state $s$ in stage $h$ in such a rollout, 
the algorithm checks whether there is an action $a$ that satisfies \cref{eq:loccons}.
If such an action is found, it is sent to the simulator, which responds with the next state.
If no such action is found, the test fails -- this means that $\theta^+\not\in \Theta^\circ$.
When this happens, the data corresponding to the transition where the test failed is used to refine the set of admissible parameter vectors and a new admissible set $\Theta'$ is established.
Assuming that the test failed at stage $h^\star$ and state $s^\star$, this new set is
\begin{align*}
\Theta' = \{ \theta \in \Theta \,: \, \exists a\in \cA \text{ s.t. } \cref{eq:loccons} \text{ holds with }
s= s^\star\text{ and } h = h^\star
\}\,.
\end{align*}
Then the testing of $\theta^+$ is abandoned,
$\Theta$ is updated to $\Theta'$, and the process is repeated.
Clearly, $\Theta^\circ\subset \Theta'$ still holds, so $\Theta^\circ \subset \Theta$ also holds after the update.

When a rollout continues up to the end of the episode without failure, the algorithm is given
some evidence that $\theta^+\in \Theta^\circ$, but this evidence is weak.
This is because the states encountered in a rollout are random, and the trajectory generated may just happen to avoid the ``tricky'' states where the consistency test would fail.
Luckily though, if the algorithm keeps testing with multiple rollouts and the tests do not fail for a sufficiently large (but not too large) number of such rollouts,
this can be taken as evidence that $\pi_{\theta^+}$ is indeed a good policy in starting state $s_0$. 
It may happen that $\theta^+$ is still not in $\Theta^\circ$, but the value of $\pi_{\theta^+}$ cannot be low.

This is easy to see, if for the moment we
add a further,
(seemingly) stronger test. %
This test checks whether $v_1(s_0;\theta^+)$ correctly predicts the value of $\pi_{\theta^+}$ in state $s_0$.
To this end, the test simply takes the average sum of rewards along the rollouts. If we detect that $v_1(s_0;\theta^+)$ is not sufficiently close to the measured average value, the test fails. If this strengthened test does not fail either then this is strong evidence that $v^{\pi_{\theta^+}}_1(s_0)$ is as high as $v_1(s_0;\theta^+)$. Now,
since $\Theta^\circ \subset \Theta$ holds throughout the execution of the algorithm,
$v_1(s_0;\theta^+)\ge \max_{\theta\in \Theta^\circ} v_1^{\pi_\theta}(s_0)=v^\circ_{\cBtheta}(s)$ (since we pick $\theta^+$ optimistically),
and hence policy $\pi_{\theta^+}$ can successfully compete with the best $v$-linearly realizable policy in $\sM$ under $\phi$ and at scale $\cBtheta$ (\cref{eq:phi-compatible-value}).

To complete the description of the algorithm, there are three outstanding issues. The first is that due to the randomizing simulation oracle, for any given state $s\in \cS$, one can only check whether \cref{eq:loccons} holds up to some fixed accuracy and only with high probability. Luckily, this does not cause any issues -- when the tests fail, the parameters can be set so that $\Theta^\circ\subset \Theta$ is still maintained.

The second issue is whether the algorithm is efficient.
(So far we have been concerned only with soundness.)
This is addressed by ``tensorizing'' the consistency test.
For $\psi:\cS \to \R^d$, we let $P_{sa}\psi = \int \psi(s') P_{sa}(ds')$.
Using \cref{eq:flat-ip} we then observe
that the existence of an action such that \cref{eq:loccons} holds is equivalent to:
\begin{align*}
0
& = \prod_{a\in \cA} r_{sa}+\ip{ P_{sa}\phi_{h+1}-\phi_h(s), \theta }
 = \prod_{a\in \cA} \ip{ \overline{ r_{sa}\, (P_{sa}\phi_{h+1}-\phi_h(s)) }, \overline{1 \theta} } \\
 &=  \ip{
\otimes_{a\in \cA} \overline{r_{sa}\, (P_{sa}\phi_{h+1}-\phi_h(s)) }, \otimes_{a\in \cA} \overline{1\theta} } \,.
\end{align*}
Now, defining $M_\theta = \otimes_{a\in \cA} \overline{1\theta}$ and $T_s = \otimes_{a\in \cA} \overline{r_{sa}\, (P_{sa}\phi_{h+1}-\phi_h(s)) }$, we see that $\theta\in \Theta^\circ$ is equivalent to that $\ip{T_s,M_\theta}=0$ holds for all $s\in \cS$. Testing a parameter vector at some state is
equivalent to checking whether $M_\theta$ is orthogonal to $T_s$. Clearly, the maximum number of tests that can fail before identifying an element of $\Theta^\circ$ is at most $d^A$, the dimension of $M_\theta$. Since our tests are noisy, we use an argument based on eluder dimensions (which allow imperfect measurements) to complete our efficiency proof \citep{russo2014learning}.

The final issue is really an optimization opportunity. In our proposed algorithm we do not separately test if the value estimates at $s_0$ are close to the empirical return over the rollouts, and instead rely only on the consistency tests. This can be done since, when consistency holds, the expected total reward in an episode is close to the predicted value.
This follows from a telescoping argument. Let $S_1=s_0,A_1,S_2,A_2,\dots,S_H,A_H,S_{H+1}$ be the state-action pairs in a rollout where the tests do not fail, and note that
\begin{align*}
v^{\pi_{\theta^+}}_1(s_0)
&= \E_{\pi_{\theta^+}}\left[ \sum_{h=1}^H r_{S_h,A_h} \right]
= \E_{\pi_{\theta^+}}\left[ \sum_{h=1}^H v_{h}(S_{h};\theta^+)- v_{h+1}(S_{h+1};\theta^+) \right]
 = v_1(s_0;\theta^+)\,,
\end{align*}
where the first equality uses the definition of $v^{\pi_{\theta^+}}$, the second equality uses $r_{S_h,A_h} = v_h(S_h;\theta^+) -P_{S_hA_h} v_{h+1}(\cdot;\theta^+)$, and the last equality uses that $v_{H+1} \equiv 0$.
When measurements are noisy, a similar telescoping argument gives that with high probability,  $v^{\pi_{\theta^+}}_1(s_0) $ is almost as high as $v_1(s_0;\theta^+)$ when consistency tests do not fail for a number of rollouts.

\if0
\subsection{Warm-up: planning without sampling errors}\label{subsec:warm-up}

Let us suppose that, upon sampling a single transition from state-action pair $(s,a)$, the learner then knows the population values $r(s,a)$ and $P(s,a)\phi_{h+1}(\cdot)$. Our algorithm will search for a $\theta$ parameter which will satisfy the Bellman optimality equations at states which it has visited. To do so, we note that, for each $s \in \cS$, it holds that
\begin{align*}
v_h^\star(s) = \ip{ \phi_h(s), \theta^\star } &= \max_{a \in \cA} r(s,a) + P(s,a)v^\star  \\
&= \max_{a \in \cA} r(s,a) + \ip{  P(s,a)\phi_{h+1}, \theta^\star }
\end{align*}
where in the second line we have used linearity of inner products, and the fact that $v^\star$ is linearly-realizable. Letting $a^\star_s$ be the action which realizes the maximum at state $s$, this can then be written as
\begin{equation}\label{eq:local-bellman}
r(s,a^\star_s) +  \ip{ P(s,a^\star_s)\phi_{h+1}  -  \phi_h(s), \theta^\star } = 0
\end{equation}
Let us define the following notation for the temporal difference (TD) vectors at state-action pair $(s,a)$:
\begin{equation}\label{eq:td-vector}
    \Delta(s,a) = \concat{ r(s,a) (P(s,a)\phi_{h+1}-\phi_h(s))} \in \bR^{d+1}.  %
\end{equation}
Equation \eqref{eq:local-bellman} can then be written as
$$
\ip{\Delta(s,a^\star_s) , \concat{1\theta^\star}} = 0.
$$
This implies that
$$
\prod_{a \in [A]} \ip{\Delta(s,a) , \concat{1\theta^\star}} = 0,
$$
and thus, by Equation \eqref{eq:flat-ip}, that
\begin{equation}\label{eq:tensor-local-bellman}
\ip{\otimes_{a \in [A]} \Delta(s,a), \otimes_{a \in [A]} \concat{1\theta^\star}} = 0,
\end{equation}
We note that this last constraint is in fact \textit{linear} in the larger space of $A$-fold tensored vectors (that is, in $\bR^{(d+1)^A}$). Condition \eqref{eq:tensor-local-bellman} will form our set of candidate solutions. Since it is a relaxation of the original non-linear Bellman equation, it does not uniquely determine $\theta^\star$. To find $\theta^\star$ amongst the relaxed set, the planner then acts \textit{optimistically}.

In more detail, eliminating candidate hypotheses proceeds as follows: at the beginning of each episode, the planner picks a parameter $\theta_\tau$ optimistically from the current function class. Then, at every state $s$ which the planner encounters, they will estimate $ \otimes_{a \in [A]} \Delta(s,a)$ (this requires using the simulator to try each action once). If it happens that $\theta_\tau$ does not satisfy the relaxed local Bellman equations at state $s$, i.e., if %
\begin{equation}\label{eq:td-constraint}
\ip{\otimes_{a \in [A]} \Delta(s,a), \otimes_{a \in [A]} \concat{1 \hat\theta}} \neq  0,
\end{equation}
then the vector
\begin{equation}\label{eq:tensor-td}
\Delta_{\tau+1} \coloneqq \otimes_{a \in [A]} \Delta(s,a) \in \bR^{(d+1)^A} %
\end{equation} is added to a growing list of \textit{tensored TD} (TTD) vectors $(\Delta_1, \dots, \Delta_\tau)$. From this list, any $\theta$ such that \eqref{eq:td-constraint} holds is eliminated and the new solution set is defined as
\[
\sol\left(\Delta_1, \dots ,\Delta_{\tau+1}\right) = \left\{ \theta\in\bR^d \,:\, \norm{\theta}_2\le \cBtheta, \forall i\in[\tau+1]\,: \,\, \ip{\Delta_i,\otimes_{a \in [A]}\concat{1\theta}} = 0 \right\} \,.
\]
The planner then restarts the episode, picking $\theta_{\tau+1}$ optimistically from the set $\sol(\Delta_1, \dots ,\Delta_{\tau+1})$. 

Let's examine the maximal length of the sequence of vectors $(\Delta_1, \dots, \Delta_T)$ which the learner can encounter in the worst case. To answer this question, we turn to the concept of eluder dimension. 
\fi

\subsection{The \tensorplan algorithm}

\newcommand{\CleanTest}{{\tt CleanTest}}
\newcommand{\true}{{\tt true}}
\newcommand{\false}{{\tt false}}
\newcommand{\simm}{\simulatesc}

The pseudocode of {\tt GetAction} of \tensorplan is shown in \cref{alg:local}.
\\
\begin{minipage}[t]{0.57\textwidth}
\begin{flushleft}
\begin{algorithm}[H]
\caption{
\tensorplan.{\tt GetAction}
}\label{alg:local}
\begin{algorithmic}[1]
\State \textbf{Inputs:} $d,A, H, \simm, s, h, \phi_h(s), \delta, B$
\If{$h=1$} \Comment{Initialize global $\thetafinal$}
	\State \begin{varwidth}[t]{\linewidth}
      \tensorplan.{\tt Init}$($\par
        \hskip\algorithmicindent$d,A, H, \simm, s,\phi_1(s) ,\delta)$
      \end{varwidth}
\EndIf
\State $\ld_{\cdot}\gets \text{\approxmeasure}(s,h,\phi_h(s),A,n_2,\simm
)$ \label{line2:avg-calc1} \label{line2:avg-calc} %
\State Access $\thetafinal$ saved by \tensorplan.{\tt Init}
\State \Return $\argmin_{a\in[A]} \left|\ip{\ld_{a},\concat{1\thetafinal}}\right|$ \label{line2:action-choice}
\end{algorithmic}
\end{algorithm}
\end{flushleft}
\end{minipage}
\hfill
\begin{minipage}[t]{0.43\textwidth}
\begin{algorithm}[H]
\caption{\approxmeasure}\label{alg:approx-measure}
\begin{algorithmic}[1]
\State \textbf{Inputs:}
$s,h,\phi_h(s),A,n,\simm$
\For{$a=1$ to $A$}
    \For{$l=1$ to $n$}
      \State \begin{varwidth}[t]{\linewidth}
      $(R_l, S'_l, \phi_{h+1}(S'_l))\gets \big($\par
        \hskip\algorithmicindent $\simulatesc(s,h,a)\big)$
      \end{varwidth}
      \State $\tilde \Delta_l\gets \concat{R_l\left(\phi_{h+1}(S'_l)-\phi_h(s)\right)}$\label{line:approx-measure}
    \EndFor
    \State $\Delta_{a}:=\frac{1}{n}\sum_{l\in[n]} \tilde \Delta_l$
\EndFor
\State \Return $(\Delta_{a})_{a\in [A]}$ \label{line:avg-calc-approxmeasure}
\end{algorithmic}
\end{algorithm}
\end{minipage}
\vspace*{.5em}

The main workhorse of \tensorplan is
the initialization routine, \tensorplan.{\tt Init} (\cref{alg:global}),
which generates a global variable $\thetafinal\in \R^d$ that is an estimate for the parameter of the best realizable value function $v^\circ_{\cBtheta}$. Within an episode, this parameter is used by the current and subsequent calls to {\tt GetAction}.
In particular, given $\thetafinal$, {\tt GetAction} approximately implements $\pi_{\thetafinal}$ of the previous section. For this, {\tt GetAction} calls \approxmeasure\footnote{Thusly named since $\ip{\overline{ r_{sa} (P_{sa}\phi_{h+1}-\phi_h(s)) }, \concat{1\theta}}$ corresponds to the ``temporal difference'' error of value function $v_\theta$ at state-action pair $(s,a)$ \citep{sutton1988learning}.} (\cref{alg:approx-measure}), which produces an  estimate of $\overline{ r_{sa} (P_{sa}\phi_{h+1}-\phi_h(s)) }$ for all actions $a \in \cA$.

The {\tt Init} function uses
\begin{align}\label{eq:sol}
\sol\left(\Delta_1, \dots, \Delta_\tau \right) = \left\{ \theta\in\bR^d \,:\, \norm{\theta}_2\le \cBtheta, \forall i\in[\tau]\,: \,\, \left|\ip{\Delta_i,\, \otimes_{a\in[A]}\concat{1\theta}}\right|\le \frac{H^A\epsilon}{2\sqrt{E_d}} \right\} \,.
\end{align}
where $\varepsilon$ is a function of the target suboptimality and $E_d = \ordot\left(d^A A\right)$,
defined in \cref{eq:eddef},
is an upper bound on the
 the eluder dimension of a tensorized clipped-linear function class
 (cf. \cref{eq:tensorizedclass}).
The $\sol(\cdot)$ set stands for the successfully refined sets $\Theta$ of the previous section and its arguments $\Delta_i\in \R^{(d+1)^A}$ correspond to estimates of $\otimes_{a\in \cA}\overline{ r_{sa} (P_{sa}\phi_{h+1}-\phi_h(s)) }$ for the various states $s$ and stages $h$ where the algorithm detects a failure of the consistency test it runs. Estimates of these in {\tt Init} are obtained by calls to  \approxmeasure.

\begin{algorithm}[t]
\caption{\tensorplan.{\tt Init}}\label{alg:global}
\begin{algorithmic}[1]
\State \textbf{Inputs:}
$d,A,H, \simm,s_0,\phi_1(s_0),\delta$
\State $\bfitDelta\gets\{\}$ \Comment{$\bfitDelta$ is a list} \label{line:x-def}
\State Initialize $\zeta, \epsilon,n_1,n_2,n_3$ via equations \eqref{eq:algzetadef}, \eqref{eq:algepsdef}, \eqref{eq:algn1def}, \eqref{eq:algn2def}, \eqref{eq:algn3def}, respectively.
\For{$\tau=1$ to $E_d+2$}
	\State
	Choose any $\theta_\tau \in\argmax_{\theta\in\sol(\bfitDelta)} \ip{\phi_1(s_0), \theta}$
		\Comment{Optimistic choice} \label{line:new-theta} \label{line:new-iter}
	\State $\CleanTest \gets\true$
	\For{$t=1$ to $n_1$}			\Comment{$n_1$ rollouts with $\theta_{\tau}$-induced policy}
		\State  $S_{\tau t1}=s_0$ \Comment{Initialize rollout}
		\For{$j=1$ to $H$} \Comment{Stages in episode}
			\State $\ld_{\tau tj, \cdot} \gets
			\text{\approxmeasure}(
			S_{\tau tj},j,\phi_j(S_{\tau tj}),A,n_2,\simm
			)$ \label{line:avg-calc1}
			\If{
			$\CleanTest$ and
			$\min_{a\in[A]} \left|\ip{\ld_{\tau tja},\concat{1\theta_\tau}}\right|>\tfrac{\delta}{4H}$}\label{line:consistency-test}
			\Comment{Consistency failure?}
				\State $\hd_{\tau tj,\cdot}\gets \text{\approxmeasure}(
				S_{\tau tj},j,\phi_j(S_{\tau tj}),A,n_3,\simm
				)$ \label{line:avg-calc2} \Comment {Refined data}
				\State $\bfitDelta\text{.append}\left(\otimes_{a \in [A]}\hd_{\tau tja}\right)$ \label{line:new-eluder-element} \Comment{Save failure data}
				\State $\CleanTest \gets \false$ \Comment{Not clean anymore}
			\EndIf
            \State $A_{\tau tj}\gets \argmin_{a\in[A]} \left|\ip{\ld_{\tau tja},\concat{1\theta_\tau}}\right|$ \label{line:action-choice} \Comment{Find most consistent action}
			\State $(R_{\tau tj},S_{\tau tj+1},\phi_{j+1}(S_{\tau tj+1})) \gets \simulatesc(S_{\tau tj},j, A_{\tau tj})$
			 \Comment{Roll forward} \label{line:simulate-choice}
		\EndFor
	\EndFor
	\State \textbf{if} $\CleanTest$ \textbf{then} \Break \Comment{Success?} \label{line:cleantest-break}
\EndFor
\State Save into global memory $\thetafinal\gets\theta_\tau$\label{line:return}
\end{algorithmic}
\end{algorithm}

Note that {\tt Init} as described continues to generate rollout data even after a consistency test fails. This is clearly superfluous and in an optimized implementation one could break out of the test loop to generate the next candidate immediately after a failure happens. The only reason the algorithm is described in the way it is done here is because this allows for a cleaner analysis: every policy will have access to data from $n_1$ rollouts, even if the policy fails a consistency test. %

\begin{remark}
The reader might wonder why \tensorplan follows the most consistent action in \cref{line2:action-choice} of {\tt GetAction}, instead of
the best action according to its $\thetafinal$, which would be $\argmax_{a\in[A]}\ip{\ld_{a},\concat{1\thetafinal}}$.
Indeed, a practical implementation might adopt this, together with the same change to \cref{line:action-choice} of {\tt Init}, and a strengthening of the consistency test of {\tt Init}'s \cref{line:consistency-test} to require that \emph{the best} action (according to $\theta_\tau$) be consistent, instead of \emph{any} action. 
This test would fail if $\left|\max_{a\in[A]} \ip{\ld_{\tau tja},\concat{1\theta_\tau}}\right|>\tfrac{\delta}{4H}$.
One might hope that this strengthened consistency test improves sample efficiency, and indeed the proofs go through (giving the same query complexity bounds), albeit with a significant weakening of the final guarantee: this version of \tensorplan could \emph{only} compete with optimal policies that are realizable, instead of the best of \emph{all} realizable DML policies.
\tensorplan, as presented, is able to compete with the latter, with its only source of pressure to do well coming from the optimistic choice of $\theta_\tau$ in \cref{line:new-theta} of {\tt Init}.
\end{remark}

The following theorem gives a query complexity guarantee on using \tensorplan to find a near-optimal policy. The precise values of $\zeta, \varepsilon, n_1, n_2,$ and $n_3$ mentioned in the theorem can be found in \cref{app:proofs}.
For the theorem statement recall that $\cBtheta$ is the bound on the $2$-norm of value-function parameter vectors that the algorithm competes with.
\begin{restatable}{theorem}{mainub}\label{thm:main}
\label{thm:ub}
For any $\delta>0$ and $B>0$,
there exists values of $\zeta, \varepsilon, n_1, n_2,$ and $n_3$ such that the
\tensorplan algorithm (\cref{alg:local}) is $(\delta,\cBtheta)$-sound (\cref{def:sound}) with misspecification $\eta=0$ and simulator accuracy $\lambda\le\epsilon/(4\sqrt{E_d})=\ordot\left(\left(\frac{\delta}{12\sqrt{d}H^2}\right)^A/\sqrt{A}\right)$ for the $H$-horizon planning problem with worst-case per-episode query-cost %
\[
\ordot\left(d^A A^4 \cBtheta^2/\delta^2 \left(H^5\cBtheta^2d/\delta^2+ d^AAH^{4(A+1)}12^{2A}/\delta^{2A}\right)\right)
= \poly\left(\left(dH/\delta\right)^A,B\right)
\,.
\]
\end{restatable}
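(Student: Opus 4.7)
The plan is to combine four ingredients: (i) a uniform concentration bound that simultaneously controls all empirical estimates produced by \approxmeasure, (ii) an eluder-dimension argument that bounds how many times \cref{line:consistency-test} of \tensorplan.{\tt Init} can fire, (iii) a telescoping/optimism argument relating the value of the induced policy to $\ip{\phi_1(s_0),\theta_\tau}$ once a clean rollout block is observed, and (iv) a straightforward query count.

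First, I would define a high-probability \emph{good event} $\mathcal{E}$ under which every call to \approxmeasure with budget $n_2$ returns a vector $\ld_{\cdot}$ satisfying $\|\ld_a - \overline{r_{sa}(P_{sa}\phi_{h+1}-\phi_h(s))}\|_\infty \le \xi_2$, and every call with budget $n_3$ returns a vector $\hd_{\cdot}$ with accuracy $\xi_3 \ll \xi_2$; by Hoeffding plus a union bound over the (polynomially many) calls, $n_2,n_3$ can be chosen so that $\Probab(\mathcal{E}) \ge 1-\zeta$ with the stated polynomial sample sizes. I would then set $\varepsilon$ small enough that the slack $H^A\epsilon/(2\sqrt{E_d})$ in $\sol$ (\cref{eq:sol}) absorbs the estimation error inflated by the tensor structure, which is where the exponents $A$ appear. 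Under $\mathcal{E}$, the ``invariant'' $\theta^\circ \in \sol(\bfitDelta)$ holds throughout the run: at every append, the corresponding true TD vector $T_{s,h} = \otimes_a \overline{r_{S,a}(P_{S,a}\phi_{h+1}-\phi_h(S))}$ satisfies $\ip{T_{s,h}, \otimes_a \overline{1\theta^\circ}} = 0$, and $\hd$ is close enough to $T_{s,h}$ that the tolerance in $\sol$ is still respected. Simulator inaccuracy $\lambda$ enters additively into the same tolerance and so only needs to be $O(\epsilon/\sqrt{E_d})$.

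Second, to bound the number of loop iterations I would invoke the eluder dimension bound for the tensorized clipped-linear class of dimension $(d+1)^A$. Each time \cref{line:consistency-test} fires, the appended vector $\otimes_a \hd$ is far from being orthogonal to $\otimes_a\overline{1\theta_\tau}$ (by more than the slack in $\sol$), which is precisely an eluder-style witness; hence after at most $E_d$ failures, any further optimistic $\theta$ must pass the noisy consistency test at every state encountered in \emph{all} $n_1$ rollouts with high probability. Choosing $n_1$ large enough (Hoeffding on the per-rollout probability that any stage triggers a failure) then guarantees that within $E_d+1$ outer iterations the \Break in \cref{line:cleantest-break} is reached. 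Combined with the invariant, the clean $\theta_\tau$ satisfies $\ip{\phi_1(s_0),\theta_\tau} \ge \ip{\phi_1(s_0),\theta^\circ} \ge v^\circ_{B,0}(s_0)$ by optimism.

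Third, to translate this into a value guarantee for the induced policy, I would run the telescoping argument sketched in the text: for any trajectory of the policy that selects $\argmin_a |\ip{\ld_{\cdot a},\overline{1\thetafinal}}|$ at each step, the per-step consistency error is at most $\delta/(4H)+O(\xi_2)$ with high probability, and summing over $H$ steps plus a martingale/Azuma argument on the noise in the rewards yields $v^\pi_1(s_0)\ge \ip{\phi_1(s_0),\thetafinal}-\delta$. This is where I would be most careful: the induced policy is \emph{reactive} (driven by fresh \approxmeasure estimates in {\tt GetAction}), so the good event must cover all \emph{future} {\tt GetAction} calls too; this is handled by taking a union bound over the (polynomial) total number of \approxmeasure invocations across the entire $H$-step episode. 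The hardest bookkeeping will be propagating the tensorization factor $\sqrt{E_d}$ through the noise tolerance without losing an exponent — making sure the per-coordinate accuracy of $\hd$ is $\Theta(\epsilon/\sqrt{E_d})$ so that $\|\otimes_a \hd - T\|$ is $O(\epsilon)$ after multiplying out the $A$-fold product.

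Finally, the query count is a direct product: at most $E_d+2 = \tilde O(d^A A)$ outer iterations, each with $n_1$ rollouts of $H$ stages, each stage invoking \approxmeasure with $n_2$ samples per action (plus occasionally $n_3$ samples once), giving $\tilde O((E_d) \cdot n_1 \cdot H \cdot A \cdot (n_2+n_3))$ queries inside {\tt Init}, plus $\tilde O(H A n_2)$ queries amortized across {\tt GetAction} calls during execution, matching the claimed $\poly((dH/\delta)^A,B)$ bound after substituting the specific $n_1,n_2,n_3$.
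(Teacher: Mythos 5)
Your proposal is correct and follows essentially the same route as the paper: the four ingredients map directly onto the paper's Appendix C (concentration bounds and the invariant $\theta^\circ\in\sol(\bfitDelta)$ in Section C.1, the eluder-dimension bound on $\taufinal$ in C.2, the telescoping-plus-optimism value bound in C.3, and the final query count). The one place where your instantiation is needlessly lossy is the plan to control $\hd$ per-coordinate at scale $\Theta(\epsilon/\sqrt{E_d})$ so that $\|\otimes_a\hd - T\|$ is small — pairing that with $\|M_{\theta^\circ}\|\le(\cBtheta+1)^A$ would cost you a $B^{\Theta(A)}$ factor in $n_3$; the paper instead applies Hoeffding directly to the scalar $\ip{\hd_a,\concat{1\theta^\circ}}$ (bounded by $H+1$ independently of $B$) and exploits that the factor for the consistent action is near zero, which keeps the $B$-dependence at a fixed degree.
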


\begin{corollary}
When the optimal value function $v^\star$ is linearly realizable with the given feature map with misspecification $\eta=0$, then \tensorplan, given access to a simulator with accuracy $\lambda\le\epsilon/(4\sqrt{E_d})$  induces a policy $\pi$ within the budget constraints of \cref{thm:main} for which $v^\pi_1(s_0)\ge v^\star_1(s_0)-\delta$.
\end{corollary}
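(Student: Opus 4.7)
The corollary follows in one line from \cref{thm:main}: picking $B\ge\norm{\theta^\star}_2$, if $v^\star_h(s)=\ip{\phi_h(s),\theta^\star}$, then by the Bellman optimality equations \eqref{eq:bopt} there exists an optimal memoryless deterministic policy $\pi^\star$ with $v^{\pi^\star}=v^\star$, so $\pi^\star$ witnesses $B$-bounded $v$-linear realizability at $\eta=0$ and $v^\circ_{B,0}\equiv v^\star$; \cref{def:sound} then specialises to $v^\pi_1(s_0)\ge v^\star_1(s_0)-\delta$. So the real work is \cref{thm:main} itself, and the rest of this proposal sketches that proof.

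The backbone is the high-probability invariant $\Theta^\circ\subseteq\sol(\bfitDelta)$ maintained throughout \tensorplan.{\tt Init}, where $\Theta^\circ$ collects the $B$-bounded parameter vectors that realize some $v$-linearly realizable memoryless deterministic policy. I would first pick $n_2$ and $n_3$ via Hoeffding and a union bound over all (round, rollout, stage, action) tuples so that, outside a failure event of probability $\le\zeta$, every $\ld_{\tau tja}$ and $\hd_{\tau tja}$ lies within the target accuracy of its mean $\overline{r_{sa}(P_{sa}\phi_{h+1}-\phi_h(s))}$, while also absorbing the simulator bias $\lambda$. Under this good event, identity \eqref{eq:flat-ip} converts the per-action Bellman residuals at a state into a single tensor inner product $\ip{\otimes_a\hd_{\tau tja},\otimes_a\overline{1\theta}}$; since any $\theta\in\Theta^\circ$ admits a zero-residual action at every state, this quantity sits within the slack of \cref{eq:sol}, preserving the invariant whenever a new tensor is appended in \cref{line:new-eluder-element}.

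Next I would bound the number of outer iterations via an eluder-dimension argument. The relevant clipped-linear class $T\mapsto[\ip{T,M_\theta}]_{-c}^{c}$ with $M_\theta=\otimes_{a\in[A]}\overline{1\theta}$ lives in a $d^A$-dimensional ambient space, and the standard linear eluder bound \citep{russo2014learning} gives $E_d=\ordot(d^A A)$. Each triggered consistency failure certifies that the optimistic $\theta_\tau$ is $\varepsilon$-distinguishable from every element of $\Theta^\circ$ at the newly appended tensor, so at most $E_d+1$ failures can occur and the algorithm must break out at \cref{line:cleantest-break} within $E_d+2$ iterations. For the clean round $\tau$ that survives, optimism (\cref{line:new-theta}) together with the invariant gives $\ip{\phi_1(s_0),\theta_\tau}\ge v^\circ_{B,0}(s_0)$, and a Bellman telescoping along each rollout — exploiting that the action picked at every stage obeys $\abs{\ip{\ld_{\tau tja},\concat{1\theta_\tau}}}\le\delta/(4H)$ — shows that the induced policy's expected $H$-return differs from $\ip{\phi_1(s_0),\theta_\tau}$ by at most $\delta$, up to rollout-concentration terms handled by taking $n_1=\ordot(H^2/\delta^2)$. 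The same telescoping transfers verbatim to the online trajectory produced by subsequent \texttt{GetAction} calls, yielding soundness from every starting state.

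The query cost is then bookkeeping: {\tt Init} spends $(E_d+2)\cdot n_1\cdot H\cdot A\cdot(n_2+n_3)$ queries and each \texttt{GetAction} adds $A\cdot n_2$, which, with $E_d=\ordot(d^A A)$ and the concentration-dictated $n_1,n_2,n_3,\varepsilon$, gives the quoted $\poly((dH/\delta)^A,B)$ bound. I expect the main obstacle to be the joint calibration of $n_1,n_2,n_3,\varepsilon,\zeta$ and the simulator accuracy $\lambda\le\varepsilon/(4\sqrt{E_d})$: the slack inside $\sol(\cdot)$ sits at the top of an $A$-fold tensor whose magnitude scales like $(1+B^2)^{A/2}$, so estimation error and simulator bias propagate multiplicatively and must be simultaneously tight enough to preserve the invariant yet loose enough to fit the polynomial budget. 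The tensorisation-to-eluder reduction is structurally clean; the real difficulty is tracking constants through this $A$-th-power geometry.
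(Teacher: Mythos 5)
Your one-line derivation of the corollary from \cref{thm:main} is exactly the paper's intended argument (the text preceding \cref{def:sound} already notes that $v^\circ_{B,\eta}=v^\star$ whenever $v^\star$ is realizable and $B\ge\norm{\theta^\star}_2$), and your sketch of \cref{thm:main} itself mirrors the paper's proof in Appendix C: the same concentration lemmas, the invariant $\theta^\circ\in\sol(X)$, the eluder-dimension bound on the number of outer iterations, the telescoping value bound, and the query-cost bookkeeping. This is correct and essentially the same approach; the only cosmetic difference is that the paper clips the tensor input rather than the output of the linear functional in its eluder class, which changes nothing in the argument.
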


\begin{proof}[Proof (of Theorem \ref{thm:main})]
We provide here a very brief sketch, and defer the full proof to Appendix \ref{app:proofs}.
The proof proceeds in a few steps. First, fix any starting state $s_0\in \cS$ and any $\theta^\circ\in\Theta^\circ$.
\cref{sec:concentration} establishes that
despite the simulator's inaccuracy,
the estimates $\hd$ and $\ld$ are close to their respective expected values  (\cref{lem:d-measured-accurately-in-any-dir})
and that $\ip{\ld,\theta^\circ}$ is close to its expected value (\cref{lem:d-measured-accurately-in-thetastar}).
This entails that $\theta^\circ$ does not get eliminated from the solution set (\cref{lem:theta-star-in-sol}). In \cref{sec:eluder-bound}, we use the eluder dimension to bound the maximal length of $X$ (essentially, the list of states where consistency is broken). It follows that, with high probability, the iteration over $\tau$ will be exited in \cref{line:cleantest-break} with $\tt CleanTest$ being true for $\tau \le E_d+1$. The last subsection (\cref{sec:value-bound}) bounds the suboptimality of the policy induced by $\thetafinal$ in terms of the inner product between $\thetafinal$ and the measured TD vectors (\cref{lem:value-accurate}). We then bound these suboptimalities by the desired suboptimality (\cref{cor:accurate-return}) and finally establish in \cref{cor:optimal-return} that the policy induced by the planner is $\delta$-optimal compared to $v_1(s_0;\theta^\circ)$. Since this argument holds for any $s_0\in\cS$ and $\theta^\circ\in\Theta^\circ$, the planner is $(\delta,\cBtheta)$-sound according to \cref{def:sound}.
\end{proof}

Our next theorem generalizes the previous results to the misspecified case (ie. $\eta>0$) by trading off simulator accuracy for misspecification. Formally, we provide a reduction to the realizable case and run \tensorplan with a slightly modified simulation oracle $\simulatesc'$ which requires no additional information beyond that provided by the original simulator. The proof is deferred to \cref{app:proof-misspecification}. The main idea of the proof is to define an alternate MDP with an expanded state space where states are indexed by which stage they belong to so that the misspecification error of a target policy can be ``pushed'' into the rewards of the new MDP. This way, the target policy will not have misspecification errors. The simulator for the new MDP still reports the rewards from the original MDP, but this is allowed since the previous result was stated for the case when the simulator introduces (small) errors when reporting the rewards.
\begin{restatable}{theorem}{thmmisspec}\label{thm:misspecification}
For any $\delta,B>0$,
\tensorplan is $(\delta,\cBtheta)$-sound with misspecification $\eta\le\epsilon/(12\sqrt{E_d})$ and simulator accuracy $\lambda\le\epsilon/(12\sqrt{E_d})$
with worst-case per-episode query-cost %
$
\poly\left(\left(dH/\delta\right)^A,B\right)
$, when run with input $\delta'=0.98\delta$ and simulation oracle $\simulatesc'$.
\end{restatable}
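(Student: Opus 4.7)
The plan is to reduce the misspecified setting to the exactly-realizable one already handled by \cref{thm:main}. The key observation is that, after augmenting the state space with the current stage index, the misspecification of any fixed target policy $\pi$ can be ``absorbed'' into stage-dependent modifications of the reward at a cost of at most $2\eta$ per step. The simulator $\simm'$ is then simply the original simulator wrapped to augment each returned state with the next stage index and to pass rewards through unchanged; relative to the modified-reward MDP this wrapped simulator behaves as a $(\lambda+2\eta)$-accurate one, and the arithmetic $\lambda+2\eta\le \epsilon/(12\sqrt{E_d})+2\epsilon/(12\sqrt{E_d}) = \epsilon/(4\sqrt{E_d})$ matches exactly the accuracy budget required by \cref{thm:main}.

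Concretely, I would define $\sM'$ on state space $\cS\times[H+1]$, with homogeneous feature map $\phi'(s,h)=\phi_h(s)$ and with transitions that, from $(s,h)$ under action $a$, move to $(S',h+1)$ for $S'\sim P_{s,a}$. For any deterministic memoryless policy $\pi$ of $\sM$ that is $B$-boundedly $v$-linearly realizable under $\phi$ with misspecification $\eta$ and parameter $\theta$, I define modified rewards by $r'_\pi((s,h),\pi^{(h)}(s)) = \ip{\phi_h(s),\theta}-P_{s,\pi^{(h)}(s)}\ip{\phi_{h+1}(\cdot),\theta}$ and $r'_\pi((s,h),a)=r_{s,a}$ for every $a\neq\pi^{(h)}(s)$. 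A short computation using the Bellman equation for $v^\pi$ and $|v^\pi_h-\ip{\phi_h,\theta}|\le\eta$ yields $|r'_\pi((s,h),a)-r_{s,a}|\le 2\eta$ for all $(s,h,a)$; by construction $\pi$ becomes \emph{exactly} $v$-linearly realizable in $(\sM',r'_\pi)$ under $\phi'$ with the same parameter $\theta$, which still has $\norm{\theta}_2\le B$.

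Next, I would apply \cref{thm:main} to the instance $(\sM',r'_\pi,\phi')$ with input $\delta'=0.98\delta$, treating $\simm'$ as its simulator. Its accuracy relative to $r'_\pi$ is $\lambda+2\eta\le\epsilon/(4\sqrt{E_d})$, so \tensorplan induces a policy $\hat\pi$ satisfying $v^{\hat\pi}_{(\sM',r'_\pi),1}((s,1))\ge \ip{\phi_1(s),\theta}-\delta'$ for every $s\in\cS$, because $\pi$ being exactly realizable in $(\sM',r'_\pi)$ makes the competing value there at least $\ip{\phi_1(s),\theta}$. Transferring back to $\sM$: along any length-$H$ trajectory of $\hat\pi$, the per-episode reward discrepancy between $\sM'_\pi$ and $\sM$ is bounded by $2\eta H$, and realizability in $\sM$ gives $\ip{\phi_1(s),\theta}\ge v^\pi_1(s)-\eta$, so $v^{\hat\pi}_1(s)\ge v^\pi_1(s)-\delta'-(2H+1)\eta$. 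The bound $\eta\le\epsilon/(12\sqrt{E_d})$, combined with the way $\epsilon$ depends on $\delta'$ in \cref{thm:main}, is tight enough to force $(2H+1)\eta\le 0.02\delta$, making the total suboptimality at most $\delta$; the polynomial query-cost is inherited from \cref{thm:main} applied with $\delta'$.

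The main subtlety --- and the step that requires the most care --- is that $\sM'_\pi$ and its reward function depend on the unknown competing policy $\pi$, yet the policy $\hat\pi$ that \tensorplan outputs is constructed without any knowledge of $\pi$. This is in fact harmless: the construction of $\sM'_\pi$ lives only inside the analysis, and since \tensorplan sees only $\simm'$ (which does not depend on $\pi$), the \emph{same} induced $\hat\pi$ satisfies the above chain of inequalities for \emph{every} $B$-boundedly $\eta$-realizable $\pi$ simultaneously. Taking the supremum over all such $\pi$ on the right-hand side yields $v^{\hat\pi}_1(s)\ge v^\circ_{B,\eta}(s)-\delta$ pointwise in $s$, which is exactly $(\delta,B)$-soundness.
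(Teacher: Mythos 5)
Your reduction is essentially the same as the paper's: augment states with the stage index, absorb the misspecification of the fixed competitor $\pi$ into stage-dependent reward shifts of magnitude at most $2\eta$, observe that the pass-through simulator is then $(\lambda+2\eta)$-accurate for the modified MDP, apply the realizable-case result with $\delta'=0.98\delta$, and transfer back paying $O(H\eta)$. Your reward modification (changing only the reward of the action $\pi^{(h)}(s)$, rather than shifting all actions by the common constant $z(s,h)$ as the paper does) is a valid variant, and your closing remark about why the $\pi$-dependence of the auxiliary MDP is harmless matches the paper's quantifier structure.

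There is, however, a genuine gap: you invoke \cref{thm:main} as a black box on $(\sM',r'_\pi,\phi')$, but the constructed instance violates two of its hypotheses. First, the modified rewards lie in $[-2\eta,1+2\eta]\not\subseteq[0,1]$, so the reward-boundedness assumption fails (and the simulator's clipping to $[0,1]$ is calibrated to the original rewards, not to $r'_\pi$). Second, and more substantively, $\pi$ is \emph{not} exactly $v$-linearly realizable in $\sM'$ in the sense of \cref{def:v-realizable-policies}: with $\phi'_j((s,h))=\phi_h(s)$, exactness $v'^\pi_j((s,h))=\ip{\phi_h(s),\theta}$ holds only on the ``diagonal'' $j=h$ (off the diagonal the telescoping sum terminates at the wrong stage and the terminal feature is not zero), whereas the definition quantifies over all $j\in[H]$ and all states. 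The paper resolves both issues with \cref{claim:mainp}, which re-opens the proof of \cref{thm:main} to (i) allow rewards in $[-2,2]$ and (ii) weaken realizability to the sets $\cS_h$ of states reachable in $h-1$ steps from the permitted initial states, and then checks that \tensorplan only ever forms consistency constraints at such reachable states (so \cref{lem:theta-star-in-sol} survives). Your argument needs this extra step; without it the appeal to \cref{thm:main} is not justified. The fix is routine but it is not a black-box application.
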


\subsection{Discounted MDPs}

In the discounted MDP setting, instead of maximizing the expected value of the reward $\sum_{h\in[H]} R_h$ over a horizon $H$,
the goal of the agent is to maximize the expected value of the
discounted total reward, $\sum_{h\in\N_+} \gamma^{h-1} R_h$,
over an infinite horizon, where $0\le \gamma<1$ is a fixed discount factor, given to the agent.
The value function for a policy $\pi$, $v^\pi:\cS\rightarrow \R$ is defined as $v^\pi(s)=r_\pi(s)+\gamma P_{sa}v^\pi$.
The stage index $h$ is dropped from the feature mapping ($\phi: \cS\rightarrow \R^d$), and the definition of v-linearly realizable policies (\cref{def:v-realizable-policies}) changes from requiring $\left|v^\pi_h(s) - \ip{\phi_h(s),\theta}\right|\le \eta$ to requiring
\[
\left|v^\pi(s) - \ip{\phi(s),\theta}\right|\le \eta \quad\quad \text{for all }s\in \cS\,.
\]
Soundness is otherwise defined identically to the $H$-horizon case, except for swapping the value function to $v^\pi$. Importantly, value guarantees are only required for the initial state the planner is called with, and not for every state that the planner ever encounters. 
As the episodes are infinitely long in this setting, we use the per-state (instead of per-episode) query-cost.

We use a reduction of the discounted case to the finite-horizon case with an ``effective horizon'' $\effectivehorizon$. 
Our next theorem shows that the guarantees of \tensorplan in the $\effectivehorizon$-horizon setting transfer to the discounted setting if it is run with a slightly modified simulation oracle $\simulatesc^{\gamma,\delta}$, which once again does not require any additional information beyond that of the original simulation oracle. 
As this is a reduction, the input $h$ given to \tensorplan's {\tt GetAction} should be incremented for each transition, exactly as in the finite-horizon case.
The definition of $\effectivehorizon$ and $\simulatesc^{\gamma,\delta}$, as well as the proof can be found in \cref{app:proof-discounted}.
\begin{restatable}{theorem}{thmdiscounted}\label{thm:discounted}
For any $\delta,B>0$,
\tensorplan is $(\delta,\cBtheta)$-sound
for discounted MDPs with discount factor $0\le\gamma<1$,
with misspecification $\eta\le\epsilon/(24\sqrt{E_d})$ and simulator accuracy $\lambda\le\epsilon/(12\sqrt{E_d})$,
with worst-case per-state query-cost %
$
\poly\left(\left(d\effectivehorizon/\delta\right)^A,B\right)
$, when run with input $\delta'=0.98\delta$ and simulation oracle $\simulatesc^{\gamma,\delta}$.
\end{restatable}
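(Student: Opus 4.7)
The plan is to reduce to the finite-horizon setting and invoke \cref{thm:misspecification}. Fix an effective horizon $\effectivehorizon$ large enough that $\gamma^{\effectivehorizon}/(1-\gamma) \le \min\{\epsilon/(24\sqrt{E_d}),\,0.02\delta\}$; since this tail decays geometrically in $\effectivehorizon$, a choice of order $\log(1/((1-\gamma)\min\{\delta,\epsilon/\sqrt{E_d}\}))/(1-\gamma)$ suffices. The modified simulator $\simulatesc^{\gamma,\delta}$, on input $(s,h,a)$, calls the underlying discounted simulator once and returns the rescaled triple $(\gamma^{h-1}R,\,S',\,\gamma^h\phi(S'))$; together with providing $\phi(s_0)=\gamma^0\phi(s_0)$ as the initial feature, this presents a featurized $\effectivehorizon$-horizon MDP with rewards in $[0,1]$, feature $2$-norms bounded by $1$ (since $\gamma^{h-1}\le1$), and simulator accuracy at most $\lambda$ (rescaling by a factor in $[0,1]$ only contracts perturbations).

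The key reduction lemma is that any stationary MLD policy $\pi$ which is $(B,\eta)$-$v$-linearly realizable in the discounted MDP via $\theta$ is, with the same $\theta$, $B$-boundedly realizable in the reduced $\effectivehorizon$-horizon MDP with misspecification $\eta+\gamma^{\effectivehorizon}/(1-\gamma)$. By stationarity, the stage-$h$ finite-horizon value of $\pi$ under the rescaled rewards equals $\gamma^{h-1}$ times the $(\effectivehorizon-h+1)$-step truncation of $v^\pi(s)$; bounding the truncation error by $\gamma^{\effectivehorizon-h+1}/(1-\gamma)$, bounding the discounted misspecification by $\eta$, and applying the triangle inequality gives the claim (using $\gamma^{h-1}\cdot\gamma^{\effectivehorizon-h+1}=\gamma^{\effectivehorizon}$). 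Our choice of $\effectivehorizon$ together with the assumption $\eta\le\epsilon/(24\sqrt{E_d})$ makes the total finite-horizon misspecification at most $\epsilon/(12\sqrt{E_d})$, matching the hypothesis of \cref{thm:misspecification}.

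Applying \cref{thm:misspecification} with $\delta'=0.98\delta$ therefore yields an induced policy $\pi$ that is within $\delta'$ of the $\effectivehorizon$-horizon $\phi$-compatible optimal value at $s_0$ (at misspecification $\eta+\gamma^{\effectivehorizon}/(1-\gamma)$). To lift this to the discounted MDP, I chain two inequalities: non-negativity of rewards gives $v^\pi(s_0)\ge v^\pi_1(s_0)$, and the reduction lemma makes every $(B,\eta)$-discounted-realizable competitor $\pi'$ admissible in the finite-horizon supremum with $v^{\pi'}_1(s_0)\ge v^{\pi'}(s_0)-\gamma^{\effectivehorizon}/(1-\gamma)$. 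Taking the supremum over such $\pi'$ gives $v^\pi(s_0)\ge v^\circ_{B,\eta}(s_0)-\delta'-\gamma^{\effectivehorizon}/(1-\gamma)\ge v^\circ_{B,\eta}(s_0)-\delta$, which is the required $(\delta,B)$-soundness in the discounted setting. The per-state worst-case query-cost is bounded by the per-episode cost of the finite-horizon run (dominated by the $h=1$ call to \tensorplan's {\tt Init}), namely $\poly((d\effectivehorizon/\delta)^A,B)$. The main care point is the reduction lemma: stationarity of MLD policies is essential so that a single $\theta$ certifies realizability at every stage $h$ of the reduced finite-horizon problem simultaneously; everything else is bookkeeping with the geometric tail and the triangle inequality.
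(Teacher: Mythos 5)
Your proposal is correct and follows essentially the same route as the paper: truncate to an effective horizon, rescale rewards and features by $\gamma^{h-1}$, show that every $(B,\eta)$-realizable stationary policy of the discounted MDP remains $B$-boundedly realizable with misspecification $\eta+\gamma^{\effectivehorizon}/(1-\gamma)\le\epsilon/(12\sqrt{E_d})$ in the reduced problem, invoke \cref{thm:misspecification} with $\delta'=0.98\delta$, and lift back via non-negativity of rewards and the geometric tail. The quantitative bookkeeping (the $\gamma^{h-1}\cdot\gamma^{\effectivehorizon-h+1}=\gamma^{\effectivehorizon}$ cancellation, the split of the $0.02\delta$ slack) matches the paper's.

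The one place you gloss over something the paper treats explicitly: a reduced MDP with stage-dependent rewards $\gamma^{h-1}R$ does not fit the paper's fixed-horizon MDP model, in which $Q_{s,a}$ is stage-independent. The paper therefore builds a ``levelled'' MDP on $\cS\times[\effectivehorizon]\cup\{\bot\}$, and because the identity $v'^\pi_h((s,h'))=\gamma^{h'-1}\cdot(\text{truncated }v^\pi(s))$ only holds when the episode stage matches the state's stage counter, it must invoke \cref{claim:mainp2} (soundness restricted to initial states in $\cS\times\{1\}$ and realizability required only on stage-reachable states) rather than \cref{thm:misspecification} verbatim. Your stationarity-based reduction lemma is the right key fact, but to make the appeal to the finite-horizon theorem formally legitimate you either need this levelling-plus-restriction step or an explicit remark that \tensorplan's analysis extends unchanged to stage-dependent reward kernels. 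This is a formalization gap, not a mathematical one.
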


\section{Related work}\label{sec:related}

\paragraph{Planning with generative models}  
The local planning problem was introduced by \cite{kearns2002sparse}, who noticed that a planner which is given a simulator and an input state and asked to return a good action can do so with computation/query time independent of the size of the state space.
However, this runtime is exponential in $H$.
\citet{Munos2014-ty} gives algorithms that use optimism to improve on this exponential runtime
in benign cases.
With linear features, a negative result of \cite{Du_Kakade_Wang_Yan_2019} (see also \citet{van2019comments,LaSzeGe19}) states that an exponential in $\min\{H, d\}$ runtime remains for any planner with constant suboptimality, %
even if the feature map nearly realizes the action-value functions of \emph{all} policies
but the approximation error is $\varepsilon = \Omega(\sqrt{H/d})$.
For target suboptimality $\cO(\sqrt{d} \varepsilon)$,
assuming access to the solution of a feature-map-dependent optimal design problem,
\cite{LaSzeGe19} gives a planner with polynomial computational (and query) complexity.
These results are complemented by the lower bound of \cite{weisz2020exponential}, showing that an exponential lower bound still holds when only $q^\star$ is realizable even if there are no approximation errors.
When only the optimal value function is well-represented, \cite{Sharriff_Szepesvari_2020} give an algorithm for the case where the features are contained in the convex hull of a ``core set'' of feature vectors. Their planning algorithm, which builds on top of \cite{lakshminarayanan2017linearly}, has computational and query cost that scales polynomially in the size of the core set and the other relevant quantities. A similar approach appears in \cite{Zanette_Lazaric_Kochenderfer_Brunskill_2019}. By contrast, we only provide a bound only on the query complexity of our algorithm, but our query complexity is independent of the size of the core set, whose size, in general, is uncontrolled by the other quantities.

\paragraph{Online learning} %
Any online learning algorithm that controls regret can
 also be used for local planning by recommending the most frequently used action at the start state.
Of the sizable literature on online learning with linear function approximation
\citep{
Jiang_Krishnamurthy_Agarwal_Langford_Schapire_2017,
Du_Luo_Wang_Zhang_2019,
Jin_Yang_Wang_Jordan_2019,
wang2019optimism,
Yang_Wang_2019,
ayoub2020model,
modi2020sample,
wang2020provably,
Zanette_Lazaric_Kochenderfer_Brunskill_2020},
the most relevant are the works of \citet{Wen_Roy_2013,Jiang_Krishnamurthy_Agarwal_Langford_Schapire_2017}.
Both works give algorithms for the online setting with realizable function approximation, and are based on the principle of optimism. The algorithm of \citet{Wen_Roy_2013} is restricted to MDPs with deterministic rewards and deterministic transitions, and guarantees that at most $d$ trajectories will be suboptimal. Their proof is based on a similar eluder dimension argument. On the other hand, the algorithm of \citet{Jiang_Krishnamurthy_Agarwal_Langford_Schapire_2017} is restricted to the case when a complexity measure called the Bellman rank is low. In fact, our agnostic guarantee (see Definition~\ref{def:v-realizable-policies}) is related to a similar agnostic guarantee of their algorithm (see their Appendix A.2), where optimism at the initial state allows them to compete with the best policy whose state-value function is realizable. %
Despite the similarities, neither the algorithm nor the analysis applies to our setting.

\section{Conclusions and discussion}
\label{sec:conclusion}

We presented \tensorplan, a provably efficient algorithm for local planning in finite-horizon MDPs which only requires linear realizability of $v^\star$. When the action set is small (i.e. $\cO(1)$), \tensorplan
is the first algorithm that enjoys polynomial query complexity without further assumptions. Our results are also complemented by an exponential lower bound for the analogous problem in the infinite-horizon setting and an extension of the positive result to the near-realizable as well as the discounted setting.

In contrast to ADP-type algorithms \citep{SchSei85},
our algorithm does not use value fitting.
In fact, without stronger assumptions such as a core set, ADP algorithms appear to be susceptible to an exponential blow-up of errors \citep{tsitsiklis1996feature, dann2018oracle, Zanette_Lazaric_Kochenderfer_Brunskill_2019, wang2020statistical, weisz2020exponential}.
For the same reason, our algorithm works with a weaker simulation oracle that provides access only to states that have been encountered previously.
Learning via local consistency (``bootstrapping'') also allows us to provide a more agnostic guarantee, which automatically matches the best realizable value function. However, our lower bound suggests that such bootstrapping procedures are inefficient for the infinite-horizon setting, at least when the number of actions is larger (scaling with the feature space dimensionality). In offline RL, this issue was recently highlighted for the discounted setting in \cite{zanette2020exponential}.

There are several directions for future work. The first would be to understand the computational efficiency of our algorithm, or to find a computationally efficient alternative. The second would be to understand whether the exponential dependence of the query complexity on the number of actions is strictly necessary.
Lastly, it remains to be seen whether polynomial query complexity is possible under $q^\star$ realizability with a small number of actions.

\section*{Acknowledgements}

We thank the anonymous reviewers for their helpful comments. 
This work was done while the authors were visiting the Simons Institute for the Theory of Computing. PA gratefully acknowledges funding from the Natural Sciences and Engineering Research Council (NSERC). CS gratefully acknowledges funding from the Canada CIFAR AI Chairs Program, Amii and NSERC.

\bibliography{linear_fa}

\begin{thebibliography}{28}
\providecommand{\natexlab}[1]{#1}
\providecommand{\url}[1]{\texttt{#1}}
\expandafter\ifx\csname urlstyle\endcsname\relax
  \providecommand{\doi}[1]{doi: #1}\else
  \providecommand{\doi}{doi: \begingroup \urlstyle{rm}\Url}\fi

\bibitem[Ayoub et~al.(2020)Ayoub, Jia, Szepesvari, Wang, and
  Yang]{ayoub2020model}
Alex Ayoub, Zeyu Jia, Csaba Szepesvari, Mengdi Wang, and Lin Yang.
\newblock Model-based reinforcement learning with value-targeted regression.
\newblock In \emph{International Conference on Machine Learning}, pages
  463--474. PMLR, 2020.

\bibitem[Dann et~al.(2018)Dann, Jiang, Krishnamurthy, Agarwal, Langford, and
  Schapire]{dann2018oracle}
Christoph Dann, Nan Jiang, Akshay Krishnamurthy, Alekh Agarwal, John Langford,
  and Robert~E Schapire.
\newblock On oracle-efficient {PAC} {RL} with rich observations.
\newblock \emph{arXiv preprint arXiv:1803.00606}, 2018.

\bibitem[Du et~al.(2019{\natexlab{a}})Du, Kakade, Wang, and
  Yang]{Du_Kakade_Wang_Yan_2019}
Simon~S Du, Sham~M Kakade, Ruosong Wang, and Lin~F Yang.
\newblock Is a good representation sufficient for sample efficient
  reinforcement learning?
\newblock In \emph{International Conference on Learning Representations},
  2019{\natexlab{a}}.

\bibitem[Du et~al.(2019{\natexlab{b}})Du, Luo, Wang, and
  Zhang]{Du_Luo_Wang_Zhang_2019}
Simon~S Du, Yuping Luo, Ruosong Wang, and Hanrui Zhang.
\newblock Provably efficient {$Q$}-learning with function approximation via
  distribution shift error checking oracle.
\newblock In \emph{Advances in Neural Information Processing Systems}, pages
  8060--8070, 2019{\natexlab{b}}.

\bibitem[Jiang et~al.(2017)Jiang, Krishnamurthy, Agarwal, Langford, and
  Schapire]{Jiang_Krishnamurthy_Agarwal_Langford_Schapire_2017}
Nan Jiang, Akshay Krishnamurthy, Alekh Agarwal, John Langford, and Robert~E
  Schapire.
\newblock Contextual decision processes with low {B}ellman rank are
  {PAC}-learnable.
\newblock In \emph{International Conference on Machine Learning}, pages
  1704--1713. PMLR, 2017.

\bibitem[Jin et~al.(2020)Jin, Yang, Wang, and
  Jordan]{Jin_Yang_Wang_Jordan_2019}
Chi Jin, Zhuoran Yang, Zhaoran Wang, and Michael~I Jordan.
\newblock Provably efficient reinforcement learning with linear function
  approximation.
\newblock In \emph{Conference on Learning Theory}, pages 2137--2143, 2020.

\bibitem[Kearns et~al.(2002)Kearns, Mansour, and Ng]{kearns2002sparse}
Michael Kearns, Yishay Mansour, and Andrew~Y Ng.
\newblock A sparse sampling algorithm for near-optimal planning in large
  {M}arkov decision processes.
\newblock \emph{Machine Learning}, 49:\penalty0 193--208, 2002.

\bibitem[Lakshminarayanan et~al.(2017)Lakshminarayanan, Bhatnagar, and
  Szepesv{\'a}ri]{lakshminarayanan2017linearly}
Chandrashekar Lakshminarayanan, Shalabh Bhatnagar, and Csaba Szepesv{\'a}ri.
\newblock A linearly relaxed approximate linear program for {M}arkov decision
  processes.
\newblock \emph{IEEE Transactions on Automatic control}, 63\penalty0
  (4):\penalty0 1185--1191, 2017.

\bibitem[Lattimore et~al.(2020)Lattimore, Szepesv\'ari, and Weisz]{LaSzeGe19}
Tor Lattimore, {\relax Cs}aba Szepesv\'ari, and Gell\'ert Weisz.
\newblock Learning with good feature representations in bandits and in {RL}
  with a generative model.
\newblock In \emph{ICML}, pages 9464--9472, 2020.

\bibitem[Li et~al.(2021)Li, Chen, Chi, Gu, and Wei]{li2021sample}
Gen Li, Yuxin Chen, Yuejie Chi, Yuantao Gu, and Yuting Wei.
\newblock Sample-efficient reinforcement learning is feasible for linearly
  realizable mdps with limited revisiting.
\newblock \emph{arXiv preprint arXiv:2105.08024}, 2021.

\bibitem[Modi et~al.(2020)Modi, Jiang, Tewari, and Singh]{modi2020sample}
Aditya Modi, Nan Jiang, Ambuj Tewari, and Satinder Singh.
\newblock Sample complexity of reinforcement learning using linearly combined
  model ensembles.
\newblock In \emph{International Conference on Artificial Intelligence and
  Statistics}, pages 2010--2020. PMLR, 2020.

\bibitem[Munos(2014)]{Munos2014-ty}
R{\'e}mi Munos.
\newblock From bandits to {Monte-Carlo} tree search: The optimistic principle
  applied to optimization and planning.
\newblock \emph{Foundations and Trends\textregistered{} in Machine Learning},
  7\penalty0 (1):\penalty0 1--129, 2014.

\bibitem[Puterman(1994)]{Put94}
Martin~L. Puterman.
\newblock \emph{{M}arkov Decision Processes: Discrete Stochastic Dynamic
  Programming}.
\newblock Wiley-Interscience, 1994.

\bibitem[Russo and Van~Roy(2014)]{russo2014learning}
Daniel Russo and Benjamin Van~Roy.
\newblock Learning to optimize via posterior sampling.
\newblock \emph{Mathematics of Operations Research}, 39\penalty0 (4):\penalty0
  1221--1243, 2014.

\bibitem[Schweitzer and Seidmann(1985)]{SchSei85}
Paul~J. Schweitzer and Abraham Seidmann.
\newblock Generalized polynomial approximations in {{{M}arkovian}} decision
  processes.
\newblock \emph{Journal of Mathematical Analysis and Applications},
  110\penalty0 (2):\penalty0 568--582, September 1985.

\bibitem[Shariff and Szepesv{\'a}ri(2020)]{Sharriff_Szepesvari_2020}
Roshan Shariff and Csaba Szepesv{\'a}ri.
\newblock Efficient planning in large {MDP}s with weak linear function
  approximation.
\newblock \emph{arXiv preprint arXiv:2007.06184}, 2020.

\bibitem[Sutton(1988)]{sutton1988learning}
Richard~S Sutton.
\newblock Learning to predict by the methods of temporal differences.
\newblock \emph{Machine learning}, 3\penalty0 (1):\penalty0 9--44, 1988.

\bibitem[Tsitsiklis and Van~Roy(1996)]{tsitsiklis1996feature}
John~N Tsitsiklis and Benjamin Van~Roy.
\newblock Feature-based methods for large scale dynamic programming.
\newblock \emph{Machine Learning}, 22\penalty0 (1):\penalty0 59--94, 1996.

\bibitem[Van~Roy and Dong(2019)]{van2019comments}
Benjamin Van~Roy and Shi Dong.
\newblock Comments on the {D}u-{K}akade-{W}ang-{Y}ang lower bounds.
\newblock \emph{arXiv preprint arXiv:1911.07910}, 2019.

\bibitem[Wang et~al.(2020{\natexlab{a}})Wang, Foster, and
  Kakade]{wang2020statistical}
Ruosong Wang, Dean~P. Foster, and Sham~M. Kakade.
\newblock What are the statistical limits of offline {RL} with linear function
  approximation?, 2020{\natexlab{a}}.

\bibitem[Wang et~al.(2020{\natexlab{b}})Wang, Salakhutdinov, and
  Yang]{wang2020provably}
Ruosong Wang, Ruslan Salakhutdinov, and Lin~F Yang.
\newblock Provably efficient reinforcement learning with general value function
  approximation.
\newblock \emph{arXiv preprint arXiv:2005.10804}, 2020{\natexlab{b}}.

\bibitem[Wang et~al.(2019)Wang, Wang, Du, and Krishnamurthy]{wang2019optimism}
Yining Wang, Ruosong Wang, Simon~S Du, and Akshay Krishnamurthy.
\newblock Optimism in reinforcement learning with generalized linear function
  approximation.
\newblock \emph{arXiv preprint arXiv:1912.04136}, 2019.

\bibitem[Weisz et~al.(2020)Weisz, Amortila, and
  Szepesv{\'a}ri]{weisz2020exponential}
Gellert Weisz, Philip Amortila, and Csaba Szepesv{\'a}ri.
\newblock Exponential lower bounds for planning in {MDP}s with
  linearly-realizable optimal action-value functions.
\newblock \emph{arXiv preprint arXiv:2010.01374}, 2020.

\bibitem[Wen and Roy(2013)]{Wen_Roy_2013}
Zheng Wen and Benjamin~Van Roy.
\newblock Efficient exploration and value function generalization in
  deterministic systems.
\newblock In \emph{Advances in Neural Information Processing Systems}, pages
  3021--3029, 2013.

\bibitem[Yang and Wang(2019)]{Yang_Wang_2019}
Lin~F Yang and Mengdi Wang.
\newblock Sample-optimal parametric $q$-learning using linearly additive
  features.
\newblock \emph{arXiv preprint arXiv:1902.04779}, 2019.

\bibitem[Zanette(2020)]{zanette2020exponential}
Andrea Zanette.
\newblock Exponential lower bounds for batch reinforcement learning: Batch {RL}
  can be exponentially harder than online {RL}.
\newblock \emph{arXiv preprint arXiv:2012.08005}, 2020.

\bibitem[Zanette et~al.(2019)Zanette, Lazaric, Kochenderfer, and
  Brunskill]{Zanette_Lazaric_Kochenderfer_Brunskill_2019}
Andrea Zanette, Alessandro Lazaric, Mykel~J Kochenderfer, and Emma Brunskill.
\newblock Limiting extrapolation in linear approximate value iteration.
\newblock In \emph{Advances in Neural Information Processing Systems}, pages
  5615--5624, 2019.

\bibitem[Zanette et~al.(2020)Zanette, Lazaric, Kochenderfer, and
  Brunskill]{Zanette_Lazaric_Kochenderfer_Brunskill_2020}
Andrea Zanette, Alessandro Lazaric, Mykel Kochenderfer, and Emma Brunskill.
\newblock Learning near optimal policies with low inherent {B}ellman error.
\newblock \emph{arXiv:2003.00153 [cs]}, Mar 2020.
\newblock arXiv: 2003.00153.

\end{thebibliography}

\appendix

\section{Discussion of the Interaction Protocol}
\label{apx:inter}
The astute reader may wonder about whether the above description unduly restricts what state-spaces the planners can deal with: After all, the planner needs to be given ``states''.
This issue can be resolved by introducing state-identifiers: The planner and the simulator communicate by passing each other identifiers of states, rather than states themselves. The simulator then needs to be prepared to translate the identifiers to its internal state representation. This way, the planner can interact uniformly with MDPs of all sorts without any restriction on what their state-spaces are.

The reader may also wonder about whether access to states can be altogether avoided. In a way, as we shall see, our planner does not need the full power of the above interface.
In particular,
for planning within an episode starting at some initial state $s_0$,
it is sufficient if the oracle {\em (i)} has an internal state
and provides an interface to:
{\em (ii)} reset its internal state to the initial state $s_0$,
{\em (iii)} forward the internal state to a random next state by feeding it an action;
{\em (iv)} obtain data of the form $(f,X)$ where
		$X=(R_{a,i}+\Lambda_{sa},f_{a,i}')_{a\in \cA, i\in [n]}$
and \emph{where the value of $n$ is provided as an input},
	$f = \phi_h(s)$
and for $a\in\cA,i\in [n]$, $f'_{a,i} = \phi_{h+1}(S_{a,i}')$ where $(R_{a,i},S_{a,i}')_i \sim Q(s,a)$ and $\left|\Lambda_{sa}\right|\le\lambda$,
provided that the oracle's internal state is $S=s$ in stage $h$.

The reader may also be tempted to think that an even weaker interface that replaces {\em (iv)} with the ability to receive $f= \phi_h(s)$ provided that the oracle's internal state is $S=s$ in stage $h$ would be sufficient.
Clearly, this is too weak in the sense that even
the query complexity of the one-step lookahead calculation on the right-hand side of the Bellman optimality equation (\cref{eq:bopt}) would be uncontrolled.

\section{Proof of \cref{thm:lb}}
\label{apx:lbproof}
\begin{proof}
Fix the planner $p$ and $d>0$. For convenience, we will describe the case when the feature-space dimension is $d+1$.
We define a family of featurized MDPs: The MDP mentioned in the theorem statement will be a member of this family.
The MDPs in the family are all deterministic and they share the dynamics (and as such both a state space and action space).
The state space is a regular $d$-dimensional grid with $3^d$ points, say, $\cS = \{-1,0,1\}^d$.
The actions correspond to moving between neighboring states of the grid, or staying in put.
Thus, there are at most $2d+1$ actions: in each state: Incrementing, or decrementing a coordinate, or not changing anything.
For convenience, we will use costs in the description of the MDPs.
The MDPs in the family differ only in the costs assigned to transitions.
In the family there will be $2^d$ MDPs, each MDP defines the problem of getting to a goal state $s^*\in \{-1,1\}^d$ by taking the fewest number of actions.
This is achieved by setting the cost of each action to $1/d$ except that the cost of action `stay' provided that this action is taken in a ``goal state'' in which case the cost is set to zero.
We denote the resulting MDP by $M_{s^\star}$.
Fix $s^\star$. The optimal value function (negative cost) in $M_{s^\star}$ is $v^\star(s) = -\frac1d \norm{s-s^\star}_1$,
which indeed takes values in the $[-2,0]$ interval. %
Since for $x\in \{-1,0,1\}$, $x^\star \in \{-1,1\}$, $|x-x^\star| = 1- x x^\star$,
choosing $\phi(s) = \frac1d \overline{d s}$ we see that with
$\theta^\star = \overline{(-1) \, s^\star}$, $v^\star(s) = \ip{\phi(s),\theta^\star}$, i.e., the optimal value function is realizable under $\phi$. Note that the MDPs in the family not only share the dynamics, but they also share the feature map.

To find the MDP within this family on which planner $p$ is far from optimal we choose $s^\star$ in an adversarial manner.
For this, we define a new MDP,
$M_{\varnothing}$, which shares the dynamics with the previously described MDPs except that here all actions have a cost of $1/d$.
This MDP is used for ``testing'' the planner.
For an MDP $M$ either from the above family or $M_{\varnothing}$,
starting with $S_1=s_0:=(0,\dots,0)$, let
$S_1,Q_1,A_1,S_2,Q_2,A_2,\dots,S_n,Q_n,A_n, \dots$
be the infinite sequence of random elements that describe the data available to the planner while it
is used in $M$. In particular, $S_1$ is the state passed to the planner in the first call of {\tt GetAction},
$Q_1 \in \cup_{p\ge 0} (\cS \times \cA \times \bR \times \cS)^p$ collects the queries sent and the responses received, $A_1$ is the action return by this call of {\tt GetAction}, $S_2$ is the state the MDP transitions to from state $S_1$ on the effect of action $A_1$, etc.
Let $S_1',S_2',S_3',\dots$ collect the \emph{distinct} states in this data in the order that they are encountered.
In particular, $S_1' = S_1$ and if $Q_1 = (\tilde S_1, \tilde A_1, \tilde R_1, \tilde S_2', \dots,
\tilde S_{N_1}, \tilde A_{N_1}, \tilde R_{N_1}, \tilde S_{N_1}')$ with some $N_1\ge 0$ then
(since necessarily $\tilde S_1 = S_1$), $S_2'=\tilde S_2'$ unless $\tilde S_2' = S_1$, etc.

Let $S_{1:m}' = (S_1',\dots,S_m')$.
Slightly abusing notation, we treat this sequence as the set of the elements in it, when convenient.
\newcommand{\PP}{\mathbb{P}}
Let $\PP_{\varnothing}$ denote the probability distribution over the above data induced by interconnecting $p$ and $M_{\varnothing}$ and let $\PP_{s^\star}$ be the same when the MDP is $M_{s^\star}$.

An elementary argument shows that
$\min_{s\in \{-1,1\}^d} \PP_{\varnothing}( s\in S_{1:m} )\le m/2^d$.
Indeed,
$2^d \min_{s\in \{-1,1\}^d} \PP_{\varnothing}( s\in S_{1:m} )
    \le \sum_{s\in \{-1,1\}^d} \PP_{\varnothing}( s\in S_{1:m} )
    \le \sum_s \sum_{i=1}^m \PP_{\varnothing}(s=S_i)
= m$.
Let $s^\star$ be a minimizer of the above probability when $m=2^{d-1}$.
Consider the event $\cE = \{ s^{\star} \not\in S_{1:2^{d-1}} \}$.
By the above, $\PP_{\varnothing}(\cE) \ge 2^{d-1}/2^d = 0.5$.
Then $\PP_{s^\star}(\cE)=\PP_{\varnothing}(\cE)$.
This follows because for either of these probability measures,
$\PP(\cE) = \sum_{s^\star\not\in s_{1:m}} \PP( S_{1:m}'=s_{1:m} )$ and the probabilities in the sum
are identical for $\PP_{s^\star}$ and $\PP_{\varnothing}$ since the MDPs only differ in the cost assigned to the `stay put' action used \emph{at state $s^{\star}$} and the sum is restricted for state-sequences that avoid $s^\star$.

For each call of ${\tt GetAction}$, at most $\simb+1$ distinct states are encountered.
Therefore, ${\tt GetAction}$ is called at least $\lfloor 2^{d-1}/(\simb+1)\rfloor $ times during the time
the first $2^{d-1}$ distinct states are encountered.
On the event $\cE$, during this time, $s^\star$ is not encountered and thus the cost of each action executed is $1/d$.
Since the optimal cost to get to $s^\star$ from $s_0$ is one and all costs are nonnegative,
on $\cE$,
the total cost incurred while following the actions taken by the planner is at least $
\delta_0= \lfloor 2^{d-1}/(d(\simb+1)) \rfloor -1$ more than the optimal cost.
Since $\Probab_{s^\star}(\cE)\ge 0.5$, the suboptimality $\delta$ of the planner is at least $0.5 \delta_0$.
\end{proof}

\section{Proof of \cref{thm:ub}}
\label{app:proofs}

To prove that \tensorplan (\cref{alg:local}) is $(\delta,\cBtheta)$-sound (\cref{def:sound}) for the $H$-dimensional planning problem, we fix $\delta>0$, $B>0$, $H>1$,
a featurized MDP $(\sM,\phi)$ with $1$-bounded feature maps,
a suboptimality target $0<\delta< H$,
and a (starting) state state $s_0 \in \cS$.
We assume that $\delta<H$ as otherwise, for $\delta\ge H$, \cref{eq:soundness-satisfies-gap} trivially holds due to the rewards being bounded in $[0,1]$ (and therefore the values in $[0,H]$).

The precise values of hyperparameters used in \tensorplan will be set to:

\begin{align}
\zeta &= \frac{1}{4H}\delta \label{eq:algzetadef} \\
\epsilon &= \left(\frac{\delta}{12H^2}\right)^A/\left(1+\frac{1}{2\sqrt{E_d}}\right) \label{eq:algepsdef}  \\
n_1 &=  \ceil{\frac{32H^2(1+2\cBtheta)^2}{\delta^2} \log \frac{E_d+1}{\zeta}} \label{eq:algn1def} \\
n_2 &=  \ceil{\frac{1867H^2(\cBtheta+1)^2(d+1)}{2\delta^2}\log (4(E_d+1)n_1HA(d+1)/\zeta)} \label{eq:algn2def} \\
n_3 &= \ceil{\max\left\{n_2, \frac{32(H+1)^2E_d}{\epsilon^2}\log((2(E_d+1)n_1HA))/\zeta\right\}}
\label{eq:algn3def}
\end{align}
We assume $H>1$ for simplicity of presentation, as for $H=1$ the same analysis will apply, replacing $H$ with $H+1$ in the above display for $\epsilon$.

Denote by $\taufinal$ the final value of $\tau$ at the end of \tensorplan.{\tt Init}.
For the proof let $\mathbb{P}$ denote the probability distribution induced by the interconnection
of \tensorplan with the MDP when the initial state of the episode is $s_0$ and the planner is used for the $H$ steps. In particular, $\mathbb{P}$ is defined over
some measurable space $(\Omega,\mathbb{P})$ that carries the random variables
$S_1$, $A_1$, $S_2$, $A_2$, $\dots$, $A_H$, $S_{H+1}$,
where $S_1=s_0$, $S_i \sim P_{A_{i-1}}(S_{i-1})$ for $i>1$, and for $j\in[H]$, $A_j$ is the action returned by {\tt GetAction} when called with $S_j$ and $h=j$.
$(\Omega,\mathbb{P})$ also carries the random variables $\hat{\Delta}$, $\bar{\Delta}$, $\tilde{\Delta}$, and $(S_{\tau t j}, A_{\tau t j})_{\tau\le E_d+2, t\in [n_1],j\in [H]}$ of the \tensorplan algorithm.
For the latter, assume for now that \tensorplan.{\tt Init} does not break out from the loop over $\tau$ when the test fails, but that it keeps running, so that we can refer to $(S_{\tau t j},A_{\tau t j})$ even for $\tau>\taufinal$.
Note that all other quantities that appear in \tensorplan can be written as a function of these.
We denote the expectation operator underlying $\mathbb{P}$ by $\mathbb{E}$.

\subsection{Concentration bounds}\label{sec:concentration}

This section establishes concentration bounds on the estimated difference vectors $\hd$ and $\ld$, and then establishes that the true parameter is unlikely to be eliminated from the solution set.

\begin{restatable}{lemma}{hoeffding}\label{lem:d-measured-accurately-in-any-dir}
If the simulator's accuracy $\lambda\le\frac{\epsilon}{4\sqrt{E_d}}$, then
with $n_2$ samples for $\ld$ and $n_3$ samples for $\hd$, with probability greater than $1-\zeta$, for all $\theta\in\bR^d$ with $\norm{\theta}_2\le \cBtheta$, for all $\tau\in[E_d+1]$, $t\in[n_1]$, $j\in[H]$ and action $a\in[A]$, %
$\ld_{\tau tja}$ and $\hd_{\tau tja}$ satisfy %
\[
\left|\ip{\ld_{\tau tja}-\Delta(S_{\tau tj},a), \concat{1\theta}}\right| \le \delta/(12H)
\,\,\,\,\,\,\,\,\,\,\,\,\text{and}\,\,\,\,\,\,\,\,\,\,\,\,
\left|\ip{\hd_{\tau tja}-\Delta(S_{\tau tj},a), \concat{1\theta}}\right| \le \delta/(12H)\,,
\]
where $\Delta(S_{\tau tj},a) = \concat{ r_{S_{\tau tj}, a} (P_{S_{\tau tj}a}\phi_{j+1}-\phi_j(S_{\tau tj}))}$.
\end{restatable}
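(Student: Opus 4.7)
The plan is to establish coordinate-wise concentration of each $\bar\Delta_{\tau tja}$ and $\hat\Delta_{\tau tja}$ around its conditional mean via Hoeffding's inequality, handle the reward-clipping bias of size at most $\lambda$ separately, pass to an inner-product bound by Cauchy--Schwarz, and finally union-bound over all sampled tuples. Because the dependence on $\theta$ enters only through $\norm{\concat{1\theta}}_2 = \sqrt{1+\cBtheta^2}$ after Cauchy--Schwarz, the uniform-in-$\theta$ statement comes ``for free'' and no covering argument over $\{\theta \in \R^d: \norm{\theta}_2 \le \cBtheta\}$ is required.

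Fix a tuple $(\tau,t,j,a)$ and write $s=S_{\tau tj}$. Conditional on the history up to the invocation of \approxmeasure, the samples $(R_l, S'_l)_{l\in[n]}$ are i.i.d.\ draws from the biased oracle, so for each coordinate $k\in[d+1]$ the sequence $((\tilde\Delta_l)_k)_{l\in[n]}$ is i.i.d.\ with values in $[-2,2]$ (the first entry lies in $[0,1]$, and the remaining entries lie in $[-2,2]$ because $\norm{\phi_j(\cdot)}_2\le 1$). Hoeffding's inequality and a union bound over the $d+1$ coordinates give, with conditional probability at least $1-\zeta'$,
\[
\norm{\bar\Delta_{\tau tja} - \E[\bar\Delta_{\tau tja}]}_\infty \le 2\sqrt{2\log(2(d+1)/\zeta')/n}\,.
\]
The bias of the conditional mean away from $\Delta(s,a)$ lives entirely in the reward coordinate: the feature-difference coordinates are unbiased since $\E\phi_{j+1}(S'_l) = P_{sa}\phi_{j+1}$, while $|\E R_l - r_{sa}| \le \lambda$ follows from $R\in[0,1]$ combined with the clipping $[R+\Lambda_{sa}]_0^1$ (clipping can only bring the perturbed value closer to $R$). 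Hence $\norm{\E\bar\Delta_{\tau tja} - \Delta(s,a)}_2 \le \lambda$.

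Combining these two bounds with Cauchy--Schwarz and $\norm{\cdot}_2 \le \sqrt{d+1}\,\norm{\cdot}_\infty$ yields, uniformly in $\theta$ with $\norm{\theta}_2\le \cBtheta$,
\[
\left|\ip{\bar\Delta_{\tau tja} - \Delta(s,a),\, \concat{1\theta}}\right| \le \Bigl(\sqrt{d+1}\,\norm{\bar\Delta_{\tau tja} - \E\bar\Delta_{\tau tja}}_\infty + \lambda\Bigr)\sqrt{1+\cBtheta^2}\,.
\]
The remainder is bookkeeping: I take a union bound over the $2(d+1)(E_d+1)n_1HA$ coordinate-concentration events (one batch of $n_2$ samples for every $\bar\Delta$ and one batch of $n_3$ samples for every $\hat\Delta$), set $\zeta' = \zeta/(2(d+1)(E_d+1)n_1HA)$, and verify that the choices of $n_2,n_3$ in \cref{eq:algn2def,eq:algn3def} together with the assumption $\lambda \le \epsilon/(4\sqrt{E_d})$ push the right-hand side below $\delta/(12H)$. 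The same calculation with $n_3$ in place of $n_2$ handles $\hat\Delta$. The only real obstacle is the calibration of constants so that the $\lambda\sqrt{1+\cBtheta^2}$ clipping-bias term does not exhaust the $\delta/(12H)$ budget; the hypothesis on $\lambda$ is designed precisely for this, and the rest is standard Hoeffding accounting.
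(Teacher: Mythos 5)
Your proposal follows essentially the same route as the paper's proof: coordinate-wise Hoeffding on the i.i.d.\ samples returned by \approxmeasure, a union bound over the $d+1$ coordinates and over all tuples $(\tau,t,j,a)$ (the paper bounds each coordinate by $3$ rather than distinguishing $[0,1]$ from $[-2,2]$, an immaterial difference), a H\"older/Cauchy--Schwarz step to convert the $\ell^\infty$ deviation into a bound on the inner product with $\concat{1\theta}$ uniformly over $\norm{\theta}_2\le\cBtheta$, and a triangle inequality to absorb the simulator's clipping bias.

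There is one concrete miscalibration that would make your final accounting fail. You correctly observe that the bias vector $\E\ld_{\tau tja}-\Delta(S_{\tau tj},a)$ is supported on the reward coordinate alone and has norm at most $\lambda$, but you then bound its contribution by Cauchy--Schwarz as $\lambda\sqrt{1+\cBtheta^2}$. Since the first coordinate of $\concat{1\theta}$ is exactly $1$, the bias contributes at most $\lambda$ to the inner product, with no $\cBtheta$ factor; this is what the paper uses. The distinction matters because the hypothesis $\lambda\le\epsilon/(4\sqrt{E_d})$ is independent of $\cBtheta$ (it only guarantees $\lambda\le\delta/(72H)$), so the term $\lambda\sqrt{1+\cBtheta^2}$ cannot be pushed below the $\delta/(12H)$ budget once $\cBtheta$ is moderately large, and no choice of $n_2,n_3$ can compensate since the bias does not shrink with the sample size. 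Replacing $\lambda\sqrt{1+\cBtheta^2}$ by $\lambda$ in your final display repairs the argument, after which your constant bookkeeping matches the paper's.
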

\begin{proof}
We show this for $\ld_{\tau tja}$, i,e., that the first inequality holds with probability at least $1-\zeta/2$. As $n_3\ge n_2$, by a similar argument this statement holds for $\hd_{\tau tja}$ too, and a union bound on the failure probability finishes the proof.
Let us refer here to the measurements $\tilde\Delta_l$ done by \approxmeasure called in \cref{line:avg-calc1} in \cref{alg:global} as $(\tilde\Delta_{\tau tjal})_{l\in[n_2]}$.
By the bounded rewards (the simulator's rewards are clipped in $[0,1]$ despite its inaccuracy), triangle inequality, and the assumption that $\forall h\in[H+1], s\in\cS,\, \norm{\phi_h(s)}_2\le 1$, we have that $\norm{\tilde\Delta_{\tau tjal}}_\infty\le \norm{\tilde\Delta_{\tau tjal}}_2\le 3$.
 
Since $\ld_{\tau tja}$ is the average of $n_2$ independent identically distributed bounded samples of the distribution of $\tilde\Delta_{\tau tjal}$, which has expectation $\Delta'(S_{\tau tj},a)=\concat{ \left(\left[r_{S_{\tau tj}, a}+\Lambda_{S_{\tau tj}, a}\right]_0^1\right) \left(P_{S_{\tau tj}a}\phi_{j+1}-\phi_j(S_{\tau tj})\right)}$, we can apply Hoeffding's inequality for each component $i\in[d+1]$ of the vector:
\[
\Probab\left(\left|\left(\ld_{\tau tja}-\Delta'(S_{\tau tj},a)\right)_i\right| > \delta/\left(\frac{72}{5}H(\cBtheta+1)\sqrt{d+1}\right)\right)
\le 2\exp\left( -\frac{2n_2 \delta^2}{\left(\frac{72}{5}\right)^2H^2(\cBtheta+1)^2(d+1)3^2} \right)
\]
Setting $n_2= \ceil{\frac{1867H^2(\cBtheta+1)^2(d+1)}{2\delta^2}\log (4(E_d+1)n_1HA(d+1)/\zeta)}$ allows this probability to be bounded by $\zeta/(2(E_d+1)n_1HA(d+1))$.
A union bound over $\tau\in[E_d+1]$, $t\in[n_1]$, $j\in[H]$, $a\in[A]$, and $i\in[d+1]$ achieves the $\zeta/2$ failure probability bound.
Under this high-probability event we have that $\norm{\ld_{\tau tja}-\Delta'(S_{\tau tj},a)}_\infty\le \delta/\left(\frac{72}{5}H(\cBtheta+1)\sqrt{d+1}\right)$, so $\left|\ip{\ld_{\tau tja}-\Delta'(S_{\tau tj},a), \concat{1\theta}}\right|\le \norm{\ld_{\tau tja}-\Delta'(S_{\tau tj},a)}_\infty \norm{\concat{1\theta}}_1\le \norm{\ld_{\tau tja}-\Delta'(S_{\tau tj},a)}_\infty \norm{\concat{1\theta}}_2\sqrt{d+1}\le \delta/(\frac{72}{5}H)$.
By the triangle inequality:
\[
\left|\ip{\ld_{\tau tja}-\Delta(S_{\tau tj},a), \concat{1\theta}}\right|
\le
\left|\ip{\ld_{\tau tja}-\Delta'(S_{\tau tj},a), \concat{1\theta}}\right| + \lambda
\le \delta/H\left(\frac{5}{72}+\frac{1}{72}\right)=\delta/(12H)\,,
\]
as $\lambda\le \frac{\epsilon}{4\sqrt{E_d}}\le \delta/(12H)/4/(1+\frac{1}{2})$.
\end{proof}

\begin{restatable}{lemma}{hoeffdingtheta}\label{lem:d-measured-accurately-in-thetastar}
If the simulator's accuracy $\lambda\le\frac{\epsilon}{4\sqrt{E_d}}$, then
with $n_3$ samples for $\hd$, with probability at least $1-\zeta$, for all $\tau\in[E_d+1]$, $t\in[n_1]$, $j\in[H]$ and action $a\in[A]$, %
\[
\left|\ip{\hd_{\tau tja}-\Delta(S_{\tau tj},a), \concat{1\theta^\circ}}\right| \le \frac{\epsilon}{2\sqrt{E_d}}
\]
where $\Delta(S_{\tau tj},a) = \concat{ r_{S_{\tau tj}, a} (P_{S_{\tau tj}a}\phi_{j+1}-\phi_j(S_{\tau tj}))}$.
\end{restatable}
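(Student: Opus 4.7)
The plan is to follow the template of \cref{lem:d-measured-accurately-in-any-dir} but to exploit the crucial relaxation that we now need concentration in only the single fixed direction $\concat{1\theta^\circ}$, not uniformly over all $\theta$ with $\norm{\theta}_2\le\cBtheta$. This allows a scalar Hoeffding bound to be applied directly to the real-valued random variable $Z_{\tau tjal}:=\ip{\tilde\Delta_{\tau tjal},\concat{1\theta^\circ}}$, replacing the coordinate-wise Hoeffding plus $\ell^\infty$-to-$\ell^2$ conversion used before. The $\sqrt{d+1}$ factor saved in this way is exactly what buys us the sharper target accuracy $\epsilon/(2\sqrt{E_d})$ from the larger sample count $n_3 \ge n_2$.

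Concretely, I would first bound the range of $Z_{\tau tjal}$. Using the clipped reward in $[0,1]$, the feature bound $\norm{\phi_j(s)}_2\le 1$, the hypothesis $\norm{\theta^\circ}_2\le\cBtheta$, and Cauchy--Schwarz, one obtains $|Z_{\tau tjal}| \le \norm{\tilde\Delta_{\tau tjal}}_2\,\norm{\concat{1\theta^\circ}}_2 \le 3(1+\cBtheta)$. Conditionally on $S_{\tau tj}$, the $n_3$ samples averaged into $\hd_{\tau tja}$ are i.i.d.\ with common mean $\ip{\Delta'(S_{\tau tj},a),\concat{1\theta^\circ}}$, where (as in the proof of the previous lemma) $\Delta'$ denotes the variant of $\Delta$ using the perturbed clipped reward $[r_{sa}+\Lambda_{sa}]_0^1$. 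A scalar Hoeffding bound on the $n_3$-sample mean then yields
\[
\Probab\!\left(\left|\ip{\hd_{\tau tja}-\Delta'(S_{\tau tj},a),\concat{1\theta^\circ}}\right| > \tfrac{\epsilon}{4\sqrt{E_d}}\right) \le 2\exp\!\left(-\frac{n_3\,\epsilon^2}{288(1+\cBtheta)^2 E_d}\right).
\]
The choice of $n_3$ in \eqref{eq:algn3def} is large enough (once the $(1+\cBtheta)^2$ factor is absorbed into the same constants already carried by the $(H+1)^2$ factor there) to drive this probability below $\zeta/[(E_d+1)n_1HA]$; a union bound over $\tau\in[E_d+1]$, $t\in[n_1]$, $j\in[H]$, and $a\in[A]$ then delivers the claimed failure probability $\zeta$.

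Finally, I would pass from $\Delta'$ back to $\Delta$ by the triangle inequality. Since $\Delta'-\Delta$ is nonzero only in its first coordinate and equals $[r_{sa}+\Lambda_{sa}]_0^1-r_{sa}$ there, while $(\concat{1\theta^\circ})_1=1$, one gets the clean bound
\[
\left|\ip{\hd_{\tau tja}-\Delta(S_{\tau tj},a),\concat{1\theta^\circ}}\right| \le \left|\ip{\hd_{\tau tja}-\Delta'(S_{\tau tj},a),\concat{1\theta^\circ}}\right| + \lambda \le \tfrac{\epsilon}{4\sqrt{E_d}} + \tfrac{\epsilon}{4\sqrt{E_d}} = \tfrac{\epsilon}{2\sqrt{E_d}},
\]
using the assumption $\lambda\le\epsilon/(4\sqrt{E_d})$. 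The main obstacle is purely bookkeeping: verifying that $n_3$ as defined in \eqref{eq:algn3def} dominates the sample size demanded by the scalar Hoeffding inequality once the $(1+\cBtheta)^2$ range factor is properly accounted for. No new probabilistic ingredient beyond a scalar Hoeffding bound and a union bound is needed.
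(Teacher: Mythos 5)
Your overall strategy is the same as the paper's: apply a scalar Hoeffding bound to the real-valued samples $\ip{\tilde\Delta_{\tau tjal},\concat{1\theta^\circ}}$ in the single fixed direction $\concat{1\theta^\circ}$, take a union bound over $\tau,t,j,a$, and pass from $\Delta'$ to $\Delta$ via the triangle inequality at cost $\lambda\le\epsilon/(4\sqrt{E_d})$. That skeleton is correct. The gap is in your range bound. You bound $|\ip{\tilde\Delta_{\tau tjal},\concat{1\theta^\circ}}|\le 3(1+\cBtheta)$ by Cauchy--Schwarz, which leads to a Hoeffding exponent of order $n_3\epsilon^2/\bigl((1+\cBtheta)^2E_d\bigr)$, and you then assert that the $(1+\cBtheta)^2$ factor can be ``absorbed'' into the $(H+1)^2$ factor in the definition of $n_3$ in \eqref{eq:algn3def}. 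This is false in general: the second term of the max in \eqref{eq:algn3def} carries $(H+1)^2$ but no factor of $\cBtheta$, and the first term $n_2$ scales as $1/\delta^2$ rather than $1/\epsilon^2$ (with $\epsilon\approx(\delta/(12H^2))^A\ll\delta$), so neither term dominates $(1+\cBtheta)^2E_d\log(\cdot)/\epsilon^2$ when $\cBtheta$ is large relative to $H$. With the algorithm's actual $n_3$, your bound does not deliver the per-event failure probability $\zeta/((E_d+1)n_1HA)$.

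The missing idea is to exploit the fact that $\theta^\circ$ is not an arbitrary vector of norm at most $\cBtheta$: by \cref{eq:theta-circ-satisfies} it is the parameter of a realizable value function, so $\ip{\phi_h(s),\theta^\circ}=v^\pi_h(s)\in[0,H]$ for all $h,s$. Hence
\[
\ip{\tilde\Delta_{\tau tjal},\concat{1\theta^\circ}}=R_l+\ip{\phi_{j+1}(S'_l),\theta^\circ}-\ip{\phi_j(S_{\tau tj}),\theta^\circ}\in[-(H+1),H+1]\,,
\]
a range of width at most $2(H+1)$ that is independent of $\cBtheta$. This is exactly what produces the exponent $n_3\epsilon^2/(32(H+1)^2E_d)$ matching the $(H+1)^2$ in \eqref{eq:algn3def}, and it is the one place where this lemma genuinely differs from \cref{lem:d-measured-accurately-in-any-dir} (which must hold uniformly over all $\theta$ with $\norm{\theta}_2\le\cBtheta$ and therefore cannot use this trick). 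Replace your Cauchy--Schwarz range bound with this value-function range bound and the rest of your argument goes through verbatim.
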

\begin{proof}
Let us refer here to the measurements $\tilde\Delta_l$ done by \approxmeasure called in \cref{line:avg-calc2} in \cref{alg:global} as $(\tilde\Delta_{\tau tjal})_{l\in[n_3]}$.
Since $\theta^\circ\in\Theta^\circ$, $\theta^\circ$ satisfies \cref{eq:theta-circ-satisfies} for some policy.
Furthermore, due to the bounded rewards, horizon $H$, and the simulator's clipping of rewards into $[0,1]$ (despite its inaccuracy), and the bounded values (of any state for any policy) in $[0,H]$, we have that $\ip{\tilde\Delta_{\tau tjal}, \concat{1\theta^\circ}} \in[-(H+1),(H+1)]$.
Since $\hd_{\tau tja}$ is the average of $n_3$ independent identically distributed bounded samples of the distribution of $\tilde\Delta_{\tau tjal}$, which has expectation $\Delta'(S_{\tau tj},a)=\concat{ \left(\left[r_{S_{\tau tj}, a}+\Lambda_{S_{\tau tj}, a}\right]_0^1\right) \left(P_{S_{\tau tj}a}\phi_{j+1}-\phi_j(S_{\tau tj})\right)}$, we can apply Hoeffding's inequality:
\[
\Probab\left(\left|\ip{\hd_{\tau tja}, \concat{1\theta^\circ}} - \ip{\Delta'(S_{\tau tj},a), \concat{1\theta^\circ}}\right|> \frac{\epsilon}{4\sqrt{E_d}}\right)
\le 2\exp\left( -\frac{n_3 \epsilon^2}{32(H+1)^2E_d} \right)\,.
\]
Setting $n_3=\ceil{\max\left\{n_2, \frac{32(H+1)^2E_d}{\epsilon^2}\log((2(E_d+1)n_1HA))/\zeta\right\}}$ allows this probability to be bounded by $\zeta/((E_d+1)n_1HA)$.
By the triangle inequality, under the high-probability event, the desired bound with $\Delta$ instead of $\Delta'$ is guaranteed as:
\[
\left|\ip{\hd_{\tau tja}, \concat{1\theta^\circ}} - \ip{\Delta(S_{\tau tj},a), \concat{1\theta^\circ}}\right| \le
\left|\ip{\hd_{\tau tja}, \concat{1\theta^\circ}} - \ip{\Delta'(S_{\tau tj},a), \concat{1\theta^\circ}}\right| + |\Lambda_{S_{\tau tj}, a}|
 \le 2\frac{\epsilon}{4\sqrt{E_d}}
\]
A union bound over $\tau\in[E_d+1]$, $t\in[n_1]$, $j\in[H]$, and $a\in[A]$ achieves the desired probability bound.
\end{proof}

\begin{restatable}[$\theta^\circ \in \sol(X)$]{lemma}{thetasol}\label{lem:theta-star-in-sol}
For $\tau \in [E_d+1]$, let $X_{\le\tau}$ denote the first $\tau$ elements of $X$, where $X$ is defined in \cref{line:x-def} of \cref{alg:global}. Then, with probability at least $1-\zeta$ we have that
$\forall \tau\in[E_d+1]$,
$\theta^\circ \in \sol(X_{\le\tau})$. %
\end{restatable}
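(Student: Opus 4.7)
The plan is to invoke Lemma~\ref{lem:d-measured-accurately-in-thetastar} on the high-probability event $\cE$ (of probability at least $1-\zeta$) on which every estimate $\hd_{\tau t j a}$ satisfies
\[
\bigl|\ip{\hd_{\tau tja}-\Delta(S_{\tau tj},a),\,\concat{1\theta^\circ}}\bigr|\le \frac{\epsilon}{2\sqrt{E_d}}
\]
for all indices simultaneously. Under $\cE$, I would like to show that every tensor $\otimes_{a\in[A]}\hd_{\tau tja}$ that gets appended to $X$ satisfies the threshold condition defining $\sol$. Since by assumption $\|\theta^\circ\|_2\le \cBtheta$, this will immediately give $\theta^\circ\in\sol(X_{\le\tau})$ for every $\tau\in[E_d+1]$.

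The key observation is that because $\theta^\circ$ is the parameter of some \emph{deterministic} $B$-boundedly $v$-linearly realizable policy $\pi^\circ$, the Bellman-consistency identity
\[
r_{s,\pi^\circ(s)}+P_{s,\pi^\circ(s)}\ip{\phi_{j+1},\theta^\circ}-\ip{\phi_j(s),\theta^\circ}=0
\]
holds at every state $s$ and every stage $j$. Writing $y_a := \ip{\Delta(S_{\tau tj},a),\concat{1\theta^\circ}}$ and $x_a := \ip{\hd_{\tau tja},\concat{1\theta^\circ}}$, this means there is an action $a^\star=\pi^\circ(S_{\tau tj})$ with $y_{a^\star}=0$, while for every other action the quantity $y_a = r_{sa}+P_{sa}v^{\pi^\circ}_{j+1}-v^{\pi^\circ}_j(s)$ is bounded in absolute value by $H$ since rewards lie in $[0,1]$ and values of $\pi^\circ$ lie in $[0,H]$. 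Applying Lemma~\ref{lem:d-measured-accurately-in-thetastar} yields $|x_{a^\star}|\le \epsilon/(2\sqrt{E_d})$ and $|x_a|\le H+\epsilon/(2\sqrt{E_d})$ for $a\ne a^\star$.

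Next I would invoke the tensor product identity \eqref{eq:flat-ip} to turn the tensor inner product into a product of scalars,
\[
\bigl|\ip{\otimes_{a\in[A]}\hd_{\tau tja},\,\otimes_{a\in[A]}\concat{1\theta^\circ}}\bigr|=\prod_{a\in[A]}|x_a|\le \frac{\epsilon}{2\sqrt{E_d}}\Bigl(H+\tfrac{\epsilon}{2\sqrt{E_d}}\Bigr)^{A-1},
\]
and then verify that the right-hand side is bounded by the threshold $H^A\epsilon/(2\sqrt{E_d})$ appearing in~\eqref{eq:sol}. This reduces to checking $(1+\frac{\epsilon}{2H\sqrt{E_d}})^{A-1}\le H$, which is immediate from the choice of $\epsilon$ in~\eqref{eq:algepsdef} together with the standing assumption $H>1$ (so that $\log H>0$ and the tiny exponent makes the left side arbitrarily close to $1$). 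A union bound over $\tau\in[E_d+1]$ is absorbed in the failure probability already built into Lemma~\ref{lem:d-measured-accurately-in-thetastar}.

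The conceptual content of the argument is thus routine once the tensorization and the concentration lemma are in hand; the only piece that requires some care is the elementary but non-trivial numerical check that $(H+c)^{A-1}\le H^A$ with $c=\epsilon/(2\sqrt{E_d})$, which I expect to be the main obstacle because it is where the chosen value of $\epsilon$ and the assumption $H\ge 2$ are genuinely used, and it is easy to slip by a factor of $H$ when bounding the non-critical factors $|x_a|$ by $H$ rather than by $H+1$.
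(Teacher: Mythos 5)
Your proposal is correct and follows essentially the same route as the paper's proof: isolate the action $a^\star=\pi^\circ(S_{\tau tj})$ for which the exact Bellman consistency gives $\ip{\Delta(S_{\tau tj},a^\star),\concat{1\theta^\circ}}=0$, bound the remaining factors by $H+\epsilon/(2\sqrt{E_d})$ using \cref{lem:d-measured-accurately-in-thetastar}, tensorize via \cref{eq:flat-ip}, and verify $(1+\tfrac{\epsilon}{2H\sqrt{E_d}})^{A-1}\le H$ from the choice of $\epsilon$ and $H>1$. The paper performs exactly this computation (bounding the factor by $2\le H$), and it likewise absorbs the union bound over $\tau$ into the event of \cref{lem:d-measured-accurately-in-thetastar}.
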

\begin{proof}
As in \cref{lem:d-measured-accurately-in-thetastar}, by MDP reward boundedness, $\left|\ip{\Delta(S_{\tau tj},a), \concat{1\theta^\circ}}\right| \le H$ for any %
$\Delta(S_{\tau tj},a)$. %
Let $A^\circ_{\tau tj}$ be the action satisfying \cref{eq:loccons} for $\theta^\circ$ in state $S_{\tau tj}$.
Then we have that
$\ip{\Delta(S_{\tau tj},A^\circ_{\tau tj}), \concat{1\theta^\circ}}=0$.
Thus, using \cref{lem:d-measured-accurately-in-thetastar}, with probability at least $1-\zeta$, for all $\tau\in[E_d+1]$, $t\in[n_1]$, $j\in[H]$, $a\in[A]$,
\begin{align*}
\left|\ip{\hd_{\tau tja}, \concat{1\theta^\circ}}\right|
&=   \left|\ip{\Delta(S_{\tau tj},a), \concat{1\theta^\circ}} + \ip{\hd_{\tau tja}-\Delta(S_{\tau tj},a), \concat{1\theta^\circ}}\right|\\
&\le \left|\ip{\Delta(S_{\tau tj},a), \concat{1\theta^\circ}}\right| + \left|\ip{\hd_{\tau tja}-\Delta(S_{\tau tj},a), \concat{1\theta^\circ}}\right|\\
&\le \one{a\ne A^\circ_{\tau tj}}H + \frac{\epsilon}{2\sqrt{E_d}} \,,
\end{align*}
where $\one{S}$ is the indicator of a set $S$. We can then bound the product across $a\in[A]$ as
\[
\prod_{a\in[A]} \ip{\hd_{\tau tja}, \concat{1\theta^\circ}} \le \left(H+\frac{\epsilon}{2\sqrt{E_d}}\right)^{A-1}\frac{\epsilon}{2\sqrt{E_d}}=
\left(1+\frac{\epsilon}{2\sqrt{E_d}H}\right)^{A-1}H^{A-1}\frac{\epsilon}{2\sqrt{E_d}}\,,
\]
and
\begin{align*}
\left(1+\frac{\epsilon}{2\sqrt{E_d}H}\right)^{A-1} &\le 1+(2^{A-1}-1)\frac{\epsilon}{2\sqrt{E_d}H}
<1+2^A\epsilon\\
&<1 + 2^A\frac{\delta^A}{(12H^2)^{A}}=1+\left(\frac{2\delta}{12H^2}\right)^A<2\le H\,,
\end{align*}
so $\prod_{a\in[A]} \ip{\hd_{\tau tja}, \concat{1\theta^\circ}} < H^A\frac{\epsilon}{2\sqrt{E_d}}$.
Let $\tau\in[E_d+1]$. The $\tau^\text{th}$ element added to $X$ will be $\otimes_{a \in [A]}\hd_{\tau tja}$ computed in \cref{line:avg-calc2} of \cref{alg:global} for some $\tau\in[E_d+1]$, $t\in[n_1]$, $j\in [H]$, so $\theta^\circ \in \sol(X_{\le\tau})$ according to \cref{eq:sol}.
\end{proof}

\subsection{Eluder dimension}\label{sec:eluder-bound}

This subsection uses the eluder dimension to bound the maximal number of iterations. For $\Theta\in\bR^{(d+1)^A}$ and $x\in \bR^{(d+1)^A} $, let
\[
f_\Theta(x) = \ip{\text{clip}(x),\Theta},
\]
where $\text{clip}(x)=\frac{x}{\norm{x}_2} \min\{\norm{x}_2, 3^A\}$.
Notice the similarity between these functions and the form of the constraints we use in \cref{eq:sol} to define the set of parameter vectors $\sol(\cdot)$ consistent with our observations.
Let
\[
\cF^+ = \{f_\Theta \,:\, \Theta\in\bR^{(d+1)^A},\,\norm{\Theta}_2\le (\cBtheta+1)^A\}
\]
and
\begin{align}
E_d = \floor{3(d+1)^A\frac{e}{e-1} \ln\left\{ 3+3\left(\frac{2(\cBtheta+1)^A 3^A}{H^A\epsilon}\right)^2 \right\}+1} = \ordot\left(d^A A\right) \,.
\label{eq:eddef}
\end{align}
By \citet{russo2014learning},
$\dime(\cF^+, H^A\epsilon)$, the \textbf{eluder dimension} of $\cF^+$ at scale $H^A\epsilon$ is the length $\tau$ of the longest \textbf{eluder sequence} $x_1,\ldots,x_\tau$, such that for some $\epsilon'\ge H^A\epsilon$, for each $l\in [\tau]$,
\[
w_l := \sup\left\{ \left|f_1(x_l)-f_2(x_l)\right| \,:\, \sqrt{\sum_{i=1}^{l-1}(f_1(x_i)-f_2(x_i))^2}\le \epsilon',\,\,f_1,f_2\in\cF^+ \right\} > \epsilon' \,.
\]
Also by \citet{russo2014learning} (Appendix C.2), $\dime(\cF^+, H^A\epsilon)\le E_d$.
Now let
\begin{align}
\cF = \{f_\Theta \,:\, \exists \theta\in\bR^d .\, \norm{\theta}_2\le \cBtheta,\, \Theta=\text{flatten}(\otimes_{a\in[A]}\concat{1\theta})\}\,.
\label{eq:tensorizedclass}
\end{align}
Since $\norm{\theta}_2\le \cBtheta$ implies $\norm{\text{flatten}(\otimes_{a\in[A]}\concat{1\theta})}_2\le (\cBtheta+1)^A$, $\cF\subseteq \cF^+$, and so $\dime(\cF, H^A\epsilon)\le \dime(\cF^+, H^A\epsilon)\le E_d$.

\begin{restatable}{lemma}{eluderseq}\label{lem:new-eluder-element}
With probability at least $1-2\zeta$,
at any point in the execution of \cref{alg:global}, the sequence $X_{\le E_d+1}$ %
is an eluder sequence for $\cF$ at scale $H^A\epsilon$.
\end{restatable}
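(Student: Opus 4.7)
The plan is to show, for each index $l \le E_d+1$ for which an $l$-th element $x_l$ has been appended to $X$, that the pair of parameters $(\theta_{\tau_l},\theta^\circ)$ — where $\theta_{\tau_l}$ is the optimistic pick chosen in \cref{line:new-theta} of the iteration during which $x_l$ was appended — yields two functions in $\cF$ that witness the eluder property for $x_l$ at scale $\epsilon' := H^A\epsilon$. Throughout I condition on the joint high-probability event of \cref{lem:d-measured-accurately-in-any-dir} and \cref{lem:theta-star-in-sol}; by a union bound this event has probability at least $1-2\zeta$, which accounts exactly for the $2\zeta$ in the statement.

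For the ``small-disagreement-on-past-points'' step, I use that $\theta_{\tau_l} \in \sol(X_{\le l-1})$ by the optimistic choice in \cref{line:new-theta}, and $\theta^\circ \in \sol(X_{\le l-1})$ by \cref{lem:theta-star-in-sol}. Together with the tolerance in \cref{eq:sol}, this gives $|\ip{x_i,\otimes_{a}\concat{1\theta}}| \le H^A\epsilon/(2\sqrt{E_d})$ for every $i<l$ and for both $\theta \in \{\theta_{\tau_l},\theta^\circ\}$. Each $\hd_{\tau tja}$ has $2$-norm at most $3$ (a fact already exploited inside \cref{lem:d-measured-accurately-in-any-dir}), so $x_i = \otimes_a \hd_{\tau tja}$ has $2$-norm at most $3^A$ and the clipping inside $f_\Theta$ is inactive; hence the above inner products equal $f_{\Theta_j}(x_i)$ for the corresponding flattened tensors $\Theta_j$. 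Summing, $\sqrt{\sum_{i<l}(f_{\Theta_{\tau_l}}(x_i)-f_{\Theta^\circ}(x_i))^2} \le \sqrt{E_d}\cdot H^A\epsilon/\sqrt{E_d} = H^A\epsilon = \epsilon'$.

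For the ``large-disagreement-at-$x_l$'' step, the fact that $x_l$ was appended means the consistency test in \cref{line:consistency-test} failed, i.e., $|\ip{\ld_{\tau_l t_l j_l a},\concat{1\theta_{\tau_l}}}| > \delta/(4H)$ for \emph{every} action $a$. Applying \cref{lem:d-measured-accurately-in-any-dir} to both $\ld$ and $\hd$ (each within $\delta/(12H)$ of the true mean $\Delta(S_{\tau_l t_l j_l},a)$ in direction $\concat{1\theta_{\tau_l}}$) and using the triangle inequality yields $|\ip{\hd_{\tau_l t_l j_l a},\concat{1\theta_{\tau_l}}}| > \delta/(4H) - 2\delta/(12H) = \delta/(12H)$ for each $a$. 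Multiplying over $a$ and invoking the tensorization identity \cref{eq:flat-ip} gives $|f_{\Theta_{\tau_l}}(x_l)| > (\delta/(12H))^A = H^A\epsilon\bigl(1 + 1/(2\sqrt{E_d})\bigr)$ by the choice of $\epsilon$ in \cref{eq:algepsdef}. On the other hand, \cref{lem:theta-star-in-sol} gives $\theta^\circ \in \sol(X_{\le l})$, whence $|f_{\Theta^\circ}(x_l)| \le H^A\epsilon/(2\sqrt{E_d})$. Subtracting, $|f_{\Theta_{\tau_l}}(x_l)-f_{\Theta^\circ}(x_l)| > H^A\epsilon = \epsilon'$, which is precisely the width inequality required by the eluder definition at index $l$.

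The only delicate ingredient is the numerology: the tolerance of $H^A\epsilon/(2\sqrt{E_d})$ in \cref{eq:sol} and the choice of $\epsilon$ in \cref{eq:algepsdef} are calibrated exactly so that the ``past-disagreement'' sum and the ``new-disagreement'' lower bound both arrive at the common threshold $H^A\epsilon$, with a margin of $H^A\epsilon/(2\sqrt{E_d})$ on the latter side to absorb the $\theta^\circ$ residual. Once these are in place, the rest is purely mechanical: bookkeeping with \cref{eq:flat-ip}, the triangle inequality between $\ld$ and $\hd$, and the uniform $3^A$ bound on the TD tensors that renders the clipping in $\cF$ vacuous.
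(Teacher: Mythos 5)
Your proposal is correct and follows essentially the same argument as the paper's proof: the same witnessing pair $(\theta_{\tau},\theta^\circ)$, the same use of \cref{lem:d-measured-accurately-in-any-dir} and \cref{lem:theta-star-in-sol} under a union bound, the same $\sqrt{E_d}$ conversion from per-constraint to summed disagreement, and the same $(\delta/(12H))^A = H^A\epsilon\bigl(1+1/(2\sqrt{E_d})\bigr)$ numerology with the $\theta^\circ$ residual absorbed by the margin. The only cosmetic difference is that the paper organizes the index bound $\tau-1\le E_d$ as an induction on the eluder-sequence property, whereas you invoke it directly from $l\le E_d+1$.
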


\begin{proof}
Let us assume the event under which $\theta^\circ\in\sol(X_{\le\tau})$ for $\tau\in[E_d+1]$, which has probability at least $1-\zeta$ by \cref{lem:theta-star-in-sol}.
Let us also assume the high-probability event of \cref{lem:d-measured-accurately-in-any-dir}.
Let $\epsilon'=H^A\epsilon$.
The empty sequence is trivially an eluder sequence. By induction, assume for some $\tau\in[E_d+1]$ that $X_{\le \tau-1}$ is an eluder sequence. %
Let $\flattheta^\circ=\text{flatten}(\otimes_{a\in[A]}\concat{1\theta^\circ})$ and
let $\flattheta_j=\text{flatten}(\otimes_{a\in[A]}\concat{1\theta_j})$.
\begin{align*}
w_\tau &= \sup\left\{ \left|f_1(X_\tau)-f_2(X_\tau)\right| \,:\, \sqrt{\sum_{i=1}^{\tau-1}(f_1(X_i)-f_2(X_i))^2}\le H^A\epsilon,\,\,f_1,f_2\in\cF \right\}\\
&\ge    \sup\left\{ \left|f_1(X_\tau)-f_2(X_\tau)\right| \,:\,
\forall i\in[\tau-1] \,.\, \left|(f_1(X_i)-f_2(X_i))\right|\le \frac{H^A\epsilon}{\sqrt{E_d}},\,\,f_1,f_2\in\cF \right\}\\
&\ge     \left|f_{\flattheta_\tau}(X_\tau)-f_{\flattheta^\circ}(X_\tau)\right| > \left(\delta/(4H)\right)^A- \left|f_{\flattheta^\circ}(X_\tau)\right|  > H^A\epsilon\left(1+\frac{1}{2\sqrt{E_d}}\right) - \frac{H^A\epsilon}{2\sqrt{E_d}} = H^A\epsilon \,,
\end{align*}
where the first line expands the definition of $w_\tau$, the second comes from proving that $\forall i\in[\tau-1] \,.\, \left|(f_1(X_i)-f_2(X_i))\right|\le \frac{H^A\epsilon}{\sqrt{E_d}}$ implies $\sqrt{\sum_{i=1}^{\tau-1}(f_1(X_i)-f_2(X_i))^2}\le H^A\epsilon$.
We show this by assuming the former and letting $v\in\bR^{\tau-1}$ be $v_i=f_1(X_i)-f_2(X_i)$, and then $\norm{v}_2\le \norm{v}_\infty\sqrt{\tau-1}\le H^A\epsilon$ as $\tau-1\le E_d$ by the induction assumption.

The last line comes from substituting $f_1=f_{\flattheta_\tau}$ and $f_2=f_{\flattheta^\circ}$.
For this we have to show that $f_{\flattheta_\tau}, f_{\flattheta^\circ}\in\cF$, and that
$\forall i\in[\tau-1] ,\, \left|(f_{\flattheta_\tau}(X_i)-f_{\flattheta^\circ}(X_i))\right|\le \frac{H^A\epsilon}{\sqrt{E_d}}$. The former holds by definition (as $\norm{\theta^\circ}_2\le \cBtheta$ and $\norm{\theta_\tau}_2\le \cBtheta$ as $\theta_\tau\in\sol(X_{\le \tau-1})$).
For the latter, we use that $\theta^\circ,\theta_\tau\in\sol(X_{\le \tau-1})$,
so for either $\flattheta\in\{\flattheta^\circ,\flattheta_\tau\}$, $\forall i\in[\tau-1] \,.\, \left|f_{\flattheta}(X_i)\right| \le \left|\ip{X_i, \flattheta}\right|\le \frac{H^A\epsilon}{2\sqrt{E_d}}$, so by the triangle inequality,
$\forall i\in[\tau-1] \,.\, \left|f_{\flattheta_\tau}(X_i)-f_{\flattheta^\circ}(X_i)\right| \le  \frac{H^A\epsilon}{\sqrt{E_d}}$.
Finally, it is left to show that $\left|f_{\flattheta_\tau}(X_\tau)-f_{\flattheta^\circ}(X_\tau)\right| > H^A\epsilon$. For some $t\in[n_1]$, $j\in[H]$, $X_{\tau}=\otimes_{a\in[A]} \hd_{\tau tja}$.
Since $\norm{\hd_{\tau tja}}_2\le 3$, 
$\norm{X_\tau}_2\le 3^A$, so
$\forall f_\flattheta\in\cF$, $f_\flattheta(X_\tau)=\ip{\text{clip}(X_\tau),\flattheta}=\ip{X_\tau,\flattheta}$.
Furthermore, because the algorithm added $(X_{\tau a})_a=(\hd_{\tau tja})_a$ in \cref{line:new-eluder-element}, %
$\min_{a\in[A]} \left|\ip{\ld_{\tau tja},\concat{1\theta}_\tau}\right|> \delta/(4H)=3H\epsilon^{1/A}\left(1+\frac{1}{2\sqrt{E_d}}\right)^{1/A}$.
Under the assumed high-probability event of \cref{lem:d-measured-accurately-in-any-dir}, for $a\in[A]$, since $\tau\in[E_d+1]$ and $t\in[n_1]$, by \cref{lem:d-measured-accurately-in-any-dir} and the triangle inequality,
$\left|\ip{\ld_{\tau tja},\concat{1\theta}_\tau}\right| - \left|\ip{\hd_{\tau tja},\concat{1\theta}_\tau}\right| \le 2\delta/(12H)$, so
$\min_{a\in[A]} \left|\ip{\hd_{\tau tja},\concat{1\theta}_\tau}\right|> \delta/(12H)=H\epsilon^{1/A}\left(1+\frac{1}{2\sqrt{E_d}}\right)^{1/A}$, therefore
$\prod_{a\in[A]} \left|\ip{\hd_{\tau tja},\concat{1\theta}_\tau}\right|> H^A\epsilon\left(1+\frac{1}{2\sqrt{E_d}}\right)$.
We finish by bounding $\left|f_{\flattheta^\circ}(X_\tau)\right|\le \frac{H^A\epsilon}{2\sqrt{E_d}}$ as $\theta^\circ\in\sol(X_{\le \tau})$ by our high-probability assumption, so by the triangle inequality, and noting that $f_{\flattheta_\tau}(X_\tau)=\prod_{a\in[A]} \ip{\hd_{\tau tja},\concat{1\theta}_\tau}$, we have that
$\left|f_{\flattheta_\tau}(X_\tau)-f_{\flattheta^\circ}(X_\tau)\right| \ge \prod_{a\in[A]} \left|\ip{\hd_{\tau tja},\concat{1\theta}_\tau}\right| - \left|f_{\flattheta^\circ}(X_\tau)\right| > H^A\epsilon\left(1+\frac{1}{2\sqrt{E_d}}\right)-\frac{H^A\epsilon}{2\sqrt{E_d}}$.
\end{proof}

By definition of the eluder dimension, we then have:
\begin{restatable}{corollary}{iterationbound}\label{cor:final-tau-bound}
With probability at least $1-2\zeta$, $\taufinal\le\dime(\cF, H^A\epsilon)+1\le E_d+1$.
\end{restatable}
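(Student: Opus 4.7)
The plan is to derive this corollary as a near-immediate consequence of \cref{lem:new-eluder-element} together with the definition of eluder dimension and the bound $\dime(\cF, H^A\epsilon)\le E_d$ recorded earlier. First, I would condition on the high-probability event ($\ge 1-2\zeta$) provided by \cref{lem:new-eluder-element}, under which, at any point during the execution of \cref{alg:global}, the sequence $X_{\le E_d+1}$ (i.e.\ the first $E_d+1$ elements of $X$, or all of $X$ if it is shorter) is an eluder sequence for $\cF$ at scale $H^A\epsilon$.

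Next I would invoke the defining property of the eluder dimension: every eluder sequence at scale $H^A\epsilon$ has length at most $\dime(\cF, H^A\epsilon)$, which is at most $E_d$ as recorded in the remark right after \cref{eq:tensorizedclass}. Consequently the list $X$ can never contain more than $E_d$ elements at any time --- otherwise $X_{\le E_d+1}$ would constitute an eluder sequence of length $E_d+1 > \dime(\cF, H^A\epsilon)$, a contradiction.

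The remaining step is to convert the cardinality bound on $X$ into a bound on $\taufinal$. Inspecting \cref{alg:global}, the $\bfitDelta.\text{append}$ call at \cref{line:new-eluder-element} is guarded by the flag $\CleanTest$, which is flipped to $\false$ immediately after a first failure within the same $\tau$-iteration. Thus each outer iteration appends at most one element to $X$. If each of the first $E_d+1$ iterations experienced a failure, then $|X|$ would grow to $E_d+1$, which we have just ruled out. Therefore at least one $\tau\in[E_d+1]$ must exit the inner loops with $\CleanTest=\true$, at which point the $\Break$ at \cref{line:cleantest-break} triggers and $\taufinal=\tau\le E_d+1$.

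Overall this proof is mechanical once \cref{lem:new-eluder-element} is in hand; the only point requiring care is the book-keeping observation that the $\CleanTest$ guard permits at most one append per $\tau$-iteration. No substantive obstacle is expected beyond faithfully translating the eluder-sequence bound into the algorithmic bound on $\taufinal$.
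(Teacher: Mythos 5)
Your proof is correct and follows essentially the same route as the paper's: condition on the event of \cref{lem:new-eluder-element}, use that no eluder sequence at scale $H^A\epsilon$ can be longer than $\dime(\cF,H^A\epsilon)$, and convert this into a bound on $\taufinal$ via the observation that each outer iteration appends at most one element to $X$ (a book-keeping step the paper leaves implicit). The only refinement needed is that, as written, your counting argument yields only $\taufinal\le E_d+1$; to obtain the sharper intermediate bound $\taufinal\le\dime(\cF,H^A\epsilon)+1$ stated in the corollary, run the identical argument with $\dime(\cF,H^A\epsilon)+1$ in place of $E_d+1$, which is legitimate since $\dime(\cF,H^A\epsilon)+1\le E_d+1$ and hence $X_{\le \dime(\cF,H^A\epsilon)+1}$ is still an eluder sequence under the event of \cref{lem:new-eluder-element}.
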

\begin{proof}
Assume the high-probability statements of \cref{lem:new-eluder-element} hold and that $\taufinal>\dime(\cF, H^A\epsilon)+1$. Take $X_{\le\dime(\cF, H^A\epsilon)+1}$ which is of length $\dime(\cF, H^A\epsilon)+1$. Also, $\dime(\cF, H^A\epsilon)+1\le E_d+1$. Therefore, by \cref{lem:new-eluder-element}, $X_{\le\dime(\cF, H^A\epsilon)+1}$ is an eluder sequence for $\cF$ at scale $H^A\epsilon$ of length $>\dime(\cF, H^A\epsilon)$, which is a contradiction.
\end{proof}

\subsection{Value bound}\label{sec:value-bound}

Denote by $\tpp$ the policy induced by \tensorplan. %

\begin{restatable}{lemma}{valueaccurate}\label{lem:value-accurate}
With probability $1-2\zeta$, if $\taufinal\in[E_d+1]$,
$v_1^{\tpp}(s_0)\ge \ip{\phi_1(s_0), \thetafinal} - \frac{1}{n_1}\sum_{t\in[n_1]}
	\sum_{j\in[H]} \ip{\ld_{\taufinal tjA_{\taufinal tj}}, \concat{1\thetafinal}} - \frac{1}{2}\delta$.
\end{restatable}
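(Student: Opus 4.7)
The plan is to derive a telescoping identity relating $v_1^{\tpp}(s_0)$ to $\ip{\phi_1(s_0),\thetafinal}$ plus an expected sum of temporal-difference (TD) terms, and then to control the gap between this expectation and the empirical TD-style quantity $\hat E_{\text{avg}} := \frac{1}{n_1}\sum_{t\in[n_1]}\sum_{j\in[H]}\ip{\ld_{\taufinal tjA_{\taufinal tj}},\concat{1\thetafinal}}$ by combining the deterministic consistency bound (inherited from \CleanTest\ being true throughout phase $\taufinal$) with a Hoeffding concentration over the $n_1$ i.i.d.\ rollouts. Expanding
$\ip{\Delta(s,a),\concat{1\theta}} = r_{sa} + P_{sa}\ip{\phi_{h+1},\theta} - \ip{\phi_h(s),\theta}$
and telescoping along a rollout (using $\phi_{H+1}\equiv\mathbf{0}$) gives, for any policy $\pi$ and any $\theta$,
\[
v_1^\pi(s_0) \;=\; \ip{\phi_1(s_0),\theta} \;+\; \mathbb{E}_\pi\!\Big[\sum_{j=1}^H \ip{\Delta(S_j,A_j),\concat{1\theta}}\Big].
\]
Setting $\theta=\thetafinal$ and $\pi=\tpp$ writes $v_1^{\tpp}(s_0)$ as $\ip{\phi_1(s_0),\thetafinal}+E$ with $E$ the expected per-rollout TD sum under $\tpp$; it therefore suffices to show $E+\hat E_{\text{avg}}\ge-\delta/2$.

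Next, I would argue that, conditional on the $\sigma$-field $\cG_\taufinal$ preceding phase $\taufinal$, the $n_1$ rollouts generated in that phase are i.i.d.\ trajectories of $\tpp$: because \CleanTest\ stays true, every action is chosen by $\argmin_a|\ip{\ld_{\cdot,a},\concat{1\thetafinal}}|$ using fresh $n_2$-sample \approxmeasure\ estimates---exactly the rule used inside {\tt GetAction}---and the simulator's kernel coincides with the MDP's. Letting $Y_t^\ld := \sum_j\ip{\ld_{\taufinal tjA_{\taufinal tj}},\concat{1\thetafinal}}$ and $Y_t := \sum_j\ip{\Delta(S_{\taufinal tj},A_{\taufinal tj}),\concat{1\thetafinal}}$, these are i.i.d.\ across $t$ with $\mathbb{E}[Y_t\mid\cG_\taufinal]=E$ and $\hat E_{\text{avg}}=\frac{1}{n_1}\sum_t Y_t^\ld$. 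Since $\thetafinal\in\sol(\cdot)$ satisfies $\|\thetafinal\|_2\le B$, Lemma~\ref{lem:d-measured-accurately-in-any-dir} applies and yields $|Y_t-Y_t^\ld|\le H\cdot\delta/(12H)=\delta/12$ on its high-probability event.

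Finally, the \CleanTest\ condition deterministically bounds $|\ip{\ld_{\taufinal tjA_{\taufinal tj}},\concat{1\thetafinal}}|\le\delta/(4H)$ at every step, so $|\hat E_{\text{avg}}|\le\delta/4$. A Hoeffding bound applied conditionally on $\cG_\tau$ for each candidate $\tau\in[E_d+1]$---with a union bound over $\tau$ to handle the randomness of $\taufinal$, absorbed by the $\log((E_d+1)/\zeta)$ factor in $n_1$---controls $|\hat E_\Delta - E|$ by some $\epsilon_H$, where $\hat E_\Delta=\frac{1}{n_1}\sum_t Y_t$. Combining with $|\hat E_\Delta-\hat E_{\text{avg}}|\le\delta/12$ yields $E\ge\hat E_{\text{avg}}-\delta/12-\epsilon_H$, whence
\[
E + \hat E_{\text{avg}} \;\ge\; 2\hat E_{\text{avg}} - \delta/12 - \epsilon_H \;\ge\; -2|\hat E_{\text{avg}}| - \delta/12 - \epsilon_H \;\ge\; -\tfrac{1}{2}\delta,
\]
once the constants in \eqref{eq:algn1def}--\eqref{eq:algn2def} are chosen so that the residual $\delta/12+\epsilon_H$ slack is absorbed. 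A final union bound over the failure events of Lemma~\ref{lem:d-measured-accurately-in-any-dir} and of the Hoeffding concentration (each contributing $\zeta$) gives the stated probability $1-2\zeta$. The main subtlety---and the source of the $\log(E_d+1)$ factor in $n_1$---is that $\taufinal$ and hence $\thetafinal$ are random, which forces the concentration step to be performed conditionally on each candidate $\cG_\tau$ and then union-bounded over $\tau\in[E_d+1]$.
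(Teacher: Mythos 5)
Your overall route is the same as the paper's: telescope $v_1^{\tpp}(s_0)$ into $\ip{\phi_1(s_0),\thetafinal}$ plus an expected TD sum (using $\phi_{H+1}\equiv\boldsymbol{0}$ and the tower rule), replace the expectation by the empirical average over the $n_1$ rollouts of phase $\taufinal$ via Hoeffding with a union bound over $\tau\in[E_d+1]$ (this is exactly why $n_1$ carries the $\log\frac{E_d+1}{\zeta}$ factor), and then swap $\Delta$ for $\ld$ using \cref{lem:d-measured-accurately-in-any-dir} at cost $H\cdot\delta/(12H)=\delta/12$. The identification of the within-phase rollouts as i.i.d.\ copies of $\tpp$'s trajectory (conditionally on the data preceding phase $\tau$) is also the paper's argument.

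The one genuine problem is your final arithmetic, and it stems from taking the minus sign in the displayed inequality literally. Writing $E$ for the expected TD sum and $\widehat{E}$ for $\frac{1}{n_1}\sum_{t,j}\ip{\ld_{\taufinal tjA_{\taufinal tj}},\concat{1\thetafinal}}$, your chain gives $E\ge \widehat{E}-\delta/12-\epsilon_H$ with $\epsilon_H=\delta/4$ under \eqref{eq:algn1def}, and then you need $E+\widehat{E}\ge-\delta/2$. But $2\widehat{E}-\delta/12-\epsilon_H\ge-2|\widehat{E}|-\delta/12-\delta/4$, and the \CleanTest{} bound only gives $|\widehat{E}|\le\delta/4$, so the right-hand side can be as low as $-\delta/2-\delta/3=-5\delta/6$; the $-2|\widehat{E}|$ term already exhausts the entire $\delta/2$ budget, leaving nothing to absorb the $\delta/12+\epsilon_H$ slack. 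No choice of constants consistent with \eqref{eq:algn1def}--\eqref{eq:algn2def} rescues this. The resolution is that the inequality the paper actually proves (and the one \cref{cor:accurate-return} uses, via $\widehat{E}\ge-\delta/4$) is the ``$+$'' version, $v_1^{\tpp}(s_0)\ge\ip{\phi_1(s_0),\thetafinal}+\widehat{E}-\tfrac12\delta$, i.e.\ $E\ge\widehat{E}-\delta/2$; the minus sign in the lemma statement is evidently a typo. For that version your own ingredients close immediately ($\delta/4+\delta/12\le\delta/2$), and the \CleanTest{} bound on $|\widehat{E}|$ is not needed inside this lemma at all --- it belongs to the corollary. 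With that correction your proof is the paper's proof.
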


\begin{proof}
Let us denote the state we reach after $H$ steps (once the episode is over) by $S_{H+1}$ in the following. For $\psi:\cS \to \R^d$, we let $P_{sa}\psi = \int \psi(s') P_{sa}(ds')$. 

Recall that under $\mathbb{P}$, the random variables
$S_1=s_0$, $A_1$, $S_2$, $A_2$, $\dots$, $A_H$, $S_{H+1}$
have the distribution of an episode in the MDP that starts from $s_0$ and follows the policy $\tpp$ induced by \tensorplan.
\begin{align*}
v_1^{\tpp}(s_0)
&= \E \sum_{j\in[H]} r_{S_j,A_j}
= \E \ip{\concat{\left(\sum_{j\in[H]} r_{S_j,A_j}\right)\phi_{H+1}(S_{H+1})}, \concat{1\thetafinal}}\\
&= \E \left[\ip{\phi_1(s_0),\thetafinal} + \sum_{j\in[H]} \ip{\concat{r_{S_j,A_j}(\phi_{j+1}(S_{j+1})-\phi_j(S_j))}, \concat{1\thetafinal}}\right]\\
&= \ip{\phi_1(s_0),\thetafinal} + \sum_{j\in[H]} \E\ip{\concat{r_{S_j,A_j}(\phi_{j+1}(S_{j+1})-\phi_j(S_j))}, \concat{1\thetafinal}}\\
&= \ip{\phi_1(s_0),\thetafinal} + \sum_{j\in[H]}
	\E
	\left[
		\ip{\concat{r_{S_j,A_j} ( P_{S_j A_j} \phi_{j+1}-\phi_j(S_j))}, \concat{1\thetafinal}}\right]\\
&\ge \ip{\phi_1(s_0),\thetafinal} + \frac{1}{n_1}\sum_{t\in[n_1]}
	\sum_{j\in[H]} %
		 \left[\ip{\concat{r_{S_{\taufinal tj},A_{\taufinal tj}}( P_{S_{\taufinal tj}A_{\taufinal tj}} \phi_{j+1}-\phi_j(S_{\taufinal tj}))}, \concat{1\thetafinal}} \right] - \frac{1}{4}\delta \\
&\ge \ip{\phi_1(s_0),\thetafinal} + \frac{1}{n_1}\sum_{t\in[n_1]}
	\sum_{j\in[H]} \ip{\ld_{\taufinal tjA_{\taufinal tj}}, \concat{1\thetafinal}} - \frac{1}{2}\delta \,,
\end{align*}
where in the first line we used that $\phi_{H+1}(S_{H+1})=\boldsymbol{0}$, %
in the second that $s_0=S_1$, in the third that $s_0$ is fixed so can be moved out of the expectation, and in the fourth we used the tower rule for expectations.
In the fifth line we replace the outer expectation with an average of rollouts by the algorithm that is close to the expectation with high probability, while we also switched to the variable notation used in \cref{alg:global}.
More specifically, we use the fact that
for all $h\in[H+1]$, $s\in\cS$, and $\tau\in[E_d+1]$,
we have that $\norm{\phi_h(s)}_2\le 1$ and
$\norm{\thetafinal}_2\le \cBtheta$, $\left| \ip{\concat{r_{S_{\tau tj},A_{\tau tj}}( P_{S_{\tau tj}A_{\tau tj}} \phi_{j+1}-\phi_j(S_{\tau tj}))}, \concat{1\thetafinal}}\right| \le 1+2\cBtheta $ (as rewards are bounded in $[0,1]$).
We can therefore apply Hoeffding's inequality on the $n_1$ independent rollouts:
\begin{align*}
&\Probab\Bigg( \frac{1}{n_1}\sum_{t\in[n_1]}\Bigg[
	\sum_{j\in[H]}\Big[ \ip{\concat{r_{S_{\tau tj},A_{\tau tj}}( P_{S_{\tau tj}A_{\tau tj}} \phi_{j+1}-\phi_j(S_{\tau tj}))}, \concat{1\thetafinal}} \\
&\quad\quad- \E\ip{\concat{r_{S_{\tau tj},A_{\tau tj}}( P_{S_{\tau tj}A_{\tau tj}} \phi_{j+1}-\phi_j(S_{\tau tj}))}, \concat{1\thetafinal}} \Big]\Bigg]
   > \delta/4\Bigg)\\
&\,\,\,\,\le \exp\left( -\frac{n_1 \delta^2}{32H^2(1+2\cBtheta)^2} \right) \le \frac{\zeta}{E_d+1}\,,
\end{align*}
if $n_1= \ceil{32H^2(1+2\cBtheta)^2/\delta^2 \log \frac{E_d+1}{\zeta}}$.
With an union bound, the probability that any of these bounds fail for any $\tau\in[E_d+1]$ is upper bounded by $\zeta$.
We can therefore apply this bound for $\tau=\taufinal$, noting that %
\[
\E\ip{\concat{r_{S_{\taufinal tj},A_{\taufinal tj}}( P_{S_{\taufinal tj}A_{\taufinal tj}} \phi_{j+1}-\phi_j(S_{\taufinal tj}))}, \concat{1\thetafinal}}
=
\E\ip{\concat{r_{S_j,A_j} ( P_{S_j A_j} \phi_{j+1}-\phi_j(S_j))}, \concat{1\thetafinal}}\,.
\]
This is because $\thetafinal=\theta_{\taufinal}$, so for all $t\in[n_1]$, the episode
$\left(S_{\taufinal t1},A_{\taufinal t1},\dots, A_{\taufinal tH}, S_{\taufinal t,H+1}\right)$
is distributed identically to the episode $(S_1, A_1, S_2, A_2, \dots, A_H, S_{H+1})$.
Finally, in the sixth line we replace the remaining expectation with the average measured by the algorithm, which is close to the expectation with high probability (\cref{lem:d-measured-accurately-in-any-dir}) for $\tau\in[E_d+1], t\in[n_1], j\in[H], a\in[A]$.
By a union bound, this adds another $\zeta$ to the probability that our bound does not hold.
\end{proof}

\begin{corollary}\label{cor:accurate-return}
With probability at least $1-3\zeta$, %
$v_1^{\tpp}(s_0)\ge \ip{\phi_1(s_0), \thetafinal} - \frac{3}{4}\delta$.
\end{corollary}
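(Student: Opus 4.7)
The plan is to combine \cref{lem:value-accurate} with \cref{cor:final-tau-bound}, then use the fact that in the final iteration the consistency test never fired. First I would intersect the high-probability events: \cref{lem:value-accurate} fails with probability at most $2\zeta$ (coming from Hoeffding on the $n_1$ rollouts plus \cref{lem:d-measured-accurately-in-any-dir}), while \cref{cor:final-tau-bound} fails with probability at most $2\zeta$ (coming from \cref{lem:theta-star-in-sol} plus, again, \cref{lem:d-measured-accurately-in-any-dir}). Since the event of \cref{lem:d-measured-accurately-in-any-dir} is shared, a union bound yields that all three events (Hoeffding on rollouts, \cref{lem:d-measured-accurately-in-any-dir}, and \cref{lem:theta-star-in-sol}) hold simultaneously with probability at least $1-3\zeta$. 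This gives exactly the failure budget of $3\zeta$ stated in the corollary.

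Next I would argue that on this event, $\taufinal\in[E_d+1]$ and \tensorplan.{\tt Init} exited via the \Break in \cref{line:cleantest-break}, so $\CleanTest$ was $\true$ at the end of iteration $\taufinal$. Because $\CleanTest$ is initialized to $\true$ at the start of each $\tau$-iteration and can only be flipped to $\false$ when the test in \cref{line:consistency-test} fires, its being $\true$ at the end of the $\taufinal$-iteration implies that the condition $\min_{a\in[A]}|\ip{\ld_{\taufinal tja},\concat{1\thetafinal}}|>\delta/(4H)$ failed for every $t\in[n_1]$ and $j\in[H]$. Since $A_{\taufinal tj}$ is chosen in \cref{line:action-choice} as the argmin over $a$, this gives the per-step bound
\[
\left|\ip{\ld_{\taufinal tjA_{\taufinal tj}},\concat{1\thetafinal}}\right|\le \frac{\delta}{4H}
\quad\text{for every }t\in[n_1],\,j\in[H].
\]

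Summing this over $j\in[H]$ and averaging over $t\in[n_1]$ yields
\[
\frac{1}{n_1}\sum_{t\in[n_1]}\sum_{j\in[H]}\ip{\ld_{\taufinal tjA_{\taufinal tj}},\concat{1\thetafinal}} \le H\cdot\frac{\delta}{4H} = \frac{\delta}{4}.
\]
Plugging this upper bound into the inequality from \cref{lem:value-accurate} (noting that the sum appears with a minus sign, so an upper bound on it becomes a lower bound on $v_1^{\tpp}(s_0)$) gives
\[
v_1^{\tpp}(s_0)\ \ge\ \ip{\phi_1(s_0),\thetafinal}-\frac{\delta}{4}-\frac{\delta}{2}\ =\ \ip{\phi_1(s_0),\thetafinal}-\frac{3}{4}\delta,
\]
as required. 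The only nontrivial step is the bookkeeping for the failure probability: one must notice that the event of \cref{lem:d-measured-accurately-in-any-dir} is common to the two ``$2\zeta$'' bounds being combined, so the union bound yields $3\zeta$ rather than $4\zeta$; everything else is an arithmetic consequence of the algorithm's stopping criterion.
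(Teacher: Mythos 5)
Your proposal is correct and follows essentially the same route as the paper: invoke \cref{cor:final-tau-bound} to get $\taufinal\le E_d+1$, use the fact that the loop exited with $\CleanTest$ true so every selected action satisfies $\left|\ip{\ld_{\taufinal tjA_{\taufinal tj}},\concat{1\thetafinal}}\right|\le\delta/(4H)$, sum to bound the correction term by $\delta/4$, and plug into \cref{lem:value-accurate}. Your explicit bookkeeping of why the union bound gives $3\zeta$ rather than $4\zeta$ (the event of \cref{lem:d-measured-accurately-in-any-dir} being shared between the two $2\zeta$ bounds) is actually spelled out more carefully than in the paper, which simply asserts the $1-3\zeta$ figure.
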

\begin{proof}
Under the high-probability event of \cref{cor:final-tau-bound}, $\taufinal\le E_d+1$. %
From the proof of \cref{lem:value-accurate}:
\begin{align*}
v_1^{\tpp}(s_0) &\ge \ip{\phi_1(s_0),\thetafinal} + \frac{1}{n_1}\sum_{t\in[n_1]}
	\sum_{j\in[H]} \ip{\ld_{\tau tjA_{\tau tj}}, \concat{1\thetafinal}} - \frac{1}{2}\delta
 \ge \ip{\phi_1(s_0),\thetafinal} - \frac{3}{4}\delta \,
\end{align*}
where we use the fact that, since $\taufinal\le E_d+1$, we exited the $\tau$ loop as {\tt CleanTest} was true in \cref{line:cleantest-break}, so for $\taufinal$, all $t\in[n_1]$ the path in \cref{line:action-choice} was chosen (otherwise we would have finished with a larger $\taufinal$). This directly bounds the inner product of interest. Taking a union bound over the underlying high-probability events, $v_1^{\tpp}(s_0)\ge \ip{\phi_1(s_0), \thetafinal} - \frac{3}{4}\delta$ holds with probability at least $1-3\zeta$.
\end{proof}
\begin{corollary}\label{cor:optimal-return}
$v_1^{\tpp}(s_0)\ge v_1(s_0;\theta^\circ) - \delta$.
\end{corollary}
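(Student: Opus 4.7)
The plan is to combine \cref{cor:accurate-return} with the optimism of the parameter selection in \cref{line:new-theta} of \cref{alg:global}. By \cref{cor:accurate-return}, there is an event $\cE_1$ of probability at least $1-3\zeta$ on which
\[
v_1^{\tpp}(s_0) \ge \ip{\phi_1(s_0), \thetafinal} - \tfrac{3}{4}\delta.
\]
So it suffices to lower bound $\ip{\phi_1(s_0), \thetafinal}$ by $\ip{\phi_1(s_0), \theta^\circ} = v_1(s_0;\theta^\circ)$, where the equality is simply the definition of $v_1(\,\cdot\,;\theta)$. This is exactly what the optimistic choice buys us.

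More precisely, intersect $\cE_1$ with the events of \cref{lem:theta-star-in-sol} (on which $\theta^\circ \in \sol(X_{\le\tau})$ for every $\tau \in [E_d+1]$) and \cref{cor:final-tau-bound} (on which $\taufinal \le E_d+1$). On this intersection, $\theta^\circ$ belongs to the solution set $\sol(X_{\le \taufinal-1})$ at the moment $\theta_{\taufinal}$ is selected in \cref{line:new-theta}, because $X$ had only been extended $\taufinal-1 \le E_d$ times before that selection. Since $\thetafinal = \theta_{\taufinal}$ is chosen as a maximizer of $\theta \mapsto \ip{\phi_1(s_0), \theta}$ over $\sol(X_{\le \taufinal - 1})$, we obtain
\[
\ip{\phi_1(s_0), \thetafinal} \;\ge\; \ip{\phi_1(s_0), \theta^\circ} \;=\; v_1(s_0; \theta^\circ).
\]

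Chaining the two inequalities gives $v_1^{\tpp}(s_0) \ge v_1(s_0;\theta^\circ) - \tfrac{3}{4}\delta \ge v_1(s_0;\theta^\circ) - \delta$ on an event of probability at least $1-5\zeta$ (by a union bound over the three underlying events). Since the choice of $s_0$ was arbitrary and $\theta^\circ$ was an arbitrary element of $\Theta^\circ$, taking the supremum over such $\theta^\circ$ (and noting that $v_{B,0}^\circ$ is this supremum) yields the soundness statement required by \cref{def:sound}.

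The only real content is the optimism bridge; everything else is bookkeeping. The subtlety I would double-check is that $\theta^\circ \in \sol(X_{\le \taufinal-1})$ rather than merely $\sol(X_{\le \taufinal})$ — this matches what the algorithm has access to when selecting $\theta_{\taufinal}$, and it is covered by \cref{lem:theta-star-in-sol} since $\taufinal - 1 \le E_d$. The low-probability failure event can be absorbed by choosing $\zeta$ a small polynomial factor of $\delta/H$ in the overall hyperparameter schedule, which is already accounted for by the setting of $\zeta$ in \cref{eq:algzetadef} when combined with the $[0,H]$ boundedness of values; alternatively, since the theorem is stated about the induced value function $v_1^{\tpp}$, one interprets the inequality as the required high-probability bound and tracks $\zeta$ into the final budget of \cref{thm:main}.
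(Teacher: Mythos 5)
Your route is the same as the paper's: combine \cref{cor:accurate-return} with the optimism of \cref{line:new-theta}, using \cref{lem:theta-star-in-sol} and \cref{cor:final-tau-bound} to ensure $\theta^\circ$ is still in the feasible set from which $\thetafinal$ is drawn, so that $\ip{\phi_1(s_0),\thetafinal}\ge\ip{\phi_1(s_0),\theta^\circ}=v_1(s_0;\theta^\circ)$. Your extra care about $\sol(X_{\le\taufinal-1})$ versus $\sol(X_{\le\taufinal})$ is fine but not needed: the paper invokes $\theta^\circ\in\sol(X_{\le\taufinal})$, and since adding constraints only shrinks the set, $\sol(X_{\le\taufinal})\subseteq\sol(X_{\le\taufinal-1})$, so either containment justifies the optimism step.

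Where you stop short is the last step, and it is a genuine gap rather than bookkeeping. The corollary is an \emph{unconditional} inequality: $v_1^{\tpp}(s_0)$ is the (deterministic) value of the induced policy, with all of the planner's internal randomness already averaged in. So ``the bound holds on an event of probability $1-5\zeta$'' is not the claimed statement, and your fallback of ``interpreting the inequality as the required high-probability bound'' is not available. The paper closes this explicitly: conditioned on the good event the return is at least $v_1(s_0;\theta^\circ)-\tfrac34\delta$, rewards are nonnegative so the bad event contributes at least $0$, hence $v_1^{\tpp}(s_0)\ge\bigl(1-3\zeta\bigr)\bigl(v_1(s_0;\theta^\circ)-\tfrac34\delta\bigr)$, and the multiplicative loss is absorbed using $v_1(s_0;\theta^\circ)\le H$ together with the choice of $\zeta$ in \cref{eq:algzetadef}. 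Two consequences for your write-up: (i) the remaining $\delta/4$ of slack is reserved precisely for this absorption, so you must not spend it by weakening $\tfrac34\delta$ to $\delta$ inside the good event as you do; and (ii) your union bound over-counts --- the failure event of \cref{cor:accurate-return} already contains those of \cref{lem:theta-star-in-sol} and \cref{cor:final-tau-bound}, so the total failure probability is $3\zeta$, not $5\zeta$; with $5\zeta H$ of leakage on the bad event the $\delta$ budget would not close.
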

\begin{proof}
Assume all high-probability events introduced so far, which hold with probability at least $1-3\zeta$.
By \cref{cor:final-tau-bound}, $\taufinal\le E_d+1$.
By \cref{lem:theta-star-in-sol}, $\theta^\circ\in\sol(X_{\le \taufinal})$.
Since $\thetafinal$ was chosen optimistically in \cref{line:new-theta}, $\ip{\phi_1(s_0), \thetafinal} \ge \ip{\phi_1(s_0), \theta^\circ}=v_1(s_0;\theta^\circ)$.
By \cref{cor:accurate-return}, $v_1^{\tpp}(s_0)\ge \ip{\phi_1(s_0), \thetafinal} - \frac{3}{4}\delta\ge v_1(s_0;\theta^\circ)-\frac{3}{4}\delta$.
Therefore, with probability at least $1-3\zeta=1-\frac{1}{4H}\delta$, $v_1^{\tpp}(s_0)\ge v_1(s_0;\theta^\circ) - \frac{3}{4}\delta$, so $v_1^{\tpp}(s_0)\ge \left(1-\frac{1}{4H}\delta\right)\left(v_1(s_0;\theta^\circ) - \frac{3}{4}\delta\right)\ge v_1(s_0;\theta^\circ) - \delta$ (using that due to bounded rewards, $v_1^{\tpp}(s_0)\le H$).

\end{proof}

\subsection{Final bound}
We can now combine all the ingredients together to get the final result.

\mainub*
\begin{proof}
Fix $\delta>0$ and $B>0$. 
By \cref{cor:optimal-return}, $v_1^{\tpp}(s_0)\ge v_1(s_0;\theta^\circ) - \delta$ for any $\theta^\circ\in\Theta^\circ$.
Denoting by $v^\circ=v_B^\circ$ the H-horizon $\phi$-compatible optimal value function of $\sM$, $v_1^{\tpp}(s_0)\ge \sup_{\theta^\circ\in\Theta^\circ} v_1(s_0;\theta^\circ) - \delta = v_1^\circ(s_0)-\delta$ by definition, proving soundness.
In each episode, $\cref{line2:avg-calc}$ in \tensorplan.{\tt GetAction} is called $H$ times, and \tensorplan.{\tt Init} is called once. The former results in $Hn_2A$ calls to the simulator. We turn our attention to the query complexity of \tensorplan.{\tt Init}.
The loop variable $\tau$ of {\tt Init} goes up to $E_d+2$ so $\taufinal\le E_d+2$.
\cref{line:avg-calc1} can therefore be called at most $(E_d+2) n_1 H A$ times, each performing $n_2$ interactions with the simulator.
\cref{line:simulate-choice} can be called at most $(E_d+2) n_1 H$ times, each performing $1$ interaction with the simulator.
\cref{line:avg-calc2} can be called at most $(E_d+2) n_1 A$ times, each performing $n_3$ interactions with the simulator.
Using that $E_d=\ordot\left(d^A A\right)$, $\lambda=\ordot\left(\left(\frac{\delta}{12\sqrt{d}H^2}\right)^A/\sqrt{A}\right)$.
Furthermore, using that
$n_1=\ordot\left(H^2\cBtheta^2A/\delta^2\right)$,
$n_2=\ordot\left(H^2\cBtheta^2dA/\delta^2\right)$,
$n_3=\ordot\left(d^A A^2 H^2/\epsilon^2+H^2\cBtheta^2dA/\delta^2\right)= \ordot\left(d^AA^2H^{4A+2}12^{2A}/\delta^{2A} +H^2\cBtheta^2dA/\delta^2\right)$,
the (worst-case per-episode) query-cost of \tensorplan (along any episode) is
\begin{align*}
\ordot\left(Hn_2A+E_d n_1 A \left(Hn_2+n_3\right)\right)
&=\ordot\left(E_d n_1 A \left(Hn_2+n_3\right)\right)
=\ordot\left(d^A A^3 H^2\cBtheta^2/\delta^2 \left(Hn_2+n_3\right)\right)\\
&=\ordot\left(d^A A^4 \cBtheta^2/\delta^2 \left(H^5\cBtheta^2d/\delta^2+ d^AAH^{4(A+1)}12^{2A}/\delta^{2A}\right)\right)\,. \tag*{\qedhere}
\end{align*}
\end{proof}

\section{Proof of \cref{thm:misspecification}}
\label{app:proof-misspecification}

\thmmisspec*
\begin{proof}
Fix $\delta>0$, $H\ge1$, $\eta=\epsilon/(12\sqrt{E_d})$ and $\lambda=\epsilon/(12\sqrt{E_d})$.
We assume that $\delta<H$ as soundness trivially holds otherwise.
Let $(\sM,\phi)$ be any featurized MDP with $1$-bounded feature maps and rewards bounded in $[0,1]$.
Let $\simulatesc$ be the $\lambda$-accurate simulation oracle for $(\sM,\phi)$.
We will shortly define a slightly modified simulation oracle $\simulatesc'$ corresponding to a featurized MDP $(\sM',\phi')$ derived from $(\sM,\phi)$. This oracle will simply use the data returned from calls to $\simulatesc$ while we will claim that it is a 
simulator for $(\sM',\phi')$
with inaccuracy not more than $\epsilon/(4\sqrt{E_d})$.

Denote by $\tpp$ the policy while \tensorplan interacts with the simulator $\simulatesc'$.
By the correspondence between the two MDPs, $\tpp$ can be interpreted as a policy of $\sM$.
We will then prove that for all states $s\in \cS$ of $\sM$, 
\begin{align*}
v^{\tpp}_1(s) \ge v^\circ_1(s)-\delta\,,
\end{align*}
where $v^{\tpp}$ is the $H$-horizon value function of \tensorplan's policy $\tpp$ in $\sM$ and $v^\circ=v^\circ_{B,\eta}$ is the $H$-horizon $\phi$-compatible optimal value function of $\sM$ (cf. Equation \eqref{eq:phi-compatible-value}).

Let $\Pi^\circ_{B,\eta}$ be the set of memoryless, deterministic (MLD) policies that are $B$-boundedly
$v$-linearly realizable with misspecification $\eta$ and features $\phi$.
Then, by definition, $v^\circ_{B,\eta}(s)=\sup_{\pi\in\Pi^\circ_{B,\eta}} v^\pi_1(s)$.
It is enough to prove that for any $\pi\in\Pi^\circ_{B,\eta}$,
\begin{align*}
v^{\tpp}_1(s) \ge v^{\pi}_1(s)-\delta\,.
\end{align*}
Fix a $\theta\in\R^d$ such that $\left|v^\pi_h(s) - \ip{\phi_h(s),\theta}\right|\le \eta$ for all $h\in [H]$ and $s\in \cS$. Such a $\theta$ exists by definition.
We now construct an alternative featurized MDP $(\sM', \phi')$
that will mimic $\sM$, but with slightly different rewards and an expanded state-space.
The main point of introducing this MDP is that the value function of $\pi$ (when ``used'' in $\sM'$) will be realizable with $\eta=0$. The function $\simulatesc'$ will be defined to act as a simulator for $(\sM',\phi')$.
Then we will use an extension \cref{thm:main} to argue that \tensorplan induces a policy that can compete with $\pi$ in $\sM'$ and hence, by the correspondence between the two MDPs, it also competes with $\pi$ in $\sM$.
The required extension of \cref{thm:main} is as follows:
\begin{claim}\label{claim:mainp}
The conclusions of \cref{thm:main} remain valid with the following two changes:
\begin{enumerate}[(i)]
\item The rewards in the MDP are allowed to belong to $[-2,2]$;
\item %
A set $\cS_1\subset \cS$ is fixed and the requirement of soundness is redefined so that the initial state chosen at the beginning of an episode must belong to $\cS_1$ while
 v-realizability (cf. \cref{def:v-realizable-policies}) of a policy $\pi$ 
is redefined 
so that instead of
$\max_{h\in [H]} \sup_{s\in \cS} \left|v^\pi_h(s) - \ip{\phi_h(s),\theta}\right|\le \eta$ we require 
$\max_{h\in [H]} \sup_{s\in \cS_h} \left|v^\pi_h(s) - \ip{\phi_h(s),\theta}\right|\le \eta$ where $\cS_h\subset \cS$ is defined as the set of states that can be reached
with positive probability in $\sM$ from some state in $\cS_1$ and action sequence of length $h-1$. 
\end{enumerate}
\end{claim}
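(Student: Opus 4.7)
The plan is to check that the proof of Theorem~\ref{thm:main} already establishes Claim~\ref{claim:mainp}, with only two kinds of bookkeeping adjustments to reflect the two declared relaxations. I would leave \tensorplan itself unchanged---in particular the optimistic choice of $\theta_\tau$, the rollout-based consistency tests, and the eluder-dimension argument---so that the whole proof pipeline of Sections~\ref{sec:concentration}--\ref{sec:value-bound} can be reused essentially verbatim. The only adjustment to the simulator is that it is allowed to return rewards clipped to $[-2,2]$ in place of $[0,1]$.

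For change (i), I would track how the enlarged reward range propagates through the constants: values now live in $[-2H,2H]$ instead of $[0,H]$, the coordinate bound $\|\tilde\Delta_l\|_\infty\le 3$ in Lemma~\ref{lem:d-measured-accurately-in-any-dir} is replaced by a slightly larger absolute constant, the $(1+2B)$ factor appearing in $n_1$ is similarly enlarged, the value bound $H$ in the telescoping argument of Lemma~\ref{lem:value-accurate} becomes $2H$, and the bound $|\ip{\Delta(\cdot,\cdot),\overline{1\theta^\circ}}|\le H$ used inside Lemma~\ref{lem:theta-star-in-sol} becomes $\le 2H$. Each such change multiplies the corresponding $n_1,n_2,n_3,\epsilon,E_d$ by an absolute constant, so the overall query-complexity bound remains polynomial in the same quantities $(dH/\delta)^A$ and $B$.

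For change (ii), I would exploit the observation that \tensorplan only ever calls $\simulatesc$ on states that are either the initial state $s_0\in\cS_1$ passed to {\tt Init} or next-states returned by earlier simulator calls. A straightforward induction on $j\in[H]$ then shows $S_{\tau t j}\in\cS_j$ for every $\tau,t,j$ arising during the run. The only place v-realizability of $\theta^\circ$ enters the proof is Lemma~\ref{lem:theta-star-in-sol}, where one needs the existence of an action $A^\circ_{\tau t j}$ making the Bellman residual $\ip{\Delta(S_{\tau t j}, A^\circ_{\tau t j}), \overline{1\theta^\circ}}$ small; this only uses approximate Bellman consistency $|v^\pi_j(s)-\ip{\phi_j(s),\theta^\circ}|\le\eta$ at $s=S_{\tau t j}$ and at its possible successors, which lie in $\cS_j$ and $\cS_{j+1}$ respectively. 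The optimistic comparison $\ip{\phi_1(s_0),\thetafinal}\ge\ip{\phi_1(s_0),\theta^\circ}$ in Corollary~\ref{cor:optimal-return} also only uses $s_0\in\cS_1$, so the relaxed definition of $v^\circ_{B,\eta}$---the supremum over policies realizable merely on reachable states---is precisely the object \tensorplan competes with.

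The main (and essentially only) subtlety will be to verify that \emph{no} other step of the proof silently uses realizability or Bellman consistency at an unreachable state. The delicate point is in Lemma~\ref{lem:theta-star-in-sol}: I would have to confirm that the action $A^\circ_{\tau t j}$ certifying Bellman consistency at $S_{\tau t j}$ exists because $\theta^\circ$ v-realizes some MLD policy $\pi$ at $S_{\tau t j}$ and its possible successors, not because $\pi$ is globally defined on $\cS$. Once this is checked, the concentration lemmas, the eluder-dimension bound, and the telescoping value argument all refer only to features and transitions at states that the algorithm visits, hence only to $\bigcup_{h\in [H+1]}\cS_h$, so both relaxations compose cleanly to yield Claim~\ref{claim:mainp}.
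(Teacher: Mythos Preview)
Your proposal is correct and follows essentially the same approach as the paper's proof: for (i) you argue that enlarging the reward range only inflates the Hoeffding-based constants in $n_1,n_2,n_3$ (and hence the query cost) by absolute factors, and for (ii) you observe that \tensorplan only queries states reachable from $s_0\in\cS_1$, so the one place v-realizability is invoked---Lemma~\ref{lem:theta-star-in-sol}---only needs it on $\cS_h$. Your version is more explicit about which bounds change and about the induction showing $S_{\tau t j}\in\cS_j$, but the substance matches the paper.
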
 
\begin{proof}
For \emph{(i)} note that shifting the rewards does not impact the proof, while the range of rewards scales the query cost quadratically (this comes from the use of Hoeffding's inequality, where ranges of temporal difference errors appear, which scale linearly with the range of rewards).
For \emph{(ii)} we only need to check that if $\theta^\circ$ is a parameter vector of a policy with the modified definition, this parameter vector will not be eliminated by \tensorplan. 
A quick look at the proof of \cref{lem:theta-star-in-sol} confirms that this is the case.
Indeed, \tensorplan constructs data for checking consistency at stage $h$ only with states that it reaches through $h-1$, or $h$ actions from the initial state it is given. Therefore the states that appear with $\phi_h$ always belong to $\cS_h$. As such, \cref{lem:theta-star-in-sol}  continues to hold true, and the result follows.
\end{proof}

Let us now return to the definition of $\sM'=(\cS',\Sigma', [A],Q')$ and $\phi'$.
We let the states of $\sM'$ be $\cS'=\cS \times [H]\cup \{\bot\}$, that is, the state space of $\sM'$ contains $H$ copies of each state, and a final absorbing state $\bot$. The intention is that only states of the form $(s,h+1)$ are accessible from states of the form $(s,h)$.
We let $\Sigma'$ to be the smallest $\sigma$ algebra under which $\{\bot\}$ and all the sets of the form $S\times \{h\}$ are measurable where $S\in \Sigma$ and $h\in [H]$.
We let $\phi'_h((s,\cdot))=\phi_h(s)$ and $\phi'_h(\bot)=\bm{0}$, a $d$-dimensional vector of all zeros.

The transition kernel $Q'$ in $\sM'$ will follow that in $\sM$, with the appropriate modification to create the promised ``levelled'' structure, while the rewards are modified to ``cancel out the misspecification'' of policy $\pi$.
That is, for $h<H$,
 from state $(s,h)\in\cS'$ taking 
 action $a\in \cA$,
 kernel $Q'$ gives $(R+z(s,h),(S',h+1))$ where $(R,S') \sim Q_{sa}$ and
\[
z(s,h)=\E_{a'\sim\pi^{(h)}(s)}\left[ \ip{\phi_{h}(s)-P_{sa'}\phi_{h+1}, \theta}-r_{sa'} \right]\,.
\]
From state $(s,H)\in\cS'$ or $\bot$, any action leads deterministically to $\bot$ while incurring zero reward.

Notice that any $(s',h)\in\cS'$ can only be reached after exactly $h$ steps when starting from some other state $(s,1)$, $s \in \cS$. 
Furthermore, denoting by $r'$ the immediate rewards in $\sM'$, we have
$r'_{(s,h),a}=r_{sa}+z(s,h)$. 
Note that $|z(s,h)|\le 2\eta$, since $\left|v^\pi_h(s) - \ip{\phi_h(s),\theta}\right|\le \eta$ for all $h\in [H]$ and $s\in \cS$,
and $\E_{a'\sim\pi_h(s)}\left[ v^\pi_h(s)-P_{sa'}v^\pi_{h+1}-r_{sa'} \right]=0$ by the Bellman equation.
Hence, the rewards in $\sM'$ are supported on $[-2\eta,1+2\eta]\subset [-2,2]$ (as $\eta< 1/2$). 

For any $(s,h)\in\cS'$, let $\bar{v}'_{h}((s,h))=\ip{\phi'_{h}((s,h)), \theta}=\ip{\phi_{h}(s), \theta}$.
We claim that $\bar{v}'_{h}$ satisfies the Bellman equation of $\pi$ 
when policy $\pi$ in $\sM'$ is taken as a policy of $\sM'$ with the understanding that in state $(s,h)$ and stage $h$, following $\pi$ means using $\pi_h(s)$, while in stage $h'\ne h$, an arbitrary action can be taken. 
Indeed,  for any $(s,h)\in \cS'$ we have
\begin{align*}
 \bar{v}'_{h}((s,h)) = \ip{\phi_{h}(s),\theta} &= E_{a\sim\pi^{(h)}(s)}\left[r_{sa}+ \ip{\phi_{h}(s)-P_{sa}\phi_{h+1}, \theta}-r_{sa}  +\ip{P_{sa}\phi_{h+1},\theta}\right] \\
 &=E_{a\sim\pi^{(h)}(s)}\left[r_{sa}+ E_{a'\sim\pi^{(h)}(s)}\left[\ip{\phi_{h}(s)-P_{sa'}\phi_{h+1}, \theta}-r_{sa'}\right]  +\ip{P_{sa}\phi_{h+1},\theta}\right] \\
 &=E_{a\sim\pi^{(h)}(s)}\left[r'_{(s,h),a}+\ip{P_{sa}\phi_{h+1},\theta}\right] \\
 &= E_{a\sim\pi^{(h)}(s)}\left[r'_{(s,h),a}+P'_{(s,h),a}\bar{v}'_{h+1}\right] \\
 &= r'_\pi((s,h))+P'_\pi((s,h))\bar{v}'_{h+1}\,,
\end{align*}
where $P'$ is the transition kernel in $\sM'$ and $P'_\pi$ ($r'_\pi$)
is the corresponding kernel (respectively, reward function) induced by $\pi$.
Since $v'^\pi$ also satisfies this equation and $\bar{v}'_{H+1}=v'^\pi_{H+1}=\boldsymbol{0}$, it follows that for any $(s,h)\in \cS'$,
$v'^\pi_{h}((s,h))=\bar{v}'_{h}((s,h))=\ip{\phi'_{h}((s,h)), \theta}$.
Now, define $\cS_1' = \cS \times \{1 \}$. Then, $\cS_h'$, the set of states reachable in $\sM'$ with positive probability from $\cS'_1$ with an action sequence of length $h-1$, is easily seen to be a subset of $\cS \times \{ h \}$. Therefore, policy $\pi$ is v-realizable with $\eta'=0$ in the sense of the definition of v-realizability given in Part~\emph{(ii)} of \cref{claim:mainp}.

For state and action $s\in\cS, a\in[A]$,
recall that $\simulatesc(s,h,a)$ is implemented by a $\lambda$-accurate simulator for $(\sM, \phi)$, and that
the state transitions of $\sM$ and $\sM'$ are the same apart from that in the latter the stage counter is incremented in each transition. 
Hence, we define  $\simulatesc'$ as follows: $\simulatesc'((s,h),h',a)$ for $(s,h)\in\cS'$ calls $(R,S',\phi_{h'+1}(S'))\gets \simulatesc(s,h',a)$ and returns $(R, (S',h+1), \phi_{h'+1}(S'))$ for $h<H$ and $(R,\bot,\bm{0})$ otherwise. We also let $\simulatesc'(\bot,\cdot,\cdot)$ deterministically returns $(0,\bot,\bm{0})$.

Let $\pi'$ be a policy of $\sM'$ that is induced by a planner interacting with $\sM'$ using $\simulatesc'$ where the episode starts in $\sM'$ are restricted to $\cS_1'$. Then, on the one hand, $\pi'$ can be seen as a policy in $\sM$: For a history in $\sM$, one just needs to add the respective stage counters to the states in the history and then use $\pi'$ to return an action. 

Now note that the reward distribution of $\sM'$ is shifted by up to $2\eta$ compared to the reward distribution of $\sM$.
The distribution of the simulator's rewards $\left[R_{sa}+\Lambda_{sa}\right]_0^1$ are shifted by up to $\Lambda_{sa}\le\lambda$ compared to the reward distribution of $\sM$, so by the triangle inequality it is shifted by up to $2\eta+\lambda$ compared to the reward distribution of $\sM'$.
Since $2\eta+\lambda=\epsilon/(4\sqrt{E_d})$, using the reward of the simulator call $\simulatesc(s,h',a)$ as the output of $\simulatesc'((s,h),h',a)$ ensures $\simulatesc'$ is a simulator for $(\sM',\phi')$ with inaccuracy $\epsilon/(4\sqrt{E_d})$.

Therefore, applying \cref{claim:mainp} with $\eta'=0$, $\lambda'=\epsilon/(4\sqrt{E_d})$, and $\delta'=0.98\delta$, \tensorplan is $(\delta',\cBtheta)$-sound for $\sM'$ and initial states from $\cS'$ when run with the simulator $\simulatesc'$, with worst-case per-episode query-cost $\poly\left(\left(dH/\delta\right)^A,B\right)$.
Thus, for all $(s,1)\in\cS'$ (ie. all $s\in\cS$), $v_1'^{\tpp}((s,1))\ge v'^\circ_1((s,1))-0.98\delta$, where $v'^\circ=v'^\circ_{B,0}$ is the $H$-horizon $\phi$-compatible optimal value function of $\sM'$.
As $\pi$ is $v$-linearly realizable in MDP $\sM'$ with no misspecification, $v'^\circ_1((s,1))\ge v'^\pi_1((s,1))$, so $v_1'^{\tpp}((s,1))\ge v'^\pi_1((s,1))-0.98\delta$.
As the state transition distributions of $\sM$ and $\sM'$ are the same except for the stage counter incrementation in $\sM'$, the distribution of any policy $\pi$ in $\sM$ producing an episode $(S_1, A_1,S_2,A_2\dots,S_H,A_H)$ is the same as the distribution of $\pi$ in $\sM'$ producing the episode $((S_1,1), A_1,(S_2,2),A_2,\dots,(S_H,H),A_H)$.
Furthermore, the rewards of $\sM'$ are shifted by up to $2\eta$. Therefore, the $H$-horizon value functions $v_1^{\pi'}(s)$ and $v_1'^{\pi'}((s,1))$ for any $\pi'$ differ by at most $2H\eta$, and thus by treating $\tpp$ as a policy of both $\sM$ and $\sM'$, we have
\[
v_1^{\tpp}(s)\ge v'^\pi_1((s,1))-0.98\delta-2H\eta \ge v^\pi_1(s)-0.98\delta-4H\eta\ge v^\pi_1(s)-\delta\,,
\]
as $4H\eta=\frac{H\epsilon}{3\sqrt{E_d}}\le \frac{H\frac{\delta}{12H^2}/(1+0.5)}{3\sqrt{E_d}}\le \frac{\delta}{18H}/3\le 0.02\delta$.
\end{proof}

We note in passing that the result as stated could be (slightly) strengthened and simplified: 
Since $\simulatesc'$ generates the same data (with some redundancy) as $\simulatesc$,
using \tensorplan on $(\sM', \phi')$ via $\simulatesc'$ 
produces the same policy in $\sM$ as using it directly 
on $(\sM,\phi)$ via $\simulatesc$.
Thus, $\simulatesc'$ is only needed for the proof; the conclusion of the result applies when \tensorplan   directly uses $\simulatesc$ with a near-realizable featurized MDP.

By reiterating the arguments of 
\cref{claim:mainp} in the context of \cref{thm:misspecification}, we get the following claim, which will be needed in the next section:
\begin{claim}\label{claim:mainp2}
The conclusions of \cref{thm:misspecification} remain valid with the following two changes:
\begin{enumerate}[(i)]
\item The rewards in the MDP are allowed to belong to $[-2,2]$;
\item A set $\cS_1\subset \cS$ is fixed and the requirement of soundness is redefined so that the initial state chosen at the beginning of an episode must belong to $\cS_1$ while
 v-realizability (cf. \cref{def:v-realizable-policies}) of a policy $\pi$ 
is redefined 
so that instead of
$\max_{h\in [H]} \sup_{s\in \cS} \left|v^\pi_h(s) - \ip{\phi_h(s),\theta}\right|\le \eta$ we require 
$\max_{h\in [H]} \sup_{s\in \cS_h} \left|v^\pi_h(s) - \ip{\phi_h(s),\theta}\right|\le \eta$ where $\cS_h\subset \cS$ is defined as the set of states that can be reached
with positive probability in $\sM$ from some state in $\cS_1$ and action sequence of length $h-1$. 
\end{enumerate}
\end{claim}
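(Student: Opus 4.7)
The plan is to obtain \cref{claim:mainp2} by composing the reduction used in the proof of \cref{thm:misspecification} with the two modifications of \cref{claim:mainp} (itself an extension of \cref{thm:main}). Recall that the proof of \cref{thm:misspecification} already proceeds by building an auxiliary featurized MDP $(\sM',\phi')$ whose state space is $\cS \times [H] \cup \{\bot\}$ and whose rewards are shifted by a stage-dependent amount $z(s,h)$ chosen to cancel out the misspecification of a fixed target policy $\pi$, and then invokes \cref{claim:mainp} on $(\sM',\phi')$ (with $\eta'=0$ and $\lambda' = \varepsilon/(4\sqrt{E_d})$). Since \cref{claim:mainp2} is exactly the analogous pair of extensions of \cref{thm:misspecification}, my goal is to show each extension passes through the same reduction unchanged.

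For part~(i), I would note that the reduction produces an MDP whose rewards lie in $[r_{\min}-2\eta,\,r_{\max}+2\eta]$ whenever the original rewards lie in $[r_{\min},r_{\max}]$, so starting with rewards in $[-2,2]$ yields $\sM'$ with rewards in $[-3,3]$ (using $\eta < 1/2$). I would then invoke part~(i) of \cref{claim:mainp} on $\sM'$: the only place reward range enters the proof of \cref{thm:main} is through Hoeffding-type concentration in \cref{lem:d-measured-accurately-in-any-dir}, \cref{lem:d-measured-accurately-in-thetastar}, and \cref{lem:value-accurate}, where the bounds on $\|\tilde\Delta\|_2$ and on the rollout-return summand scale linearly with reward range; this enlarges $n_1,n_2,n_3$ by a constant factor only, preserving the $\poly((dH/\delta)^A,B)$ bound.

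For part~(ii), I would fix the initial-state set $\cS_1\subset\cS$ and define $\cS_1' = \cS_1\times\{1\}\subset\cS'$. Since the dynamics of $\sM'$ increment the stage deterministically, the set $\cS_h'$ of $\sM'$-states reachable with positive probability from $\cS_1'$ via $(h-1)$ actions is contained in $\cS_h\times\{h\}$, where $\cS_h$ is the reachable set in $\sM$. Under the weakened hypothesis $\max_h\sup_{s\in\cS_h}|v_h^\pi(s)-\ip{\phi_h(s),\theta}|\le\eta$, the shift $z(s,h)$ is well-defined and bounded by $2\eta$ for every $(s,h)$ with $s\in\cS_h$, and the telescoping computation in the proof of \cref{thm:misspecification} that establishes $v'^\pi_h((s,h)) = \ip{\phi'_h((s,h)),\theta}$ only uses the misspecification identity at states traversed under $\pi$ in $\sM'$ when started from $\cS_1'$---which are exactly in $\cS_h \times \{h\}$. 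Hence $\pi$ (viewed as a policy of $\sM'$) is $v$-linearly realizable with $\eta'=0$ in the sense of part~(ii) of \cref{claim:mainp} with the set $\cS_1'$, and that claim yields the soundness conclusion of \cref{thm:misspecification} on all $s\in\cS_1$ after translating $v'^\pi_1((s,1))$ back to $v^\pi_1(s)$ via the same $2H\eta$ shift argument already used in the proof of \cref{thm:misspecification}.

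The main obstacle is a careful bookkeeping step rather than any new idea: verifying that the shift $z(s,h)$ depends only on $v^\pi_h(s)-\ip{\phi_h(s),\theta}$ and on $v^\pi_{h+1}(s')-\ip{\phi_{h+1}(s'),\theta}$ for $s'$ in the support of $P_{sa}$ when $s\in\cS_h$, so that all transitions $\sM'$ actually executes from $\cS_1'$ stay inside $\bigcup_h \cS_h\times\{h\}$ and the weakened hypothesis suffices to make the values realizable along every trajectory the algorithm and the Bellman calculation can see. Everything else---the query-complexity bound, the $4H\eta$ slack, and the choice $\delta'=0.98\delta$---carries over verbatim from \cref{thm:misspecification} after substituting \cref{claim:mainp} for \cref{thm:main}.
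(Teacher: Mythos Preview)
Your proposal is correct and matches the paper's intent. The paper itself does not spell out a proof of \cref{claim:mainp2}; it simply states ``By reiterating the arguments of \cref{claim:mainp} in the context of \cref{thm:misspecification}, we get the following claim.'' Your write-up is precisely that reiteration: run the reduction of \cref{thm:misspecification} (build $\sM'$ with stage-indexed states and rewards shifted by $z(s,h)$), then invoke \cref{claim:mainp} in place of \cref{thm:main}, checking that (i) the enlarged reward range only inflates the Hoeffding sample sizes by a constant, and (ii) the bound $|z(s,h)|\le 2\eta$ and the realizability calculation for $v'^\pi$ only require the misspecification hypothesis at states in $\cS_h$, which is exactly what the weakened assumption provides since $\cS_h'\subseteq\cS_h\times\{h\}$. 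The bookkeeping obstacle you flag (that both $z(s,h)$'s bound and the simulator-accuracy translation use the misspecification inequality only along trajectories reachable from $\cS_1'$) is the one nontrivial verification, and you have it right.
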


\section{Proof of \cref{thm:discounted}}
\label{app:proof-discounted}

\thmdiscounted*
\begin{proof}
Fix a suboptimality target $\delta>0$.
We assume that $\delta<H$ as soundness trivially holds otherwise.
Fix $\eta=\epsilon/(24\sqrt{E_d})$ and $\lambda=\epsilon/(12\sqrt{E_d})$; proving soundness and the query-cost bound for these values implies the same results for smaller $\eta$ or $\lambda$.
Let $(\sM,\phi)$ be a featurized MDP in the discounted setting with 1-bounded feature maps and rewards bounded in $[0,1]$.
Take a $\lambda$-accurate simulation oracle $\simulatesc$ for $(\sM,\phi)$.
Let
\[
\effectivehorizon=\ceil{\frac{\log\left((1-\gamma)\eta\right)/\log\gamma}{1-\gamma}}\,.
\]
In the remainder of the proof we shorten $\effectivehorizon$ and will just use $\effectivehorizonshort$ (i.e., in what follows $\effectivehorizonshort=\effectivehorizon$).
We now
construct a featurized, fixed-horizon MDP $(\sM',\phi^{\gamma,\delta})$ with horizon $\effectivehorizonshort$.
Let the states of $\sM'$ be $\cS'=\cS \times [H]\cup \{\bot\}$, that is, the state space contains $\effectivehorizonshort$ copies of each state, and an additional state $\bot$, which will play the role of a final, absorbing state.
The $\sigma$ algebra for $\cS'$ is constructed as in the proof of \cref{thm:misspecification} (we omit the definition). 
The action set of $\sM'$ remains $[A]$.
The kernel $Q'$ is inherited from $\sM$, again, with the appropriate modification to create the promised ``levelled'' structure, while the rewards are modified to accommodate discounting:
That is, for $h<\effectivehorizonshort$,
 from state $(s,h)\in\cS'$ taking 
 action $a\in \cA$,
 kernel $Q'$ gives $(\gamma^{h-1} R,(S',h+1))$ where $(R,S') \sim Q_{sa}$.
From state $(s,\effectivehorizonshort)\in\cS'$ or $\bot$, any action leads deterministically to $\bot$ while incurring zero reward.
In words, states with associated stage $h<\effectivehorizonshort$ lead to respective states with associated stage $h+1$, and the episode is terminated after $\effectivehorizonshort$ steps by transitioning to the absorbing state $\bot$.
By letting $r'$ denote the immediate expected rewards in $\sM'$,
 for state $(s,h)\in\cS'$ and action $a$  we have $r'_{(s,h),a}=\gamma^{h-1}r_{sa}$. 

Let $\phi^{\gamma,\delta}_h((s,\cdot))=\gamma^{h-1}\phi(s)$ and $\phi^{\gamma,\delta}_h(\bot)=\bm{0}$, a $d$-dimensional vector of all zeros.
We define 
$\simulatesc^{\gamma,\delta}$ as follows:
$\simulatesc^{\gamma,\delta}$ is a simulation oracle 
for $(\sM',\phi^{\gamma,\delta})$ so that
 for $(s,h)\in\cS'$
 with $h<\effectivehorizonshort$,
 $h'\in [\effectivehorizonshort]$ and $a\in [A]$,
$\simulatesc^{\gamma,\delta}((s,h),h',a)$ first gets 
$(R,S,\phi(S))\gets \simulatesc(s,h',a)$ to return 
$\left(\gamma^{h-1}R,(S,h+1),\phi^{\gamma,\delta}_{h'+1}((S,h+1))\right)$,
while it returns $\left(\gamma^{\effectivehorizonshort-1}R,\bot,\bm{0}\right)$ when $h=\effectivehorizonshort$.
Finally,
$\simulatesc^{\gamma,\delta}(\bot,\cdot,\cdot)$ deterministically returns $(0,\bot,\bm{0})$.
As $\gamma <1$, the inaccuracy of $\simulatesc^{\gamma,\delta}$ is at most the inaccuracy of $\simulatesc$, which is at most $\lambda$, by assumption.

Next, we prove that the value function of the discounted MDP $\sM$ is close to the corresponding values of its $\effectivehorizonshort$-horizon counterpart $\sM'$. 
For this, we first need to agree on a way of transporting policy between $\sM$ and $\sM'$.
This is done as follows: Let $\alpha$ be a function that maps histories in $\sM$ to histories in $\sM'$ by adding stage counters to them. Let $\alpha^{-1}$ be the ``inverse'', which simply drops stage indices from histories of $\sM'$. For any $h$ history of $\sM$, $\alpha^{-1}(\alpha(h))=h$,
while $\alpha(\alpha^{-1}(h'))=h'$ holds for all histories $h'$ of $\sM'$ whose start state is from
 $\cS_1' = \cS \times \{ 1\}$ and where the states in the history do not include $\bot$.
If $\pi'$ is any (possibly memoryful) policy of $\sM'$, following $\pi'$ in $\sM$ means that given some history $h$ of $\sM$, the action $A\sim \pi'(\cdot|\alpha(h))$ should be taken.
Conversely, 
using a policy $\pi$ of $\sM$ in $\sM'$ means that given a history $h'$, $A \sim \pi(\cdot|\alpha^{-1}(h'))$ should be taken. This way, we can view a policy of either $\sM$ or $\sM'$ as a policy of the other MDP.

Now take any policy $\pi$ of $\sM$ and take any $(s,h)\in\cS'$. As $\pi$ is also a policy of $\sM'$, we can talk about its value function in $\sM'$, which we denote by $v'^\pi$. 
By definition, $v'^\pi_h( (s,h_0) ) = \E'_{\pi,(s,h_0)}[ \sum_{h'=1}^{\effectivehorizonshort-h+1} r'_{(S_{h'},h_0+h'-1),A_{h'}}]$, where $\E'_{\pi,s'}$ denotes the expectation operator underlying the distribution $\bbP'_{\pi,s'}$ over state-action trajectories
induced by the interconnection of $\pi$ and $\sM'$ given the initial state $s'\in \cS'$.
Similarly, we will use $\E_{\pi,s}$ to denote this operator when the MDP is $\sM$ and the initial state is $s\in \cS$, and we let $\bbP_{\pi,s}$ denote the underlying distribution.
With this note that 
\begin{align}
\bbP'_{\pi,(s,h)}(U\times V) = \bbP_{\pi,s}(\alpha(U\times V))
\label{eq:meid}
\end{align}
holds for any measurable subset $U$ of $(\cS\times [H]\times [A])^{H-h+1}$ and where $V = (\cS\times [H]\times [A])^{\mathbb{N}_+}$ is the set of all histories.
We claim that the following holds:
\begin{align}\label{ineq:discounted-values-close}
\left|v'^\pi_h\left((s,h)\right) -\gamma^{h-1}v^\pi(s)\right| \le \eta\,.
\end{align}
We calculate
\begin{align*}
\MoveEqLeft
\left|v'^\pi_h\left((s,h)\right) -\gamma^{h-1}v^\pi(s)\right| \\
=&\,\,
\left| 
\E'_{\pi,(s,h)}\left[\sum_{h'=1}^{\effectivehorizonshort-h+1} r'_{(S_{h'},h+h'-1),A_{h'}}\right] 
  - \gamma^{h-1}\E_{\pi,s}\left[\sum_{h'=1}^\infty \gamma^{h'-1} r_{S_{h'},A_{h'}}\right]\right|\\
=&\,\,
\Bigg| \gamma^{h-1} \E'_{\pi,(s,h)}\left[\sum_{h'=1}^{\effectivehorizonshort-h+1} \gamma^{h'-1}r_{S_{h'},A_{h'}}\right] -  \gamma^{h-1}
	\E_{\pi,s}\left[\sum_{h'=1}^{\effectivehorizonshort-h+1} \gamma^{h'-1} r_{S_{h'},A_{h'}}\right] \\
&\quad-\gamma^{h-1}
\E_{\pi,s}\left[\sum_{h'=\effectivehorizonshort-h+2}^{\infty} \gamma^{h'-1} r_{S_{h'},A_{h'}}\right]
\Bigg|\\
=&\,\,
\left| - \gamma^{\effectivehorizonshort}\E_{\pi,s}\left[\sum_{h'=\effectivehorizonshort-h+2}^\infty \gamma^{h'-(\effectivehorizonshort-h+2)} r_{S_{h'},A_{h'}}\right]\right|
\tag{by \cref{eq:meid}}
\\
&\le
\gamma^{\effectivehorizonshort}
\sum_{i=0}^\infty \gamma^i=\frac{\gamma^{\effectivehorizonshort}}{1-\gamma}\le \frac{\gamma^{\log((1-\gamma)\eta)/\log\gamma}}{1-\gamma}=\eta\,,
\end{align*}
where in the last line used the fact that rewards are bounded in $[0,1]$.
Now, notice that if $\pi$ was a policy of $\sM'$, \cref{eq:meid} would still hold true, and as such, 
\cref{ineq:discounted-values-close} also holds for $\pi$.

Take any policy $\pi$ that is $v$-linearly realizable in $\sM$ with misspecification $\eta$ under the feature map $\phi$. By definition, there exists a $\theta\in\R^d$ such that $\left|v^\pi(s)-\ip{\phi(s),\theta}\right|\le \eta$ for all $s\in\cS$ (ie. for all $(s,h)\in\cS'$).
By \cref{ineq:discounted-values-close} and the triangle inequality, for all $(s,h)\in\cS'$,
\begin{align*}
\left|v'^\pi_h((s,h))-\ip{\phi^{\gamma,\delta}_h((s,h)),\theta}\right|
&=
\left|v'^\pi_h((s,h))-\gamma^{h-1}\ip{\phi(s),\theta}\right| \\
&\le \left|v'^\pi_h((s,h))-\gamma^{h-1}v^\pi(s)\right|  + \gamma^{h-1}\left|v^\pi(s)-\ip{\phi(s),\theta}\right|\le 2\eta\,.
\end{align*}
Therefore any such policy $\pi$ is $v$-linearly realizable in MDP $\sM'$ with misspecification $2\eta$ under the feature map $\phi^{\gamma,\delta}$ for the respective stage $h$ for each state $(s,h)\in\cS'$. %

Therefore we can apply 
\cref{claim:mainp2}
for featurized MDP $(\sM', \phi^{\gamma,\delta})$,
initial set $\cS\times \{1\}$,
 and $\lambda$-accurate simulator $\simulatesc^{\gamma,\delta}$, with misspecification $\eta'=2\eta$ and $\delta'=0.98\delta$, which guarantees that \tensorplan is $(\delta',\cBtheta)$-sound for MDP $\sM'$ when run with this simulator and features.
Furthermore, it has a worst-case per-state query-cost $\poly\left(\left(d\effectivehorizonshort/\delta\right)^A,B\right)$.
Denote by $\tpp$ the policy induced by \tensorplan while interacting with the simulator $\simulatesc^{\gamma,\delta}$. We then have that $\tpp$ satisfies
\begin{align*}
v'^{\tpp}_1((s,1)) \ge v'^\circ_1((s,1))-0.98\delta\,,
\end{align*}
where $v'^\tpp$ is the $\effectivehorizonshort$-horizon value function of $\tpp$ in $\sM'$ and $v'^\circ=v'^\circ_{B,2\eta}$ is the  $\effectivehorizonshort$-horizon $\phi^{\gamma,\delta}$-compatible optimal value function of $\sM'$ under misspecification $2\eta$ (cf. Equation \eqref{eq:phi-compatible-value}).
Similarly, let $v^\circ=v^\circ_{B,\eta}$ be the  discounted $\phi$-compatible optimal value function of $\sM$ under misspecification $\eta$.
Let $\Pi'^\circ_{B,2\eta}$ be the set of MLD policies that are $B$-boundedly
$v$-linearly realizable in MDP $\sM'$ with misspecification $2\eta$ and features $\phi^{\gamma,\delta}$,
and let $\Pi^\circ_{B,\eta}$ be the set of MLD policies that are $B$-boundedly
$v$-linearly realizable in $\sM$ with misspecification $\eta$ and features $\phi$.
Then, by definition, $v'^\circ_{B,2\eta}(s)=\sup_{\pi\in\Pi'^\circ_{B,2\eta}} v'^\pi_1((s,1))$
and $v^\circ_{B,\eta}(s)=\sup_{\pi\in\pi'^\circ_{B,\eta}} v^\pi(s)$.

As we have seen, $\pi\in \Pi^\circ_{B,\eta}$ implies $\pi\in \Pi'^\circ_{B,2\eta}$, in other words, $\Pi^\circ_{B,\eta}\subseteq \Pi'^\circ_{B,2\eta}$.
For any policy $\pi$ \cref{ineq:discounted-values-close} applies with any $(s,1)\in\sM'$, and therefore
\begin{align*}
v^\circ_{B,\eta}(s)&=\sup_{\pi\in\Pi^\circ_{B,\eta}} v^\pi(s)
\le
\sup_{\pi\in\Pi'^\circ_{B,2\eta}} v^\pi(s)
\le
\sup_{\pi\in\Pi'^\circ_{B,2\eta}} v'^\pi_1\left((s,1)\right)+\eta\\
&=
v'^\circ_{B,2\eta}((s,1))+\eta
\le
v'^{\tpp}_1((s,1))+0.98\delta+\eta
\le
v'^{\tpp}(s)+0.98\delta+2\eta\,,
\end{align*}
where the last inequality used again \cref{ineq:discounted-values-close} with $\tpp$.
Lastly we use that $\eta=\frac{\epsilon}{24\sqrt{E_d}}\le \frac{\frac{\delta}{12\effectivehorizonshort^2}/(1+0.5)}{24\sqrt{E_d}}\le \frac{\delta}{18\effectivehorizonshort}/24< 0.01\delta$ to obtain $v^\circ_{B,\eta}(s) \le
v'^{\tpp}_1((s,1))+\delta$, which establishes that \tensorplan's policy $\tpp$ is $(\delta,\cBtheta)$-sound for the featurized MDP $(\sM,\phi)$ in the discounted setting, with misspecification $\eta \leq \frac{\epsilon}{24\sqrt{E_d}}$ and simulator accuracy $\lambda \leq \frac{\epsilon}{12\sqrt{E_d}}$. %
\end{proof}

\if0
\section{A lower bound for planning with state-value functions}\label{app:lower}

The lower bound of \cite{weisz2020exponential} states that a planner which is given linear features realizing $q^\star$ has worst-case query complexity
\begin{align*}
\min(e^{\Omega(d)},\Omega(2^H))\,,
\end{align*}
where the variables of interest in $\Omega(\cdot)$ are $H$ and $d$. Their lower bound construction has an exponentially-large action set, with $A \approx e^{d}$. Thus, the lower bound can be interpreted as saying that any planner must have a linear dependence on the size of the action set. In the $q^\star$ setting this is not immediate (e.g. for linear bandits the dependence is only on $d$ regardless of the size of the action set). However in the setting of realizable $v^\star$ the planning problem trivially needs a linear dependence on actions. Even if the learner knew $v^\star$ without error, selecting its greedy action at state $s$ would require solving
$$
v^\star(s) = \max_{a \in [A]} r(s,a) + P(s,a)v^\star,
$$
which would require computation $\Theta(A)$. Thus, a $\exp(H,d)$ lower bound trivially holds for this setting as well. 

Lower bound sketch: We emulate the hard instances of multi-armed bandits in (cite Auer'02, Krishnamurthy'16) using an MDP with horizon $H=2$ (or any constant horizon). In each step there is only a single state, denoted as $S_1, S_2, \ldots, S_H$, and $S_h$ transitions to $S_{h+1}$ deterministically under all actions. Since there is only a single state in each time step, we can trivially satisfy linear-realizability by letting $\varphi_h \equiv 1, \forall h$. For $h\ge 2$, the rewards are always $0$. For $h=1$, each action is associated with a different reward distribution, and thus we can embed any MAB instance in this MDP. The $\Omega(A)$ query complexity follows from the existing lower bounds for MABs (cite).
\fi

\if0
\section{Alternate proof via ellipsoid argument}
\input{ub_vstar_ellipsoid}
\fi

\end{document}= 0